\documentclass[letter,english]{article}

\usepackage{geometry}
\geometry{verbose,tmargin=1in,bmargin=1in,lmargin=1in,rmargin=1in}
\usepackage[T1]{fontenc}
\usepackage[latin9]{inputenc}

\usepackage{bm}
\usepackage{amsmath}
\usepackage{amssymb} 
\usepackage[unicode=true,
 bookmarks=false,
 breaklinks=false,pdfborder={0 0 1},colorlinks=false]
 {hyperref}
\hypersetup{
 colorlinks,citecolor=blue,filecolor=blue,linkcolor=blue,urlcolor=blue}

\usepackage{multirow}
\usepackage{xcolor,colortbl}
\definecolor{Gray}{gray}{0.85}
\usepackage{enumitem}

%\setlength{\parskip}{0.05 in}
%\makeatletter
%%%%%%%%%%%%%%%%%%%%%%%%%%%%%% User specified LaTeX commands. 
\usepackage{amsthm}
\usepackage{cite}  
\usepackage{comment}
\usepackage{natbib}
\usepackage{booktabs,mathtools}

\usepackage{graphicx}
\usepackage[linesnumbered,ruled,vlined]{algorithm2e}
\usepackage{algorithmic}

\usepackage{float}
\usepackage{multirow}
\usepackage{cancel}
\usepackage{pifont}

\usepackage{subcaption}

\usepackage{dsfont}
\usepackage{color}
\definecolor{yjc}{RGB}{125,0,0}
\definecolor{jiw}{RGB}{10,148,15}
\definecolor{lxs}{RGB}{138,43,226}

\newcommand{\blue}[1]{\textcolor{blue}{#1}}

\DeclareMathOperator*{\argmax}{arg\,max}

\allowdisplaybreaks

\newcommand{\Datab}{\mathcal{D}}
\newcommand{\pib}{\mu}
\renewcommand{\hat}{\widehat}
 
\newcommand{\synset}{\cT}
\newcommand{\syn}{\iota}
\newcommand{\nsyn}{\phi}
\newcommand{\logpart}{\zeta_0}
\newcommand{\logpartm}{\zeta_1}

\newcommand{\clipavgC}{C^{\star}_{\mathsf{avg}}}

\newcommand{\fedqavg}{{\sf{FedAsynQ}}\xspace}

\newcommand{\pfedq}{{\sf FedLCB-Q}\xspace}

%\DeclareRobustCommand\onedot{\futurelet\@let@token\@onedot}
%\def\@onedot{\ifx\@let@token.\else.\null\fi\xspace}

%\def\eg{\emph{e.g}\onedot} \def\Eg{\emph{E.g}\onedot}
%\def\ie{{i.e.}}
%\def\cf{\emph{c.f}\onedot} \def\Cf{\emph{C.f}\onedot}
%\def\etc{\emph{etc}\onedot} \def\vs{\emph{vs}\onedot}
%\def\wrt{w.r.t\onedot} \def\dof{d.o.f\onedot}
%\def\etal{\emph{et al}\onedot}

%!TEX root = ./OfflineRL.tex
  % Indicator

\newcommand{\defn}{\coloneqq}

\newcommand{\one}{\mathbb{I}}

\newcommand{\mprob}{\mathbb{P}}
\newcommand{\mexp}{\mathbb{E}}

%----- calligraphic fonts -----%

\newcommand{\cA}{\mathcal{A}}

\newcommand{\cD}{\mathcal{D}}

\newcommand{\cS}{{\mathcal{S}}}
\newcommand{\cT}{{\mathcal{T}}}

\newcommand{\mymid}{\,|\,} 

%%%%

%%%%---------- extra for this paper 
%---- names of algorithms 

%---- defined useful notations 

% used in Q-advantage-base in main routine

% used in Q-advantage-base in subroutine routine

% the auxiliary term

\usepackage{scalerel,stackengine}
\stackMath
\newcommand\reallywidehat[1]{%
\savestack{\tmpbox}{\stretchto{%
  \scaleto{%
    \scalerel*[\widthof{\ensuremath{#1}}]{\kern-.6pt\bigwedge\kern-.6pt}%
    {\rule[-\textheight/2]{1ex}{\textheight}}%WIDTH-LIMITED BIG WEDGE
  }{\textheight}% 
}{0.5ex}}%
\stackon[1pt]{#1}{\tmpbox}%
}
\newcommand\reallywidecheck[1]{%
\savestack{\tmpbox}{\stretchto{%
  \scaleto{%zh
    \scalerel*[\widthof{\ensuremath{#1}}]{\kern-.6pt\bigwedge\kern-.6pt}%
    {\rule[-\textheight/2]{1ex}{\textheight}}%WIDTH-LIMITED BIG WEDGE
  }{\textheight}% 
}{0.5ex}}%
\stackon[1pt]{#1}{\scalebox{-1}{\tmpbox}}%
}

\title{Federated Offline Reinforcement Learning:\\ Collaborative Single-Policy Coverage Suffices}
 \author{
 	Jiin Woo\thanks{Department of Electrical and Computer Engineering, Carnegie Mellon University, Pittsburgh, PA 15213, USA.} \\ 
	CMU \\
	\and
	Laixi Shi\thanks{Department of Computing Mathematical Sciences, California Institute of Technology, CA 91125, USA.}\\
 	Caltech %\\
% 	\texttt{laixis@andrew.cmu.edu}
 	\and
 	Gauri Joshi\footnotemark[1] \\ 	 
	CMU
	\and
 	Yuejie Chi\footnotemark[1] \\ 	 
  	CMU
 	%\texttt{yuejiechi@cmu.edu}
 	} 
\date{February 2024}

\begin{document}

\theoremstyle{plain} \newtheorem{lemma}{\textbf{Lemma}}
\newtheorem{proposition}{\textbf{Proposition}}
\newtheorem{theorem}{\textbf{Theorem}}
\newtheorem{corollary}{\textbf{Corollary}}
\newtheorem{assumption}{Assumption}
\newtheorem{definition}{Definition}
\newtheorem{claim}{\textbf{Claim}}
\theoremstyle{remark}\newtheorem{remark}{\textbf{Remark}}

\maketitle

\begin{abstract}
Offline reinforcement learning (RL), which seeks to learn an optimal policy using offline data, has garnered significant interest due to its potential in critical applications where online data collection is infeasible or expensive. This work explores the benefit of federated learning for offline RL, aiming at collaboratively leveraging offline datasets at multiple agents. Focusing on finite-horizon episodic tabular Markov decision processes (MDPs), we design \pfedq, a variant of the popular model-free Q-learning algorithm tailored for federated offline RL. \pfedq updates local Q-functions at agents with novel learning rate schedules and aggregates them at a central server using importance averaging and a carefully designed pessimistic penalty term.
Our sample complexity analysis reveals that, with appropriately chosen parameters and synchronization schedules, \pfedq achieves linear speedup in terms of the number of agents without requiring high-quality datasets at individual agents, as long as the local datasets collectively cover the state-action space visited by the optimal policy, highlighting the power of collaboration in the federated setting. 
In fact, the sample complexity almost matches that of the single-agent counterpart, as if all the data are stored at a central location, up to polynomial factors of the horizon length.
Furthermore, \pfedq is communication-efficient, where the number of communication rounds is only linear with respect to the horizon length up to logarithmic factors.
\end{abstract}

 \smallskip
 
 \noindent\textbf{Keywords:} offline RL, federated RL, Q-learning, the principle of pessimism, sample complexity, linear speedup, collaborative coverage

\setcounter{tocdepth}{2}
\tableofcontents

% -*- TeX-master: "main.tex" -*-

\section{Introduction}
Offline RL \citep{levine2020offline}, also known as batch RL, addresses the challenge of learning a near-optimal policy using offline datasets collected a priori, without further interactions with an environment. Fueled by the cost-effectiveness of utilizing pre-collected datasets compared to real-time explorations, offline RL has received increasing attention. However, the performance of offline RL crucially depends on the quality of offline datasets due to the lack of additional interactions with the environment, where the quality is determined by how thoroughly the state-action space is explored during data collection.

Encouragingly, recent research \citep{rashidinejad2021bridging,shi2022pessimistic,xie2021policy,li2022settling} indicates that being more conservative on unseen state-action pairs, known as the principle of pessimism, enables learning of a near-optimal policy even with partial coverage of the state-action space, as long as the distribution of datasets encompasses the trajectory of the optimal policy. However, acquiring high-quality datasets that have good coverage of the optimal policy poses challenges because it requires the state-action visitation distribution induced by a behavior policy employed for data collection to be very close to the optimal policy. Alternatively, multiple datasets can be merged into one dataset to supplement insufficient coverage of one other, but this may be impractical when offline datasets are scattered and cannot be easily shared due to privacy and communication constraints.
 
 \paragraph{Federated offline RL.}
Driven by the need to harvest multiple datasets to address insufficient coverage, there is a growing interest in implementing offline RL in a federated manner without the need to share datasets \citep{zhou23fed,woo23blessing,khodadadian22fedrlspeedup}. For model-based RL, a study has proposed a federated variant of pessimistic value iteration \citep{zhou23fed}, which requires sharing of model estimates. On the other hand, for model-free RL, while \citet{woo23blessing} introduced a federated Q-learning algorithm that achieves linear speedup with collaborative coverage of agents, due to the absence of pessimism, it still carries the risk of overestimation on state-action pairs that are insufficiently covered by the agents. Indeed, it remains unknown whether the principle of pessimism can be implemented in federated offline RL to eliminate the risk of overestimation, while fully utilizing the collaborative coverage provided by agents, and without sharing datasets or model estimates.  

Our goal in this paper is to develop a federated variant of Q-learning \citep{watkins1992q} for offline RL, which allows agents to learn a near-optimal Q-function with improved sample efficiency and relaxed coverage assumption. In the single-agent case, pessimism is implemented by penalizing the value estimates by subtracting a penalty term measuring the uncertainty of the estimates \citep{yan2022efficacy,shi2022pessimistic}.
However, federated settings are communication-constrained, implying that agents only have a limited chance of synchronization and they perform multiple local updates without knowing other agents' training progress. Allowing multiple local updates leads to higher uncertainty of local Q-estimates beyond the control of the pessimism penalty, potentially impacting both sample complexity and communication efficiency. This underscores the technical challenge of incorporating pessimism while managing local updates and raises the question: 
\begin{center}
\emph{How to judiciously incorporate the principle of pessimism in federated RL without hurting its sample and communication efficiency?}
\end{center}

\subsection{Our contribution}

This work presents a federated Q-learning algorithm with pessimism for offline RL, which achieves {\em linear speedup} and {\em low communication cost}, while requiring only {\em collaborative coverage of the optimal policy}. Formally, we consider episodic finite-horizon tabular Markov decision processes (MDPs) with $S$ states, $A$ actions, and horizon length $H$. A total number of $M$ agents, each with $K$ trajectories (collected using its local behavior policy), collaborate in a federated setting with the help of a central server to learn the optimal policy. Our main contributions are summarized as below; see also Table~\ref{table:sample-comparison} for a detailed comparison.

\begin{itemize}
\item \textbf{Federated Q-learning for offline RL.}
We propose a federated offline Q-learning algorithm named \pfedq, which involves iterative local updates at agents and global aggregation at a central server with scheduled synchronizations. We introduce essential components that implement pessimism compensating for the uncertainty in both local and global Q-function updates. First, to address the uncertainty arising from independent local updates, we employ {\em learning rate rescaling} at local agents and {\em importance averaging} at server aggregation. The former restricts the drifts of local Q-estimates by rapidly decreasing the learning rates during local updates, and the latter reduces uncertainty of the aggregated Q-estimates by assigning smaller weights to rarely updated local values. Additionally, for every global aggregation, a {\em global penalty} calculated based on aggregated visitation counts is subtracted from the aggregated global Q-estimate.
These design choices play a crucial role in achieving both sample and communication efficiency while preventing the overestimation of the Q-function.

\item  \textbf{Linear speedup with collaborative single-policy coverage.} 
Our analysis of sample complexity of \pfedq (see Theorem~\ref{thm:pess-fedq-regret}) demonstrates that \pfedq finds an $\varepsilon$-optimal policy, as long as the total number of samples per agent $T = KH$ exceeds 
$$ \widetilde{O} \left (\frac{ H^7 S \clipavgC}{M \varepsilon^2}  \right), $$
where $\clipavgC$ denotes the average single-policy concentrability coefficient of all agents (see \eqref{eq:clipped-avg-concentrability-assumption} for the formal definition). This shows linear speedup in terms of number agents $M$, which is achieved with a significantly weaker data requirement at individual agents than prior art. In truth, each agent affords to have a non-expert dataset collected by a sub-optimal behavior policy, as long as all agents collectively cover the state-action pairs visited by the optimal policy, even they don't cover the entire state-action space as in \citet{woo23blessing}. The bound nearly matches the sample complexity obtained for a single-agent pessimistic Q-learning algorithm \citep{shi2022pessimistic} with a similar Hoeffding-style penalty, up to a factor of $H$, as if all the datasets are processed at a central location.

\item \textbf{Low communication cost.}
  Under appropriate choices of synchronization schedules, \pfedq requires approximately $\widetilde{O}(H)$ rounds of synchronizations to achieve the targeted accuracy (see Corollary~\ref{cor:fedq-comm}), which is almost independent with the size of the state-action space and the number of agents. The analysis suggests that frequent synchronizations are not necessary, outperforming prior art \citep{woo23blessing}.

\end{itemize}
 
 \begin{table*}[t]
\centering
\resizebox{\textwidth}{!}{
\begin{tabular}{c|c|c|c|c|c }
\toprule 
  \multirow{2}{*}{ type   }&   \multirow{2}{*}{  reference} & number of   &   \multirow{2}{*}{  coverage } &  sample & communication        \tabularnewline  
 &    &   agents &    & complexity & rounds     \tabularnewline \toprule
 \multirow{4}{*}{model-based}   &  {{\sf VI-LCB} \citep{xie2021policy}}   \vphantom{$\frac{1^{7^{7^7}}}{7^{7^{7^7}}}$} & $1$ &  single & $\frac{H^6 S C^{\star}}{\varepsilon^2} $  \vphantom{$\frac{7^{7^{7}}}{7^{7^{7^{7^{7^7}}}}}$}  & -   \tabularnewline 
\cline{2-6}   &   {{\sf PEVI-Adv} \citep{xie2021policy}}  \vphantom{$\frac{1^{7^{7^7}}}{7^{7^{7^7}}}$} & $1$  & single & $\frac{H^4 S C^{\star}}{\varepsilon^2} $  \vphantom{$\frac{7^{7^{7}}}{7^{7^{7^{7^{7^7}}}}}$}    & - \tabularnewline
\cline{2-6}   &   {{\sf VI-LCB} \citep{li2022settling}}  \vphantom{$\frac{1^{7^{7^7}}}{7^{7^{7^7}}}$} & $1$  & single & $\frac{H^4 S C^{\star}}{\varepsilon^2} $  \vphantom{$\frac{7^{7^{7}}}{7^{7^{7^{7^{7^7}}}}}$}   & -  \tabularnewline    \toprule 
\multirow{6}{*}{model-free}   &   {{\sf LCB-Q} \citep{shi2022pessimistic}}  & $1$  &  single & $\frac{H^6 S C^{\star}}{\varepsilon^2} $  \vphantom{$\frac{7^{7^{7}}}{7^{7^{7^{7^{7^7}}}}}$}  & - \tabularnewline
   \cline{2-6}  
     &   {{\sf LCB-Q-Adv} \citep{shi2022pessimistic}} \vphantom{$\frac{1^{7^{7^7}}}{7^{7^{7^7}}}$}  & $1$ &  single & $ \frac{H^4 S C^{\star}}{\varepsilon^2} $   \vphantom{$\frac{7^{7^{7}}}{7^{7^{7^{7^{7^7}}}}}$}   & - \tabularnewline
   \cline{2-6}  
     &   {\fedqavg \citep{woo23blessing}} \vphantom{$\frac{1^{7^{7^7}}}{7^{7^{7^7}}}$}  & $M$ &   collaborative & $ \frac{H^6 }{M d_{\mathsf{avg}}\varepsilon^2} $   \vphantom{$\frac{7^{7^{7}}}{7^{7^{7^{7^{7^7}}}}}$} & $\frac{HM}{d_{\mathsf{avg}}}$   \tabularnewline
     \cline{2-6}  
&   {\pfedq (Theorem~\ref{thm:pess-fedq-regret})} \vphantom{$\frac{1^{7^{7^7}}}{7^{7^{7^7}}}$}  & $M$ &  collaborative & $ \frac{H^7 S \clipavgC}{M \varepsilon^2} $ \vphantom{$\frac{7^{7^{7}}}{7^{7^{7^{7^{7^7}}}}}$}   & $H$  \tabularnewline \toprule 
\end{tabular} }
\caption{Comparison of sample complexity upper bounds of model-based and model-free algorithms for offline RL to learn an $\varepsilon$-optimal policy in finite-horizon non-stationary MDPs, where logarithmic factors and burn-in costs are hidden.  
  Here, $S$ is the size of state space, $A$ is the size of action space, $H$ is the horizon length, $M$ is the number of agents, $C^{\star}$ and $\clipavgC$ denote the single-policy concentrability and the average single-policy concentrability, respectively (cf. \eqref{equ:concentrability-assumption} and \eqref{eq:avg_occupancy_dist}), and $d_{\mathsf{avg}}$ is the minimum entry of the average stationary state-action occupancy distribution of all agents. We follow standard conversion to translate the best sample complexity in \citet{woo23blessing} to the finite-horizon setting for comparison.
} 
\label{table:sample-comparison}
\end{table*}

\subsection{Related work}
\paragraph{Offline RL.}
 Offline RL addresses the problem of learning improved policies from a logged static dataset. The main challenge of offline RL is how to reliably estimate the values of unseen or rarely visited state-action pairs. To tackle this challenge, most offline RL algorithms prevent agents from taking uncertain actions by regularizing the policy to be close to the behavior policy \citep{fujimoto2019off,siegel2020keep, fujimoto2021minimalist} or penalizing value estimates on out-of-distribution state-action pairs \citep{kumar2020conservative,liu2020provably,kostrikovoffline,wu2019behavior}, which is also known as the principle of pessimism.
Recently, the pessimistic approach has been developed and theoretically studied for various RL settings, such as model-based approaches \citep{xie2021policy,rashidinejad2021bridging,kidambi2020morel,yu2020mopo,jin2021pessimism,li2022settling,yin2021towards,kim23model}, policy-based approaches \citep{xie2021bellman,zanette2021provable}, and model-free approaches \citep{shi2022pessimistic,yan2022efficacy,uehara2023offline}.
Most of these works have focused on the single-agent case and suggested that the state-action visitation distribution  induced by the behavior policy should cover that of the optimal policy \citep{rashidinejad2021bridging,shi2022pessimistic,yan2022efficacy}, and the distribution mismatch among the two visitation distributions governs the hardness of offline RL \citep{li2022settling}. Another interesting work \citep{shi23perturb} considered offline RL from multiple perturbed data sources, requiring a centralized setting in which an agent has full access to all the datasets.

\paragraph{Federated RL.}
There has been an increasing interest in federated and distributed RL, driven by the need to address more realistic constraints, including privacy, communication efficiency, and data heterogeneity, as well as training speedup. 
Recent works have investigated federated RL from various perspectives, such as robustness to adversarial attacks \citep{wu2021byzantine,fan2021fedrlfaulttolerant}, environment or task heterogeneity \citep{yang23fednpg,jin2022fedrlhete,wang23fedtd,zhou23fed}, as well as sample and communication complexities under asynschronous sampling \citep{khodadadian22fedrlspeedup,woo23blessing} and online sampling \citep{zheng2023federated,zhang2024finite}.
For model-based RL, \citet{zhou23fed} studied a pessimistic variant of value iteration with multi-task offline datasets under the federated setting and showed the improved sample efficiency by sharing representations of common task structures.
However, for model-free RL, although \citet{woo23blessing} provided a federated Q-learning algorithm that achieves linear speedup in terms of the number of agents with relaxed coverage assumption for individual agents, it still requires agents to cover the entire state-action space uniformly due to the lack of pessimism.

\paragraph{Q-learning.} Characterizing the finite-sample complexity of single-agent Q-learning has been examined extensively under various data collection and function approximation schemes, including but not limited the synchronous setting \citep{even2003learning,beck2012error,li2021syncq,wainwright2019stochastic}, the asynchronous and offline setting \citep{li2021asyncq,li2021syncq,qu2020finite,yan2022efficacy,shi2022pessimistic}, the online setting \citep{jin2018q,bai2019provably,wang2019q}, under function approximation \citep{fan2019theoretical,chen2019performance,xu2020finite}, to mention just a few.

\paragraph{Notation.}  In this paper, we use $\Delta(\cS)$ to refer to the probability simplex over a set $\cS$, and $[K]\coloneqq \{1,\cdots,K\}$ for any positive integer $K>0$. In addition, $f(\cdot)=\widetilde{O} ( g(\cdot) )$ or $f \lesssim g $ (resp.~$f(\cdot)=\widetilde{\Omega} ( g(\cdot) )$ or $f\gtrsim g$) 
indicates that $f(\cdot)$ is order-wise not larger than (resp.~not smaller than) $g(\cdot)$ up to some logarithmic factors. The notation $f \asymp g $ signifies that both  $f \lesssim g$ and $f \gtrsim g$ simultaneously hold. 

% -*- TeX-master: "main.tex" -*-

\section{Background and problem formulation}
\label{sec:background}

\subsection{Background}

\paragraph{Basics of episodic finite-horizon MDPs.} Consider an episodic finite-horizon MDP represented by 
$$\mathcal{M}= \big(\mathcal{S},\mathcal{A},H, \{P_h\}_{h=1}^H, \{r_h\}_{h=1}^H \big),$$ 
where $\mathcal{S}$ is the state space of size $S$,  $\mathcal{A}$ is the action space of size $A$, $H$ is the horizon length, $P_h : \cS \times \cA \rightarrow \Delta (\cS) $ and $r_h: \cS \times \cA \rightarrow [0,1]$ denote the probability transition kernel and the reward function at the $h$-th time step $(1\leq h\leq H)$, respectively.
 
A policy is denoted by $\pi =\{\pi_h\}_{h=1}^H$, where $\pi_h: \mathcal{S} \rightarrow \Delta(\mathcal{A})$ specifies the probability distribution over the action space at time step $h$ in state $s$. With slight abuse of notation, we use $\pi_h(s)$ to denote the selected action when the policy $\pi_h$ is deterministic. For $h=1,\ldots, H$, the value function $V_h^{\pi}(s)$ of policy $\pi$ is defined as the expected cumulative rewards starting from state $s$ at step $h$ by following $\pi$, i.e., 
\begin{align}
	\label{eq:def_Vh}
	V^{\pi}_{h}(s) &\defn  \mathbb{E} 
	\left[  \sum_{t=h}^{H} r_{t}\big(s_{t},a_t \big) \,\Big|\, s_{h}=s \right], 
\end{align}
where the expectation is taken over the randomness of the trajectory $\{s_t, a_t, r_t\}_{t=h}^H $ induced by the policy $\pi$ as well as the MDP transitions according to $a_t \sim \pi_t(\cdot \mymid s_t)$ and $s_{t+1} \sim P_t(\cdot \mymid s_t, a_t)$. Similarly, the Q-function $Q^{\pi}_h(s ,a)$ of a policy $\pi$ at step $h$ in state-action pair $(s,a)$ is defined as
\begin{align} 
	\label{eq:def_Qh}
	Q^{\pi}_{h}(s,a) & \defn r_{h}(s,a)+ \mathbb{E} \left[  \sum_{t=h +1}^{H} r_t (s_t, a_t ) \,\Big|\, s_{h}=s, a_h  = a\right] ,
\end{align}
where the expectation is again over the randomness induced by $\pi$ and the MDP transitions. 

It is well-known \citep{puterman2014markov} that one can always find a deterministic {\em optimal} policy $\pi^{\star} =\{\pi_h^{\star}\}_{h=1}^H$, which maximizes the value function (resp.~the Q-function) {\em simultaneously} over all states (resp.~state-action pairs) among all policies. 
The resulting optimal value function $V^{\star} =\{ V_h^{\star} \}_{h=1}^H $ and optimal Q-functions $Q^{\star} =\{ Q_h^{\star} \}_{h=1}^H $ are denoted respectively by  
\begin{align*}
	V_h^{\star}(s) & \coloneqq V_h^{\pi^{\star}}(s) = \max_{\pi} V_h^{\pi}(s), \qquad
	Q_h^{\star}(s,a)  \coloneqq Q_h^{\pi^{\star}}(s,a)  = \max_{\pi} Q_h^{\pi}(s,a) 
\end{align*}
for any $(s,a,h)\in \cS\times \cA \times [H]$. Given an initial state distribution $\rho\in\Delta(\cS)$, the expected value of a given policy $\pi$ and that of the optimal policy $\pi^{\star}$ at the initial step are defined respectively by
\begin{align}\label{eq:defn-V-rho}
	V_1^{\pi}(\rho) \coloneqq \mathbb{E}_{s_1\sim \rho} \big[ V_1^\pi (s_1) \big] 
	 \qquad \text{and}  \qquad
	V_1^{\star}(\rho) \coloneqq \mathbb{E}_{s_1\sim \rho} \big[ V_1^\star (s_1) \big].
\end{align}

\paragraph{Bellman equations.} Of crucial importance are the Bellman equations that connect the value functions across different time steps \citep{bertsekas2017dynamic}. For any policy $\pi$, it follows that
\begin{align}
	\label{eq:bellman}
 Q^{\pi}_{h}(s,a)=r_{h}(s,a)+ \mathbb{E}_{s'\sim P_{h,s,a}} \big[V^{\pi}_{h+1}(s') \big]
\end{align}
for all $(s,a,h)\in \cS\times \cA\times [H]$, where $V^{\pi}_{H+1}(s) =0$ for any $s\in\cS$. Moreover, Bellman's optimality equation says that
\begin{align} \label{eq:bellman_optimality}
	Q^{\star}_{h}(s,a)=r_{h}(s,a)+ \mathbb{E}_{s'\sim P_{h,s,a}} \big[V^{\star}_{h+1}(s') \big]
\end{align}
for all $ (s,a,h)\in \cS\times \cA\times [H]$, and the optimal policy satisfies $\pi_h^{\star}(s) = \argmax_{a\in\cA} Q^{\star}_{h}(s,a)$.

\subsection{Problem formulation: federated offline RL}
	\label{sec:offline-concentrability}

In offline RL, one has access to a offline dataset containing episodes collected by following some behavior policy. Here, we formulate a federated version of the offline RL problem with $M$ agents, where each agent has access to a local offline dataset. For $1\leq m\leq M$, the offline dataset $\Datab^{m}$ at agent $m$ is composed of $K$ episodes,\footnote{For simplicity, we assume all the agents have the same number of episodes. It is straightforward to generalize to the scenario when the local offline datasets have different sizes.} each generated independently according to a behavior policy  $\pib^m = \{ \pib_{h}^m \}_{h=1}^H$, resulting in
 \[
	\Datab^m \coloneqq  \Big\{  \big( s_{k,1}^m,\, a_{k,1}^m, \, r_{k,1}^m,\, \ldots, s_{k,H}^m,\, a_{k,H}^m, \, r_{k,H}^m  \big) \Big\}_{k=1}^{K} ,
\]
where the initial state $s_{k,1}^m \sim \rho $ is drawn from some initial state distribution $\rho \in \Delta(\cS)$, $s_{k,h}^m\, a_{k,h}^m, \, r_{k,h}^m$ are the state, action and reward at step $h$ in the $k$-th episode, $a_{k,h}^m \sim  \pib_{h}^m(\cdot \mymid s_{k,h}^m)$ and $r_{k,h}^m = r_h(s_{k,h}^m\, a_{k,h}^m)$.

\paragraph{Goal.} The goal of federated offline RL is to learn an $\varepsilon$-optimal policy $\widehat{\pi} =\{ \widehat{\pi}_h \}_{h=1}^H$ satisfying
\[   
	V_1^{\star}(\rho) - V_1^{\widehat{\pi}}(\rho) 
	\leq \varepsilon 
\]
using the history dataset $\Datab = \big\{ \Datab^m \big\}_{1\leq m\leq M}$ without sharing the local offline datasets, with the help of a parameter server. Furthermore, it is greatly desirable to achieve as high accuracy as possible, in a memory- and communication-efficient manner.

\paragraph{Metric.} Obviously, the success of offline RL highly relies on the quality of the history dataset. In order to define the metric, let us first introduce the occupancy distributions $d_h^{\pi}(s)$ and $d_h^{\pi}(s,a)$ induced by policy $\pi$ at step $h$, given by
\begin{align} \label{eq:visitation_dist}
	d_h^{\pi}(s) & \defn \mathbb{P}(s_h = s \mymid s_1 \sim \rho, \pi),  \qquad
	d_h^{\pi}(s, a)  \defn \mathbb{P}(s_h = s \mymid s_1 \sim \rho, \pi) \, \pi_h(a \mymid s).
\end{align}
Recent works \citep{rashidinejad2021bridging,xie2021policy,shi2022pessimistic} have advocated the notion of {\em single-policy concentrability}, which measures the mismatch between the occupancy distributions induced by the optimal policy $\pi^{\star}$ and the behavior policy $\mu$, with the benefit that this assumes away the need for the offline dataset to cover the entire state-action space, which is often impractical. \citet{li2022settling} offered a more refined notion called {\em single-policy clipped concentrability}, defined as follows.

\begin{definition}[single-policy clipped concentrability]
\label{assumption}
The single-policy clipped concentrability coefficient $C^\star \in [1/S, \infty)$ of a behavior policy $\mu$ is defined to be the smallest quantity that satisfies 
\begin{equation}\label{equ:concentrability-assumption}
    \max_{(h, s, a) \in [H] \times \mathcal{S} \times \cA} \frac{ \min \{ d^{\pi^\star}_h(s,a), \, 1/S\}}{d^{\mu}_h(s,a)} \leq C^\star,
\end{equation}
where we adopt the convention $0/0=0$. 
\end{definition}
The single-policy clipped concentrability coefficient $C^{\star} <\infty$ is finite whenever the behavior policy covers the state-action pairs visited by the {\em optimal} policy, rather than having to cover the entire state-action space. Recall that since $\pi^{\star}$ is deterministic, $d^{\pi^\star}_h(s,a) = d_h^{\pi^{\star}}(s) \mathbb{I}(a = \pi_h^{\star}(s))$, that is, $d^{\pi^\star}_h(s,a)$ is non-zero only for the optimal action $a = \pi_h^{\star}(s)$. Compared with the unclipped counterpart introduced in \citet{rashidinejad2021bridging}, the clipping of the occupancy distribution $d^{\pi^\star}_h(s,a)$ by the threshold $1/S$ ensures that $C^{\star}$
will not be excessively large when $d_h^{\pi^{\star}}(s) $ is highly concentrated in a small number of states in state space.

In the federated setting, we further introduce a tailored notion that highlights the potential benefit of collaborative learning in the presence of multiple agents. For ease of notation, denote 
$$ d_h^m(s) = d_h^{\mu^m}(s)    \quad \mbox{and} \quad  d_h^m(s,a) = d_h^{\mu^m}(s,a) $$
as the occupancy distributions induced by the behavior policy $\mu^m$ at agent $m$. Based on these, we define the average occupancy distributions as
\begin{equation}\label{eq:avg_occupancy_dist}
d_h^{\mathsf{avg}}(s) =  \frac{1}{M} \sum_{m=1}^M d_h^{m}(s)    \quad \mbox{and} \quad   {d}^{\mathsf{avg}}_h(s,a) = \frac{1}{M} \sum_{m=1}^M d_h^{m}(s,a).
\end{equation}

\begin{definition}[average single-policy clipped concentrability]
The average single-policy concentrability coefficient $\clipavgC \in [1/S, \infty)$ of multiple behavior policies $\{\mu^m\}_{m \in [M]}$ is defined to be the smallest quantity that satisfies 
\begin{equation}\label{eq:clipped-avg-concentrability-assumption}
    \max_{(h,s,a) \in [H] \times \cS \times \cA} \frac{\min\{d_h^{\pi^{\star}}(s,a), 1/S\}}{ {d}^{\mathsf{avg}}_h(s,a)}\le \clipavgC,
\end{equation}
where we adopt the convention $0/0=0$. 
\end{definition}

An important implication of the above definition is that, as long as the agents {\em collaboratively} cover the state-action pairs visited by the optimal policy, the average single-policy concentrability coefficient $\clipavgC<\infty$ is finite. Therefore, this is much weaker than the coverage requirement in the single-agent case.
  
% -*- TeX-master: "main.tex" -*-

\section{Proposed algorithm and theoretical guarantees}

In this section, we first introduce the proposed model-free federated offline RL algorithm called \pfedq, followed by its theoretical performance guarantees.

\begin{algorithm}[t]
  \begin{algorithmic}[1] 
    \STATE \textbf{Parameters:} horizon length $H$, number of agents $M$, total number of episodes per agent $K$, synchronization schedule $\synset(K)$, target error $\delta \in (0,1)$, $\logpartm = \log\left(\frac{ SA K^2 M H}{\delta} \right)$, $c_B >0$. 
    \STATE  \textbf{Initialization:} set $Q_{0,h}^m(s,a) = 0$, $V_{0,h}^m(s) = 0$, $N_{0,h}^m(s,a) = 0$, $n_{0,h}^m(s,a) = 0$, $N_{0,h}(s,a) = 0$, $n_{0,h}(s,a) = 0$ for all $(m, s,a,h) \in [M] \times \cS \times \cA \times [H+1]$. 
    
  \For{$k=1,\cdots,K$ }{ 
 
    % \For{$m=1,\cdots,M$}{
    \tcc{Update the local Q-estimate and visitation counts at each agent}
    $(Q_{k,h}^m, n_{k,h}^m)$ = \texttt{Local-Q-learning()};
    % }
    
\If{$k \in \synset(K)$ }{

 \tcc{Agent-to-server communication}
 Agents communicate $Q_{k,h}^m$ and $n_{k,h}^m$ to the server;

    \tcc{Global pessimistic averaging in a server}
    $(Q_{k,h}, V_{k,h}, \pi_{k,h})$ = \texttt{Global-pessimistic-averaging();}\\
    
     \tcc{Server-to-agent communication}
    Server communication $Q_{k, h}$, $V_{k,h}$ and $N_{k,h}$ to agents;
    
        \tcc{Synchronize local Q-estimates}
    \For{$(m, s,a,h) \in [M] \times \cS \times \cA \times [H]$ }{
        $Q_{k, h}^m (s, a) = Q_{k, h} (s, a)$, $V_{k, h}^m(s) = V_{k, h}(s)$
    }
}
}

    \textbf{return:} $\hat{Q} = \{Q_{K,h}\}_{h \in [H]}$ and $\hat{\pi}= \{\pi_{K,h}\}_{h \in [H]}$.
  \end{algorithmic} 
  \caption{Federated pessimistic Q-learning (\pfedq) }
  \label{alg:pfedq}
\end{algorithm}

\begin{figure}[t]
\centering
\includegraphics[trim={0.1cm 0 0 0.1cm},clip, width=0.6\textwidth]{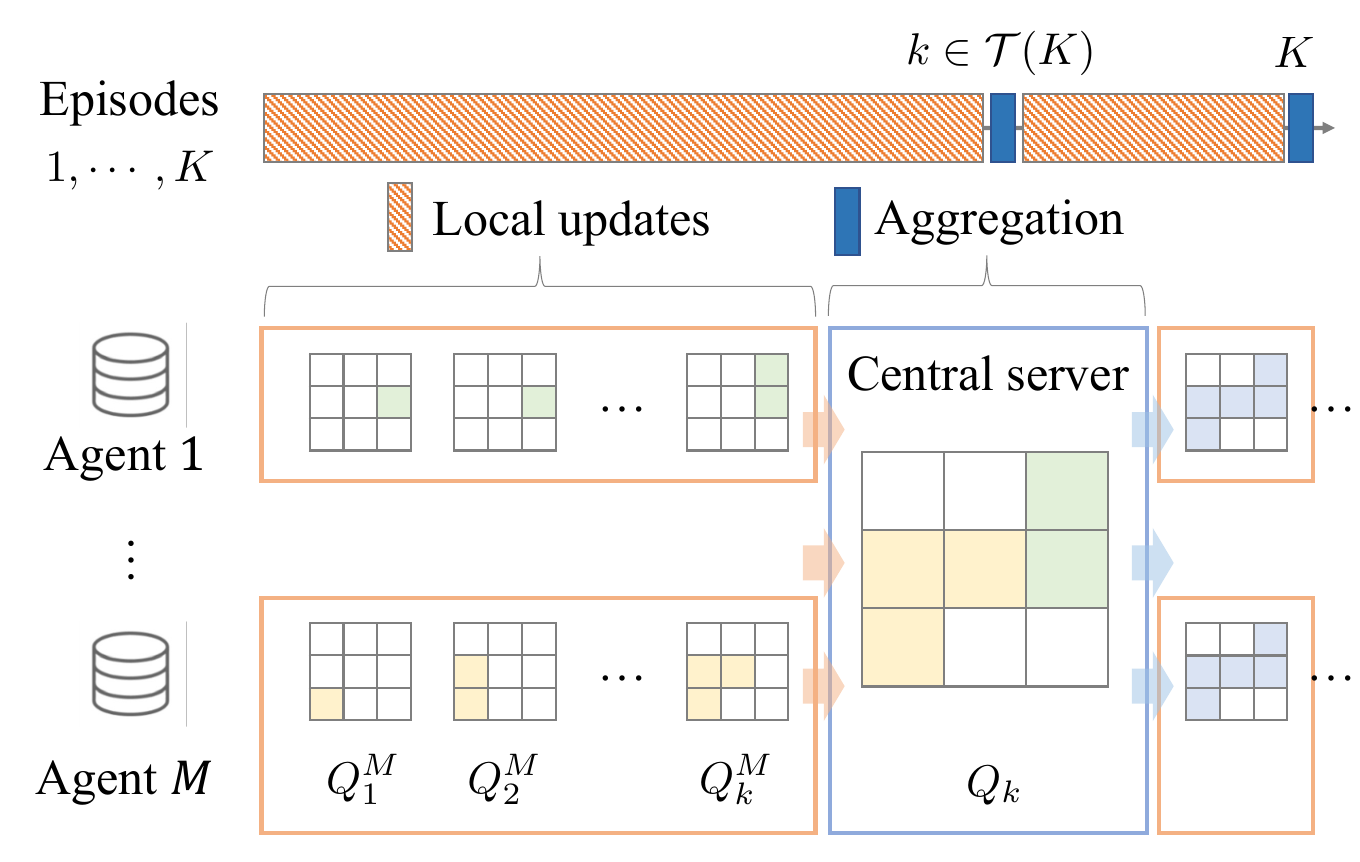}
\caption{\pfedq with $M$ agents and a central server. Each agent $m$ performs local updates on its local Q-table $Q_k^m$ for each $k$th episode in a local history dataset $\cD^m$. When synchronization is scheduled at $k \in \synset(K)$, the agents send their local Q-tables to the server and the server aggregates the Q-tables into a global Q-table and synchronizes local Q-tables.}
\label{fig:fedlcbq-alg}
\end{figure}

\subsection{Algorithm description}

We introduce a federated variant of Q-learning algorithm for offline RL, called \pfedq, that learns a near-optimal Q-function without overestimation on  unseen components of the state-action space. The complete description of \pfedq is provided in Algorithm~\ref{alg:pfedq}, with its agent-end and server-end subroutines described in Algorithm~\ref{alg:pfedq-agents} and Algorithm~\ref{alg:pfedq-server} respectively. On a high level, \pfedq performs local Q-function updates at all the agents using its own local offline dataset, and occasionally, globally aggregates the local estimates in a pessimistic fashion at a central server. To facilitate flexible communication patterns, we follow a synchronization schedule $\synset(K)$, which contains the indices of episodes where communication occurs between the agents and the server.

To begin, \pfedq initializes the local estimate  ($Q_{0, h}^m$ and $V_{0,h}^m$)  at each agent $m \in [M]$ and the global estimates ($Q_{0, h}$ and $V_{0,h}$) at the server as follows:
\begin{subequations}
\begin{align}
    Q_{0, h}^m(s, a) = 0, \qquad
    V_{0,h}^m(s,a) = 0, \qquad \mbox{for all } (s,a,h) \in \cS \times \cA \times [H+1],   \\
    Q_{0, h}(s, a) = 0, \qquad
    V_{0,h}(s,a) = 0, \qquad\mbox{for all } (s,a,h) \in \cS \times \cA \times [H+1]. 
\end{align}
\end{subequations}
Then, \pfedq proceeds the following steps for each episode $k \in [K]$. 

\begin{enumerate}
    \item \textbf{Local updates:}
    Each agent $m $ samples the $k$th trajectory $\{(s_{k,h}^m,a_{k,h}^m,r_{k,h}^m)\}_{h=1}^H$ from its local offline datasets $\cD^m$. For each step $h \in [H]$, agent $m$ updates its local Q-estimate $Q_{k, h}^m$ as follows:
    \begin{align} \label{eq:localq-update}
         Q_{k, h}^m(s, a) =
         \begin{cases}
             (1- \eta_{k,h}^m(s,a)) Q_{k-1, h}^m(s,a) + \eta_{k,h}^m(s,a) (r_{k,h}^m + V_{k-1,h+1}^m (s_{k,h+1}^m)) &~\text{if}~(s,a) = (s_{k,h}^m,a_{k,h}^m) \\
             Q_{k-1, h}^m(s,a) &~\text{otherwise}
         \end{cases},
    \end{align}
    where $\eta_{k,h}^m(s,a)$ is the learning rate, whose schedule will be specified later, and $V_{k-1,h}^m (s)$ is set as
    \begin{align} \label{eq:vlocal-frozen}
        V_{k-1,h}^m (s) = V_{\syn(k), h}^m(s) = V_{\syn(k), h}(s),  \quad \mbox{for all } (m,s,h,k) \in [M]\times\cS\times [H+1] \times [K],  
    \end{align}
    where $\syn(k)$ denotes the most recent episode where aggregation occurs before the $k$th episode, i.e., 
    $$\syn(k) \defn \max_{k'}\left\{1\leq k' < k: k' \in \synset(K)\right \} . $$

    \item \textbf{Pessimistic aggregation:}   If synchronization is scheduled at episode $k$, i.e., $k \in \synset(K)$, each agent sends its local Q-estimate to a central server for aggregation after finishing the local update for the $k$th episode. Then, the server updates the global Q-estimate $Q_{k, h}$  by averaging the local Q-estimates and subtracting a penalty as follows:
\begin{align} \label{eq:globalq-update}
  \forall (s,a) \in \cS\times\cA: \qquad Q_{k, h} (s, a) = \left (\sum_{m=1}^M \alpha_{k,h}^m(s,a) Q_{k, h}^m(s, a) \right) - B_{k,h}(s,a) ,
\end{align}
where $\alpha_{k,h}^m = [\alpha_{k,h}^m(s,a)]_{(s,a)\in \cS\times\cA} \in [0,1]^{SA}$ is an entry-wise weight matrix assigned to agent $m$ for each $h \in [H]$, 
and $B_{k,h}(s,a)$ is a penalty term (to be specified later below) that introduces the pessimism preventing the overestimation of unseen state-action pairs. Accordingly, the global value estimate is updated as
\begin{align} \label{eq:vglobal-monotone}
  \forall (s,a) \in \cS\times\cA: \qquad V_{k, h}(s) = \max \left \{ V_{\syn(k), h}(s), \, \max_{a \in \cA} Q_{k, h}(s, a) \right \}.
\end{align}
where the outer maximum ensures a monotonic update, as we explain later in the analysis. If $V_{k, h}(s) = \max_{a \in \cA} Q_{k, h}(s, a)$, the global policy is updated as $\pi_{k, h}(s) = \argmax_{a \in \cA} Q_{k, h}(s, a)$, otherwise $\pi_{k, h}(s) = \pi_{\syn(k), h}(s)$.
After aggregation, the server sends the global Q-function and value estimates to every agent, where
\begin{equation} \label{eq:sync}
\forall ( k ,m)  \in  \synset(K) \times [M]: \qquad Q_{k, h}^m =Q_{k, h}, \quad V_{k, h}^m =V_{k, h} .
\end{equation}  

\end{enumerate}

\begin{algorithm}[t]
  \begin{algorithmic}[1] 
 \STATE   \For{$m= 1, \cdots, M$ }{  
     Sample the $k$-th trajectory $\{(s_{k,h}^m,a_{k,h}^m,r_{k,h}^m,s_{k,h+1}^m)\}_{h=1}^H$ from $\cD^m$
 
    \For{$h= 1, \cdots, H$}{
    \For{$(s,a) \in \cS \times \cA$}{
      $Q_{k,h}^m(s,a) = Q_{k-1,h}^m(s,a)$, $V_{k,h}^m(s) = V_{k-1,h}^m(s)$
    }
    
    \tcp{Update the local counters and learning rates}
    {$n_{k,h}^m(s_{k,h}^m,a_{k,h}^m) = n_{k-1,h}^m(s_{k,h}^m,a_{k,h}^m)+1$ }
    
    {$\eta_{k,h}^m (s_{k,h}^m,a_{k,h}^m)= \frac{M (H+1)}{ N_{\syn(k),h}(s_{k,h}^m,a_{k,h}^m) + M (H+1) n_{k,h}^m(s_{k,h}^m,a_{k,h}^m)}$}
    
    \tcp{Update local Q-estimates}
    $Q_{k, h}^m(s_{k,h}^m,a_{k,h}^m) = \big(1- \eta_{k,h}^m (s_{k,h}^m,a_{k,h}^m) \big) Q_{k-1, h}^m(s_{k,h}^m,a_{k,h}^m) + \eta_{k,h}^m(s,a) (r_{k,h}^m + V_{k-1,h+1}^m (s_{k,h+1}^m))$
    }
    }
  \end{algorithmic} 
  \caption{\texttt{Local-Q-learning} (agents)}
  \label{alg:pfedq-agents}
\end{algorithm}

\begin{algorithm}[t]
  \begin{algorithmic}[1] 
\STATE 
    \For{$(s,a,h) \in \cS \times \cA \times [H]$ }{    
    \tcp{Update the average counter}
    $n_{k, h}(s,a) =  \sum_{m=1}^M n_{k,h}^m(s,a)$, $N_{k, h}(s,a) = N_{\syn(k),h}(s,a) + n_{k, h}(s,a)$
    
    \tcp{Compute global penalty and averaging weights}
    $B_{k,h}(s,a) = \frac{ (H+1) n_{k, h}(s,a) }{N_{k, h}(s,a)+Hn_{k, h}(s,a)}  \sqrt{\frac{c_B \logpartm^2 H^4}{ N_{k, h}(s,a)}}$ if $N_{k, h}(s,a)>0$, otherwise, $B_{k,h}(s,a)=0$
    
    \For{$m=1 \cdots M$}{
    $\alpha_{k,h}^m(s,a) = \frac{1}{M} \frac{N_{\syn(k), h}(s,a) + M(H+1) n_{k,h}^m(s,a)}{N_{k, h}(s,a) + H n_{k, h}(s,a)}$ if $n_{k,h}^m(s,a)>0$, otherwise, $\alpha_{k,h}^m(s,a) = \frac{1}{M}$
    }

    \tcp{Update global Q-estimates}
    $Q_{k, h} (s, a) = \sum_{m=1}^M \alpha_{k,h}^m(s,a) Q_{k, h}^m(s, a)  - B_{k,h}(s,a)  $ \\
    $V_{k, h}(s) = \max \left \{ V_{\syn(k), h}(s), \max_{a \in \cA} Q_{k, h}(s, a) \right \}$ \\ \label{algo:monotone-of-V}
    $\pi_{k,h}(s) = \argmax_{a \in \cA} Q_{k, h} (s,a)$ if $V_{k, h}(s) = \max_{a \in \cA} Q_{k, h}(s, a)$, otherwise, $\pi_{k,h}(s) = \pi_{\syn(k),h}(s)$ \label{algo:line-pi-k}
    }

  \end{algorithmic} 
  \caption{\texttt{Global-pessimistic-averaging} (server)}
  \label{alg:pfedq-server}
\end{algorithm}

At the end of $K$ episodes, \pfedq outputs a global Q-estimate $\hat{Q}_h (s,a) = Q_{K,h}(s,a)$ for all $(s,a,h) \in \cS\times\cA\times [H]$ and a solution policy $\hat{\pi}_h(s) = \pi_{K,h}(s)$ for all $(s,h) \in \cS\times[H]$. For simplicity, we assume that the aggregation step always occurs after the last episode $K$, i.e., $K \in \synset(K)$.

\subsection{Choices of key parameters}\label{sec:key-parameters}

The success of \pfedq relies on careful and judicious selections of key algorithmic parameters, in a data-driven manner, which we detail below. To begin, let us introduce the following useful notation, which pertains to the counters for visits of agents on each state-action pair $(s,a) \in \cS \times \cA$. For any $(m, k, h) \in [M] \times [K] \times [H]$,
\begin{itemize}
    \item $n_{k,h}^m(s,a)$: the number of episodes in the interval $(\syn(k), k]$ during which agent $m$ visits $(s,a)$ at step $h$, i.e., $n_{k,h}^m(s,a) \defn |\{\syn(k) < i \le k: (s_{i,h}^m,a_{i,h}^m)=(s,a)\}|$.
    \item $N_{k,h}^m(s,a)$: the number of episodes in the interval $[1, k]$ during which agent $m$ visits $(s,a)$ at step $h$, i.e., $N_{k,h}^m(s,a) \defn |\{1 \le i \le k: (s_{i,h}^m,a_{i,h}^m)=(s,a)\}|$.
    \item $n_{k,h}(s,a)$: the total number of episodes in the interval $(\syn(k), k]$ during which all agents visit $(s,a)$ at step $h$, i.e., $n_{k,h}(s,a) \defn \sum_{m=1}^M n_{k,h}^m(s,a) = |\{\syn(k) < i \le k: (s_{i,h}^m,a_{i,h}^m)=(s,a)\}|$.
    \item $N_{k,h}(s,a)$: the total number of episodes in the interval $[1, k]$ during which all agents visit $(s,a)$ at step $h$, i.e., $N_{k,h}(s,a) \defn \sum_{m=1}^M N_{k,h}^m(s,a) = |\{1 \le i \le k: (s_{i,h}^m,a_{i,h}^m)=(s,a)\}|$.
\end{itemize}

\paragraph{Pessimism in the federated RL.} 
In offline RL, pessimism is key to preventing the overestimation of Q-function on unseen state-action space. For a single-agent case, the pessimism is implemented by subtracting a penalty term computed based on the visiting counter of an agent for each state-action pair, which makes the estimation highly dependent on the quality of agents' datasets \citep{rashidinejad2021bridging}. For example, when an agent has non-expert data collected using a highly sub-optimal behavior policy, it is inevitable to subtract a large penalty for optimal actions that cannot be reached with the agent's behavior policy, and this leads to slow convergence or convergence to a sub-optimal policy close to the behavior policy. In the federated setting, from the perspective of a server, as the aggregated information from multiple agents increases confidence, it is natural to be less pessimistic compared to an individual agent. Based on this intuition, given some prescribed probability $\delta\in(0,1)$, we suggest a global penalty computed with the aggregated counters of agents at $k \in \synset(K)$:
\begin{align} \label{eq:def-global-penalty}
  B_{k,h}(s,a) &\defn 
    \begin{cases}
      \frac{ (H+1)n_{k,h}(s,a) }{N_{k,h}(s,a)+Hn_{k,h}(s,a)}  \sqrt{\frac{c_B \logpartm^2 H^4}{N_{k,h}(s,a)}} &~~\text{if}~~ N_{k,h}(s,a) >0 \\
        0 &~~\text{if}~~ N_{k,h}(s,a) =0
    \end{cases}  ,
\end{align}
where $\logpartm = \log\left(\frac{ SA M K^2 H}{\delta} \right)$ and $c_B$ is some positive constant.
Here, the penalty for each state-action pair decreases as long as  the agents collectively explore the state-action pair enough. This relaxes the dependency on an individual agent and prevents the estimated policy from being restricted to a local behavior policy.

\paragraph{Local update uncertainty.} To guarantee that the pessimism introduced by the global penalty is enough to prevent overestimation on rarely seen state-action pairs, the penalty should dominate the uncertainty of the Q-estimates. However, when agents independently update their own local Q-estimates without frequent communication, the global penalty, which is subtracted only at the aggregation step, may fail to cover the increasing uncertainty of the local Q-estimates during local updates.
To handle this, we propose a choice of key parameters (learning rates $\eta_{k,h}^m$ and averaging weights $\alpha_{k,h}^m$) that effectively controls the uncertainty arising from the local updates as follows.
\begin{itemize}
  \item \textbf{Importance averaging.}
    In the federated setting, agents have offline datasets with heterogeneous distributions induced by different behavior policies, leading to imbalanced uncertainty of local Q-estimates.
    To minimize the uncertainty of the averaged estimate, we propose the following entrywise weighting scheme for averaging:
    \begin{align}
      \alpha_{k,h}^m(s,a) &\defn 
    \begin{cases}
          \frac{1}{M} \frac{N_{\syn(k),h}(s,a) + (H+1) M n_{k,h}^m(s,a)}{ N_{k,h}(s,a) + H n_{k,h}(s,a)} &~~\text{if}~~ n_{k,h}(s,a)>0 \\
          \frac{1}{M} &~~\text{if}~~ n_{k,h}(s,a)=0 \\
        \end{cases}. \label{eq:alpha-def}
    \end{align}
 By assigning smaller weights to less frequently updated local Q-estimates with smaller $n_{k,h}^m(s,a)$, which has high uncertainty, the averaged Q-estimate can always maintain an uncertainty level low enough to be dominated by the global penalty, regardless of the heterogeneity in local data distributions. The idea aligns with the notion of importance averaging introduced by \citet{woo23blessing}, which favors frequently updated local Q-values. 
Nevertheless, our approach differs in that, unlike \citet{woo23blessing}, where the assigned weights are determined solely based on local counters $n_{k,h}^m$ in a myopic manner, our weights, factoring in the global counter $N_{\syn(k),h}$, limit bias towards specific agents as the training of local Q-estimates stabilizes.
The weighting scheme, mindful of the entire training progress, prevents some local values that have undergone intense updates recently from dominating the global learning of the Q-function, preserving the information accumulated through old updates.

    \item \textbf{Learning rates rescaling.}
      Local updates without synchronization increase the deviation of local Q-estimates, and this increases the variance of the global Q-estimate at aggregation. However, requiring agents to communicate frequently may be too stringent for many applications in the federated setting. 
      To address this issue, we propose a novel choice of learning rate that exhibits slower decay based on a global counter $N_{\syn(k),h}$, and faster decay during local updates according to the local counter $n_{k,h}^m$:
      \begin{align}
        \eta_{k,h}^m(s,a) &\defn \frac{M (H+1)}{ N_{\syn(k),h}(s,a) + M (H+1) n_{k,h}^m(s,a)}. \label{eq:eta-def}
      \end{align}
      The rescaling of the learning rate is crucial to obtain linear  speedup without frequent synchronizations. The gradual decay with a global counter allows more aggressive updates of the Q-estimates once collective information from all agents is aggregated, which enables convergence speedup.
On the other hand, the fast decrease in learning rates during local updates ensures that agents adaptively slow down their drifts and maintain low variance of their local Q-estimates, without overly restricting the length of local updates. We will further discuss how this effectively reduces the variance of local estimates in Section~\ref{sec:analysis-basicfacts}.
  \end{itemize}
The computation of the global penalty \eqref{eq:def-global-penalty} and importance averaging \eqref{eq:alpha-def} at a server requires local counters $n_{k,h}^m(s,a)$ from every agent, and determining the learning rates \eqref{eq:eta-def} at each agent requires access to recently aggregated global counters $N_{\syn(k),h}(s,a)$. Therefore, for \pfedq with the specified parameters choices, agents and a server additionally exchange the updated local and global counters at every aggregation step.

\subsection{Theoretical guarantees}

Given the parameters described above, we now give sample complexity guarantees on the performance of the proposed \pfedq algorithm.

\begin{theorem} \label{thm:pess-fedq-regret}
Consider $\delta \in (0,1)$ and let $\hat{\pi}$ be the solution policy of \pfedq.
If a synchronization schedule $\synset(K)$ is independent of trajectories in datasets $\cD$ and satisfies 
\begin{align} \label{eq:tau-condition}
    \tau_1 \le \sqrt{\frac{H^2 S \clipavgC K }{M}} ~~\text{and}~~ \frac{\tau_{u+1}}{\tau_{u}} \le 1+\frac{2}{H}
\end{align}
for any $u\ge1$, where $\tau_u$ is the number of episodes between the $(u-1)$-th and the $u$-th aggregations. Denoting the total number of samples per agent $T=KH$, the following holds:
    \begin{align} \label{eq:pess-fedq-regret}
      V_{1}^{\star}(\rho) - V_{1}^{\hat{\pi}}(\rho)
      \le c \left( \sqrt{\frac{H^7 S \clipavgC \logpartm^2}{MT}} + \frac{H^4 S \clipavgC \logpartm}{MT} 
    \right)
    \end{align}
    at least with probability $1-\delta$, where $\logpartm = \log\left(\frac{ SA M K^2 H}{\delta} \right)$ and $c>0$ is some universal constant.
\end{theorem}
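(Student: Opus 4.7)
The plan is to follow the standard pessimistic Q-learning analysis template (as in \citet{shi2022pessimistic}) but adapted to handle multiple agents, local drift, and the importance-weighted aggregation. The argument splits into three parts: (i) establishing pessimism of the global Q-iterate, (ii) deriving a recursive error bound for $V^\star_h - V^{\hat\pi}_h$, and (iii) summing the recursion against the optimal occupancy measure via the concentrability coefficient $\clipavgC$.

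First I would unroll the update rule. By expanding \eqref{eq:localq-update} across all episodes between successive synchronizations and then combining with the aggregation step \eqref{eq:globalq-update}, one obtains, after algebra and the key identity enabled by the learning rate \eqref{eq:eta-def} and weights \eqref{eq:alpha-def}, a clean global recursion of the form
\begin{align*}
    Q_{k,h}(s,a) - Q^\star_h(s,a) = \sum_{i \le k,\, m} w_{k,h}^{m,i}(s,a)\bigl(V^m_{\syn(i)-1,h+1}(s^m_{i,h+1}) - V^\star_{h+1}(s^m_{i,h+1})\bigr) + \xi_{k,h}(s,a) - B_{k,h}(s,a),
\end{align*}
where $\{w_{k,h}^{m,i}\}$ are convex weights on the visits, and $\xi_{k,h}$ is a martingale-difference noise term coming from $P_{h,s,a}V^\star_{h+1} - V^\star_{h+1}(s')$. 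The cancellation that makes the weights convex and produces the factor $\tfrac{(H+1)n_{k,h}}{N_{k,h}+Hn_{k,h}}$ in $B_{k,h}$ is precisely why the learning rate is rescaled by $M(H+1)$ and why $\alpha^m_{k,h}$ uses $N_{\syn(k),h}+M(H+1)n_{k,h}^m$ in its numerator. I would verify this algebraic identity carefully; it is the main bookkeeping step.

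Next I would prove the pessimism event $\{Q_{k,h}(s,a) \le Q^\star_h(s,a)\}$ holds simultaneously for all $(k,h,s,a)$ with probability $1-\delta$, by induction on $h$ from $H+1$ downward. The inductive step reduces to showing that $B_{k,h}(s,a)$ dominates $|\xi_{k,h}(s,a)|$ plus the extra drift coming from the fact that local iterates use stale $V^m_{\syn(i)-1,h+1}$ rather than the freshest $V_{i-1,h+1}$. For the martingale piece I would apply a Freedman/Bernstein inequality with variance $\mathrm{Var}_{P_{h,s,a}}(V^\star_{h+1}) \le H^2$, combined with the weighted-sum identity $\sum_i (w_{k,h}^{m,i})^2 \lesssim 1/N_{k,h}$ which is what the rescaled learning rate was engineered to give. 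For the local drift piece, the condition $\tau_{u+1}/\tau_u \le 1+2/H$ ensures that within a single frame the number of new visits $n_{k,h}^m$ is small compared to the cumulative $N_{\syn(k),h}$, so the drift is absorbed into $B_{k,h}$ (this is the hard part—the importance weights are chosen so that the drift term picks up the same $(H+1)n/(N+Hn)$ prefactor). Once pessimism is established, \eqref{eq:vglobal-monotone} immediately gives $V_{k,h}(s) \le V^\star_h(s)$ and monotone improvement of $V_{k,h}$ in $k$.

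With pessimism in hand, I would decompose $V^\star_h(s) - V^{\hat\pi}_h(s)$ by adding and subtracting $V_{K,h}$: pessimism gives $V^\star_h - V^{\hat\pi}_h \le (V^\star_h - V_{K,h}) + (V_{K,h} - V^{\hat\pi}_h)$, and $V^{\hat\pi}_h$ can be compared to $V_{K,h}$ via the Bellman equation evaluated at $\hat\pi_h(s) = \argmax_a Q_{K,h}(s,a)$. Unrolling the Bellman recursion along trajectories of $\pi^\star$ gives, for any initial state $s$,
\begin{align*}
    V^\star_1(s) - V^{\hat\pi}_1(s) \;\le\; \mathbb{E}_{\pi^\star}\!\left[\sum_{h=1}^H \bigl(Q^\star_h(s_h,a_h) - Q_{K,h}(s_h,a_h)\bigr)\right].
\end{align*}
Substituting the global recursion and taking expectation under $d^{\pi^\star}_h$, the martingale noise is again controlled by Freedman, and the penalty contribution becomes $\sum_h \mathbb{E}_{\pi^\star}[B_{K,h}(s,\pi^\star_h(s))] \lesssim \sum_{h,s,a} d^{\pi^\star}_h(s,a)\sqrt{H^4 \logpartm^2 / N_{K,h}(s,a)}$. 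Since at the final round $\mathbb{E}[N_{K,h}(s,a)] = MK\, d^{\mathsf{avg}}_h(s,a)$, I would apply Cauchy–Schwarz and invoke the clipped ratio bound $d^{\pi^\star}_h(s,a) \le \clipavgC \cdot d^{\mathsf{avg}}_h(s,a) + 1/S$ (this is exactly what \eqref{eq:clipped-avg-concentrability-assumption} yields after clipping) to obtain the leading $\sqrt{H^7 S \clipavgC \logpartm^2/(MT)}$ term with $T=KH$. The lower-order $H^4 S\clipavgC \logpartm/(MT)$ term arises from the Bernstein variance correction and the burn-in cost when $N_{K,h}$ is small.

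The main obstacle I anticipate is the drift analysis: bounding the error introduced by local iterates running against stale $V^m_{\syn(i)-1,h+1}$ while still obtaining a penalty of the claimed size. This is where the synchronization constraint $\tau_{u+1}/\tau_u \le 1+2/H$ and the paired choice of $\eta$ and $\alpha$ are indispensable, and I would treat that lemma (controlling $\sum_m \alpha^m_{k,h} (Q^m_{k,h} - Q_{k,h})$ in terms of $B_{k,h}$) as the principal technical workhorse of the proof.
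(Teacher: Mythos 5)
Your plan gets the main ingredients right---the unrolled error decomposition (matching the paper's Lemma~\ref{lemma:qerr-dcomp-base}), pessimism via the penalty dominating a Freedman-controlled martingale with $\sum_i(\omega^{m,i}_{k,h})^2\lesssim 1/N_{k,h}$, and the conversion from visitation counts to $\clipavgC$---but the third part has a genuine gap. You propose to close the argument by bounding $V_1^\star-V_1^{\hat\pi}\le \mathbb{E}_{\pi^\star}\big[\sum_h (Q_h^\star-Q_{K,h})\big]$ and then working only with the \emph{final} iterate $Q_{K,h}$ and the final counts $N_{K,h}$. This recursion does not close: the ``recursion'' term in the decomposition at step $h$ is $\sum_{i,m}\omega^m_{i,K,h}\,P_{h,s,a}(V^\star_{h+1}-V_{\syn(i),h+1})$, a weighted average of value errors at \emph{earlier} synchronization rounds. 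By monotonicity $V_{\syn(i),h+1}\le V_{K,h+1}$, so these stale errors are \emph{larger} than the final-iterate error $V^\star_{h+1}-V_{K,h+1}$, and you cannot upper-bound the former by the latter. The paper's resolution is to propagate the $\tau_v$-weighted aggregate $\sum_{v}\tau_v\sum_s d_h^{\pi^\star}(s)(V_h^\star(s)-V_{t_v,h}(s))$ over \emph{all} synchronization rounds through the $h$-recursion (Step~1 and Lemma~\ref{lemma:vgap_recursion}); the index shift from $\tau_u$ to $\tau_{u+1}$ in that recursion is exactly where the conditions $\tau_{u+1}/\tau_u\le 1+2/H$ (so the accumulated factor stays $\le e^4$ over $H$ steps) and $\tau_1\le\sqrt{H^2S\clipavgC K/M}$ (to absorb the boundary term $H\tau_1$) are used. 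You instead attribute the $\tau$-conditions to the pessimism step, where they are in fact not needed: pessimism follows for any schedule from $|D_2|\le D_3$ together with $D_1,D_4\ge 0$ for $\pi=\pi_k$.

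A secondary slip: the pointwise inequality $d^{\pi^\star}_h(s,a)\le \clipavgC\, d^{\mathsf{avg}}_h(s,a)+1/S$ you invoke is false in general (take $d^{\pi^\star}_h(s,a)=1$ with $\clipavgC d^{\mathsf{avg}}_h(s,a)=1/S$). The clipping must instead be handled inside the sum over states via $\sum_s d^{\pi^\star}_h(s,\pi_h^\star(s))/\min\{d^{\pi^\star}_h(s,\pi_h^\star(s)),1/S\}\le 2S$, combined with Cauchy--Schwarz, as the paper does when bounding $D_{1,h}$ and $D_{3,h}$. This is fixable, but as written your final accounting of the $S\clipavgC$ factor does not go through.
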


Theorem~\ref{thm:pess-fedq-regret} implies that as long as the initial synchronization occurs early and the synchronization intervals do not increase too rapidly (cf. \eqref{eq:tau-condition}), \pfedq is guaranteed to find an $\varepsilon$-optimal policy, i.e., $V_{1}^{\star}(\rho) - V_{1}^{\hat{\pi}}(\rho) \le \varepsilon$, for any target accuracy  $\varepsilon\in (0,H]$, if the total number of samples per agent $T$  exceeds
$$ \widetilde{O} \left (\frac{ H^7 S \clipavgC}{M \varepsilon^2}  \right). $$

A few implications are in order.

\paragraph{Linear speedup without expert datasets.} The value function gap shows linear speedup with respect to the number of agents $M$, highlighting the benefit of collaboration. Notably, the guarantee holds even when every agent has low-quality datasets collected by some sub-optimal behavior policy, as long as agents' local data distributions collectively cover the distribution of the optimal policy, where the average single-policy concentrability $\clipavgC$ (cf. \eqref{eq:clipped-avg-concentrability-assumption}) is finite. On the other end, when performing offline RL using a single agent, it requires that the behavior policy of the single agent individually cover the optimal policy, i.e., $C^{\star}<\infty$ (cf. \eqref{equ:concentrability-assumption}), which is much more stringent. Therefore, federated offline RL enables policy learning that otherwise will not be possible in the single-agent setting. Specializing to the case $M=1$, our bound nearly matches the sample complexity bound $\widetilde{O} \Big( \frac{H^6SC^{\star}}{\varepsilon^2}\Big)$ obtained for a single-agent pessimistic Q-learning algorithm with a similar Hoeffding-style penalty \citep{shi2022pessimistic}, up to a factor of $H$.

\paragraph{Comparison with offline RL using shared datasets.}
To benchmark the tightness of our bound, let us consider the minimax lower bound of the sample complexity for single-agent offline RL \citep{li2022settling}, as if we collect all the agents' datasets at a central location. Note that the effective single-policy concentrability coefficient (cf. \eqref{equ:concentrability-assumption}) for the combined datasets $\cD_{\mathsf{all}}= \cup_{m=1}^M \cD^m$ becomes
\begin{align}
    \max_{(h, s, a) \in [H] \times \mathcal{S} \times \cA} \frac{ \min \{ d^{\pi^\star}_h(s,a), \, 1/S\}}{\sum_{m=1}^Md^{m}_h(s,a)}
    = \max_{(h, s, a) \in [H] \times \mathcal{S} \times \cA} \frac{ \min \{ d^{\pi^\star}_h(s,a), \, 1/S\}}{M d_h^{\mathsf{avg}}(s,a)} 
    = \frac{\clipavgC}{M} ,
\end{align}
leading to the minimax lower bound \citep{li2022settling}
$$\widetilde{\Omega} \left(  \frac{H^4S \clipavgC}{M\varepsilon^2} \right) .$$
 Comparing with  the sample complexity bound of \pfedq, obtained as $\widetilde{O} \Big (\frac{ H^7 S \clipavgC}{M \varepsilon^2} \Big )$, this suggests that the performance of \pfedq is near-optimal up to polynomial factors of $H^3$ even when compared with the single-agent counterpart assuming shared access to all agents' datasets.

\paragraph{Communication efficiency.} 

\begin{figure}[t]
\centering
\includegraphics[trim={0.1cm 0 0 0.1cm},clip, width=0.55\textwidth]{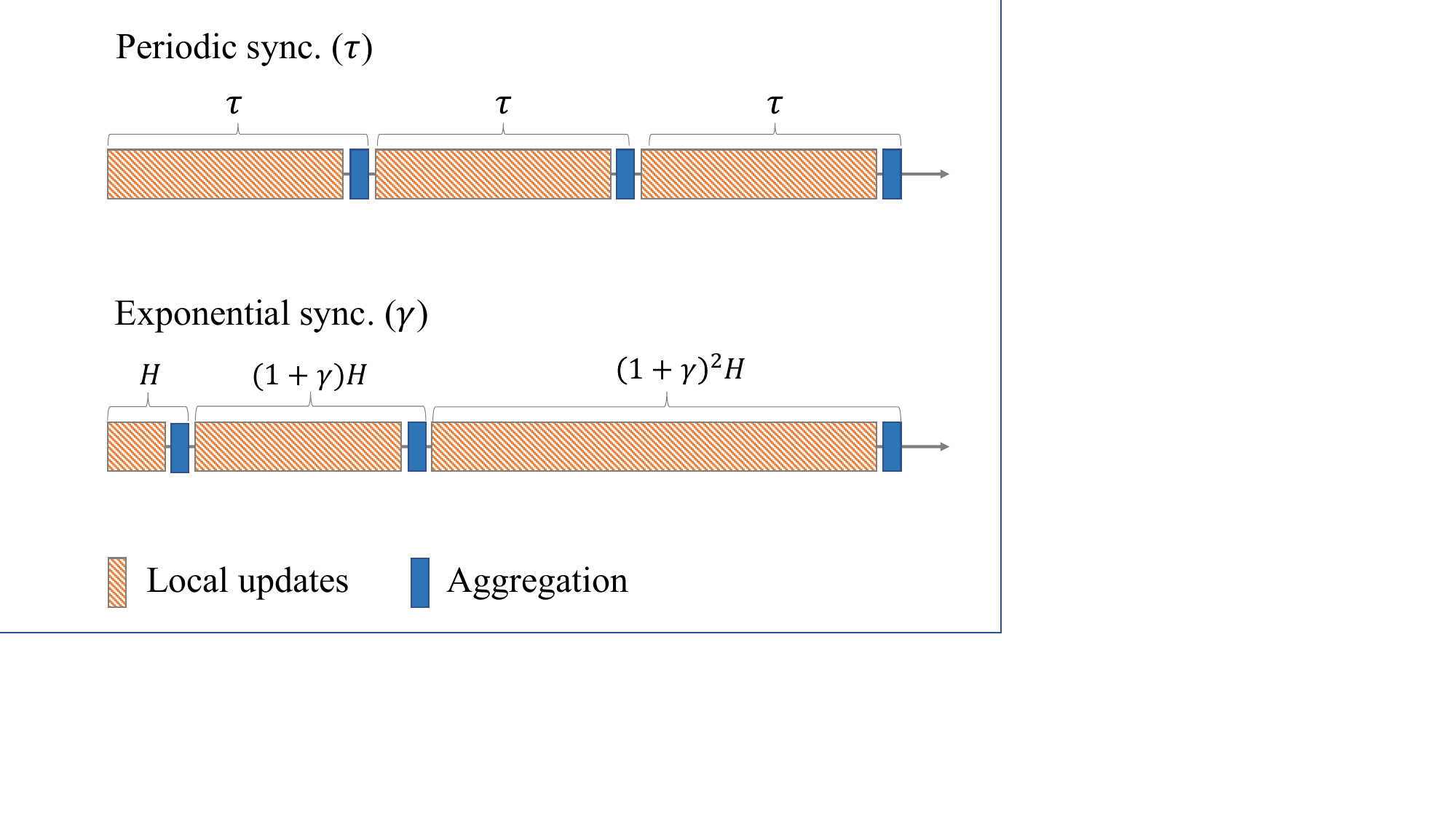}
\caption{Illustration of the periodic synchronization with constant period $\tau$ and the exponential synchronization with a rate $\gamma$.}
\label{fig:sync-schedule}
\end{figure}

Theorem~\ref{thm:pess-fedq-regret} suggests initiating the first synchronization early and avoiding rapid increases in synchronization intervals (cf. \eqref{eq:tau-condition})
to ensure fast convergence. This is attributed to large deviations among agents in the early stages, arising due to coarse Q-estimates and large learning rates, which diminish as training proceeds.
For communication efficiency, it is essential to design a synchronization schedule that meets the constraints with the least number of synchronizations. We investigate the following two specific synchronization schedules for \pfedq:
 \begin{enumerate}[label=(\alph*)]
 \item \textbf{Periodic synchronization:} For a fixed period $\tau \ge 1$, communication between agents and a server is available for every $\tau$ episodes, i.e., $\tau_i = \tau$ for all $i \ge1 $, and we denote the synchronization schedule as $\synset_{\mathsf{period}}(K, \tau)$.
 \item \textbf{Exponential synchronization: } For a fixed ratio $\gamma>0$, initializing $\tau_1=H$, set $\tau_i =  \lfloor (1+\gamma) \tau_{i-1} \rfloor$ for each $i\ge 2$. Under this scheduling, agents communicate frequently at initial iterations, but the period between aggregation steps increases exponentially with the rate of $(1+\gamma)$ and synchronization occurs rarely as training proceeds enough. We denote the synchronization schedule as $\synset_{\mathsf{exp}}(K, \gamma).$
\end{enumerate}

Now, we analyze the number of communication rounds required to achieve a target accuracy, for each scheduling scheme.
\begin{corollary} \label{cor:fedq-comm}
  For any given $\delta \in (0,1)$ and target error $\varepsilon \in (0, \min \{H, \frac{H^3 S \clipavgC}{M} \} ]$, suppose the total number of samples per agent $T=KH$ satisfies
  \begin{align*}
    T \asymp \frac{ H^7 S \clipavgC}{M \varepsilon^2},
  \end{align*}
  and \pfedq performs under the periodic synchronization scheduling, i.e., $\synset(K) = \synset_{\mathsf{period}}(K, \tau)$, with $\tau \asymp \sqrt{\frac{H S\clipavgC T}{M}}$, or the exponential synchronization scheduling, i.e., $\synset(K) = \synset_{\mathsf{exp}}(K, \gamma)$, with $\gamma = \frac{2}{H}$. Then, each schedule requires the number of synchronizations at most
  \begin{subequations}
  \begin{align}
    \mathsf{(Periodic)} &\quad |\synset_{\mathsf{period}}(K, \tau)| \lesssim \sqrt{\frac{MK}{H^2S\clipavgC }} %\asymp \frac{H^2}{\varepsilon}
    , \\ 
    \mathsf{(Exponential)} &\quad |\synset_{\mathsf{exp}}(K,\gamma)| \lesssim H,
  \end{align}
  \end{subequations}
  respectively, and the solution policy $\hat{\pi}$ of \pfedq is guaranteed to be an $\varepsilon$-optimal policy at least with probability $1- \delta$.
\end{corollary}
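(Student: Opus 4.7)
\noindent The plan is to prove the corollary in two stages. First, I will verify that both schedules satisfy the conditions \eqref{eq:tau-condition} of Theorem~\ref{thm:pess-fedq-regret}, so that its conclusion, combined with the prescribed scaling $T \asymp H^7 S \clipavgC/(M \varepsilon^2)$, yields the $\varepsilon$-optimality statement; the regime assumption $\varepsilon \le \min\{H, H^3 S\clipavgC/M\}$ is what ensures that the leading Hoeffding-style term in \eqref{eq:pess-fedq-regret} dominates the burn-in term under this choice of $T$. Second, I will count the number of synchronization rounds each schedule consumes over $K$ episodes, using the identity $T = KH$ throughout to convert between $T$ and $K$.

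\noindent For the periodic schedule with $\tau \asymp \sqrt{HS\clipavgC T/M} = \sqrt{H^2 S\clipavgC K/M}$, both conditions in \eqref{eq:tau-condition} are immediate since $\tau_{u+1}/\tau_u = 1 \le 1 + 2/H$, and the sync count is directly $\lceil K/\tau \rceil \lesssim \sqrt{MK/(H^2 S\clipavgC)}$. For the exponential schedule with $\gamma = 2/H$, the ratio condition is automatic from $\tau_{u+1} = \lfloor(1+2/H)\tau_u\rfloor \le (1+2/H)\tau_u$, and the initial condition $\tau_1 = H \le \sqrt{H^2 S\clipavgC K/M}$ reduces to $M \le S\clipavgC K$; substituting $K \asymp H^6 S\clipavgC/(M \varepsilon^2)$, this is equivalent to $\varepsilon \le H^3 S\clipavgC/M$, which is assumed.

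\noindent The bulk of the argument is then the bound $|\synset_{\mathsf{exp}}(K,\gamma)| \lesssim H$. The plan is to sidestep the floor operator using an affine shift: setting $a_u \defn \tau_u - H/2$, the identity $(1+2/H)(H/2) = H/2 + 1$ precisely absorbs the $-1$ loss incurred by one floor application, so $\tau_u \ge (1+2/H)\tau_{u-1} - 1$ translates into $a_u \ge (1+2/H) a_{u-1}$, yielding $\tau_u \ge (H/2)\bigl((1+2/H)^{u-1} + 1\bigr)$. A geometric sum then gives
\[
\sum_{u=1}^U \tau_u \;\ge\; \frac{H^2}{4}\bigl((1+2/H)^U - 1\bigr) + \frac{HU}{2}.
\]
Demanding the left-hand side cover $K$ episodes and using $\log(1+2/H) \ge 1/H$ (valid for $H \ge 2$) produces $U \le H\log(1 + 4K/H^2)$, which is $\lesssim H$ since $K$ is polynomial in the problem parameters and $\log K$ is absorbed into the $\lesssim$ notation.

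\noindent The chief delicacy is precisely the floor operator in the exponential recursion: without the affine shift, a naive lower bound of the form $\tau_u \ge (1+2/H)^{u-1}\tau_1 - O(u)$ would accumulate an additive error that could swamp the geometric growth for small $\tau_1 = H$, potentially inflating the sync count or violating the ratio condition. The $-H/2$ shift decouples this loss cleanly, after which the remainder reduces to elementary geometric-sum and logarithm manipulations.
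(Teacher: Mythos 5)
Your proposal is correct and follows essentially the same route as the paper: verify the conditions \eqref{eq:tau-condition} for each schedule (with the regime $\varepsilon \le H^3 S\clipavgC/M$ ensuring $\tau_1 = H \le \sqrt{H^2 S\clipavgC K/M}$ for the exponential case), then count synchronizations via $\lceil K/\tau\rceil$ and a geometric-sum lower bound on $\sum_u \tau_u$. The only difference is cosmetic: you absorb the floor-induced $-1$ via the affine shift $\tau_u - H/2$ while keeping the growth rate $1+2/H$, whereas the paper instead downgrades the rate to $1+1/H$ using $(1+\tfrac{1}{H})\tau_i \le (1+\tfrac{2}{H})\tau_i - 1$ for $\tau_i \ge H$; both yield $\nsyn(K) \lesssim H$ up to logarithmic factors.
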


Corollary~\ref{cor:fedq-comm} implies that \pfedq requires only $\widetilde{O}(H)$ aggregations to achieve the target accuracy under appropriate synchronization schedules, such as the exponential synchronization schedule. Notably, the number of communication rounds is nearly independent of the size of the state-action space, the total number of episodes, or the number of agents, and this outperforms prior art \citep{woo23blessing}.
Furthermore, analysis suggests that exponential synchronization with a modest rate $\gamma = 2/H$ is a key to achieving such communication efficiency.
With our strategic choices of learning rates, local Q-estimates stabilize as training proceeds, and thus agents can perform more local updates than previous rounds without increasing uncertainty beyond the control of the global pessimism penalty.
Exponential synchronization reduces the number of synchronizations by capturing the additional room for local updates arising from the stabilization of Q-estimates.
On the other hand, periodic synchronization does not exploit this benefit, even if we set the period $\tau$ maximally under \eqref{eq:tau-condition} due to which it necessitates more communication rounds, which increase with $K$ and $M$.

\section{Analysis}
\label{sec:analysis}
In this section, we will outline useful properties of \pfedq and the key steps of the proof of Theorem~\ref{thm:pess-fedq-regret}, deferring the details, such as proofs of supporting lemmas, to Appendix~\ref{sec:tech_lemmas} and \ref{sec:proofs_main}.

Throughout the paper, we adopt the following shorthand notation 
\begin{equation} \label{eq:transition_vector}
P_{h,s,a} \coloneqq P_h(\cdot \mymid s,a ) \in [0,1]^{1\times S},
\end{equation}
which represents the transition probability vector given the current state-action pair $(s,a)$ at step $h$. In addition, define $P_{k,h}^m \in \{0,1\}^{1\times S}$ as the empirical transition vector at step $h$ of the $k$-th episode at agent $m$, namely
\begin{equation}
P_{k,h}^m (s) = \mathbb{I}(s = s_{k,h+1}^m), \quad \mbox{for all }s \in\mathcal{S}. 
\end{equation}

These are the notations pertaining to the counters for visits of agents on each state-action pair $(s,a) \in \cS \times \cA$. For any $(m, k, h) \in [M] \times [K] \times [H]$,
\begin{itemize}
    \item $l_{k,h}^m(s,a)$: a set of episodes in the interval $(\syn(k), k]$ during which agent $m$ visits $(s,a)$ at step $h$, i.e., $l_{k,h}^m(s,a) \defn \{\syn(k) < i \le k: (s_{i,h}^m,a_{i,h}^m)=(s,a)\}$.
    \item $L_{k,h}^m(s,a)$: a set of episodes in the interval $[1, k]$ during which agent $m$ visits $(s,a)$ at step $h$, i.e. $L_{k,h}^m(s,a) \defn \{1 \le i \le k: (s_{i,h}^m,a_{i,h}^m)=(s,a)\}$.
\end{itemize}

We also introduce the following notation related to the synchronization schedule $\synset(K)$. For any positive integer $k$ and $u$,  
\begin{itemize}
    \item $t_u$: the index of episodes, after which the $u$th synchronization occurs.
    \item $\tau_u$: the number of local updates (episodes) taken between the $(u-1)$th and the $u$th synchronizations.
    \item $\syn(k)$: the most recent episode where the aggregation occurs before the $k$th episode.
    \item $\nsyn(k)$: the minimum index of aggregation occurring after $k$-th episode.
\end{itemize}

\subsection{Basic facts}
\label{sec:analysis-basicfacts}

\paragraph{Error recursion of Q-estimates.} We begin with the following key error decomposition of the Q-estimate at each synchronization, whose proof is provided in Appendix~\ref{proof:qerr-dcomp-base}.  

\begin{lemma}[Q-estimation error decomposition] \label{lemma:qerr-dcomp-base}
Consider a Q-function $Q^{\pi} = \{Q^{\pi}_{h}(s,a)\}_{[H]\times\cS\times\cA}$ and value function $V^{\pi} = \{V^{\pi}_{h}(s)\}_{[H]\times\cS}$ induced by a policy $\pi$. 
Then, for any $[H]\times\cS\times\cA$ and $k \in \synset(K)$, the error between $Q_{h}^{\pi}$ and $Q_{k,h}$ is decomposed as follows:
\begin{align}\label{eq:qerror-decomp-base}
    Q_{h}^{\pi}(s,a) -  Q_{k, h} (s, a) 
        &= \underbrace{  \omega_{0,k,h}(s,a) (Q_{h}^{\pi}(s,a) -  Q_{0, h} (s, a)) }_{=: D_1^{\pi}(s,a,k,h):~\textsf{initialization error}}\cr
    &\qquad + \underbrace{  \sum_{m=1}^M \sum_{i \in L_{k,h}^m(s,a)} \omega_{i,k,h}^m(s,a) (P_{h,s,a} - P_{i,h}^m) V^{m}_{i-1, h+1}  }_{=:D_2(s,a,k,h):~\textsf{transition variance}} \cr
    &\qquad+ \underbrace{  \sum_{u=1}^{\nsyn(k)}  B_{t_u,h}(s,a) \prod_{u'=u+1}^{\nsyn(k)} \lambda_{u',h}(s,a)  }_{=: D_3(s,a,k,h):~\textsf{global penalty}} \cr
  &\qquad + \underbrace{  \sum_{m=1}^M \sum_{i \in L_{k,h}^m(s,a)} \omega_{i,k,h}^m(s,a) P_{h,s,a} (V^{\pi}_{h+1}-V^{m}_{i-1, h+1})  }_{=:D_4^{\pi}(s,a,k,h):~\textsf{recursion}} ,
\end{align}
where $L_{k,h}^m(s,a) \defn \{1 \le i \le k: (s_{i,h}^m,a_{i,h}^m)=(s,a)\}$ and $l_{k,h}^m(s,a) \defn \{\syn(k) < i \le k: (s_{i,h}^m,a_{i,h}^m)=(s,a)\}$. And, for simplicity, we use the shortened notations defined as 
\begin{subequations}
\begin{align}  
      \label{eq:lr-lambda} \lambda_{v,h}(s,a) &=
    \begin{cases}
      1 &~~\text{if}~~ N_{k,h}(s,a)=0 \\
      \frac{ N_{\syn(k),h}(s,a) }{N_{k,h}(s,a) + H n_{k,h}(s,a)} &~~\mbox{otherwise}
    \end{cases}, \quad v = \phi(k), \\  
     \omega_{0,k,h}^m(s,a)&  =     \begin{cases}
      1 &~~\text{if}~~ N_{k,h}(s,a)=0 \label{eq:def-omega0k}\\
     0 &~~\mbox{otherwise}
    \end{cases},  \\
        \omega_{i,k,h}^m(s,a)&  = \frac{ H+1}{N_{k,h}(s,a) +  H n_{k,h}(s,a)}
        \left (\prod_{x=\nsyn(i)}^{\nsyn(k)-1} \frac{ N_{t_x,h}(s,a)}{N_{t_x,h}(s,a) + H n_{t_x,h}(s,a)} \right), \quad i\in L_{k,h}^m(s,a). \label{eq:def-omegaik}
\end{align}
\end{subequations}
\end{lemma}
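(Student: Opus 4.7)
The plan is to derive the error decomposition by first collapsing a single synchronization window into an explicit affine combination, and then iterating this one-window identity across all preceding synchronizations. Fix $k \in \synset(K)$ and set $u = \nsyn(k)$, so that $\syn(k) = t_{u-1}$ and all local updates in the current window are indexed by $i \in l^m_{k,h}(s,a)$. Under the learning rate \eqref{eq:eta-def}, the products $\prod_{j>i}(1-\eta^m_{j,h}(s,a))$ telescope, so unrolling the local update \eqref{eq:localq-update} yields
\begin{align*}
    Q^m_{k,h}(s,a) = \frac{N_{\syn(k),h}(s,a)\, Q_{\syn(k),h}(s,a) + M(H+1)\sum_{i \in l^m_{k,h}(s,a)}\bigl(r^m_{i,h}+V^m_{i-1,h+1}(s^m_{i,h+1})\bigr)}{N_{\syn(k),h}(s,a) + M(H+1)\,n^m_{k,h}(s,a)}.
\end{align*}
The importance weights $\alpha^m_{k,h}$ in \eqref{eq:alpha-def} are crafted precisely so that, after aggregation via \eqref{eq:globalq-update}, all $m$-dependence cancels and every sampled target in the window inherits the common coefficient $\tfrac{H+1}{N_{k,h}(s,a)+Hn_{k,h}(s,a)}$. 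Summing over agents then produces the one-step sync identity
\begin{align*}
    Q_{k,h}(s,a) = \lambda_{\nsyn(k),h}(s,a)\,Q_{\syn(k),h}(s,a) + \sum_{m=1}^M\sum_{i \in l^m_{k,h}(s,a)}\frac{H+1}{N_{k,h}(s,a)+Hn_{k,h}(s,a)}\bigl(r^m_{i,h}+V^m_{i-1,h+1}(s^m_{i,h+1})\bigr) - B_{k,h}(s,a),
\end{align*}
and a direct calculation confirms the coefficient-sum identity $1-\lambda_{\nsyn(k),h}(s,a) = \sum_m\sum_{i \in l^m_{k,h}(s,a)}\tfrac{H+1}{N_{k,h}(s,a)+Hn_{k,h}(s,a)}$.

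Next, I would subtract this identity from the Bellman equation $Q^\pi_h(s,a) = r_h(s,a) + P_{h,s,a}V^\pi_{h+1}$, using $r^m_{i,h}=r_h(s,a)$ and $V^m_{i-1,h+1}(s^m_{i,h+1}) = P^m_{i,h}V^m_{i-1,h+1}$ for $i\in l^m_{k,h}(s,a)$. The coefficient-sum identity redistributes the $Q^\pi_h(s,a)$ mass so that each per-visit residual splits as $(P_{h,s,a}-P^m_{i,h})V^m_{i-1,h+1} + P_{h,s,a}(V^\pi_{h+1}-V^m_{i-1,h+1})$, yielding the one-step error recursion
\begin{align*}
    Q^\pi_h(s,a) - Q_{k,h}(s,a) = \lambda_{\nsyn(k),h}(s,a)\bigl(Q^\pi_h(s,a)-Q_{\syn(k),h}(s,a)\bigr) + (\textrm{transition noise} + \textrm{recursion} + \textrm{penalty at window }u).
\end{align*}
Iterating this recursion back through sync times $t_{u-1}, t_{u-2}, \ldots, 0$, every contribution from window $u'$ inherits the prefactor $\prod_{u''=u'+1}^{\nsyn(k)}\lambda_{u'',h}(s,a)$. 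Combining this prefactor with the in-window coefficient $\tfrac{H+1}{N_{t_{u'},h}+Hn_{t_{u'},h}}$ reproduces exactly $\omega^m_{i,k,h}(s,a)$ in \eqref{eq:def-omegaik} (with $\nsyn(i)=u'$), while the initialization prefactor $\prod_{u'=1}^{\nsyn(k)}\lambda_{u',h}(s,a)$ matches $\omega_{0,k,h}(s,a)$: it equals $1$ when no visit has ever occurred and vanishes otherwise, since the first visit forces $N_{t_{u'-1},h}(s,a)=0$ in some $\lambda_{u',h}$. Grouping the four resulting pieces reproduces the decomposition \eqref{eq:qerror-decomp-base}.

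The main obstacle is careful bookkeeping rather than any deep idea: one must verify both telescoping identities (the learning-rate product within a window and the $\lambda$-product across windows), handle the edge case $N_{k,h}(s,a)=0$ (where all $B_{t_u,h}=0$ and no transition or recursion terms survive), and confirm that the compounded per-visit weights match \eqref{eq:def-omegaik} on the nose. The algebraic backbone is the cancellation $\alpha^m_{k,h}(s,a)\cdot\eta^m_{i,h}(s,a)\prod_{j>i}(1-\eta^m_{j,h}(s,a)) = \tfrac{H+1}{N_{k,h}(s,a)+Hn_{k,h}(s,a)}$, which eliminates all agent- and visit-order dependence inside a window and makes the remainder of the proof mechanical.
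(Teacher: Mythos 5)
Your proposal is correct and follows essentially the same route as the paper's proof: unroll the local updates within a synchronization window (the telescoping of $\prod_{j>i}(1-\eta^m_{j,h})$), exploit the cancellation $\alpha^m_{k,h}\,\eta^m_{i,h}\prod_{j>i}(1-\eta^m_{j,h}) = \tfrac{H+1}{N_{k,h}+Hn_{k,h}}$ to collapse the aggregation into a $\lambda$-weighted recursion plus uniformly weighted per-visit targets, split each residual via Bellman's equation into the transition-noise and recursion pieces, and iterate across windows to accumulate the $\lambda$-products into $\omega^m_{i,k,h}$ and $\omega_{0,k,h}$. The only cosmetic difference is that you write the within-window unrolling as an explicit closed-form ratio while the paper keeps the $(1-\eta)$ products symbolic before verifying the same cancellation.
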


\begin{figure}
  \centering
  \begin{subfigure}[b]{0.48\textwidth}
    \centering
    \includegraphics[width=\textwidth]{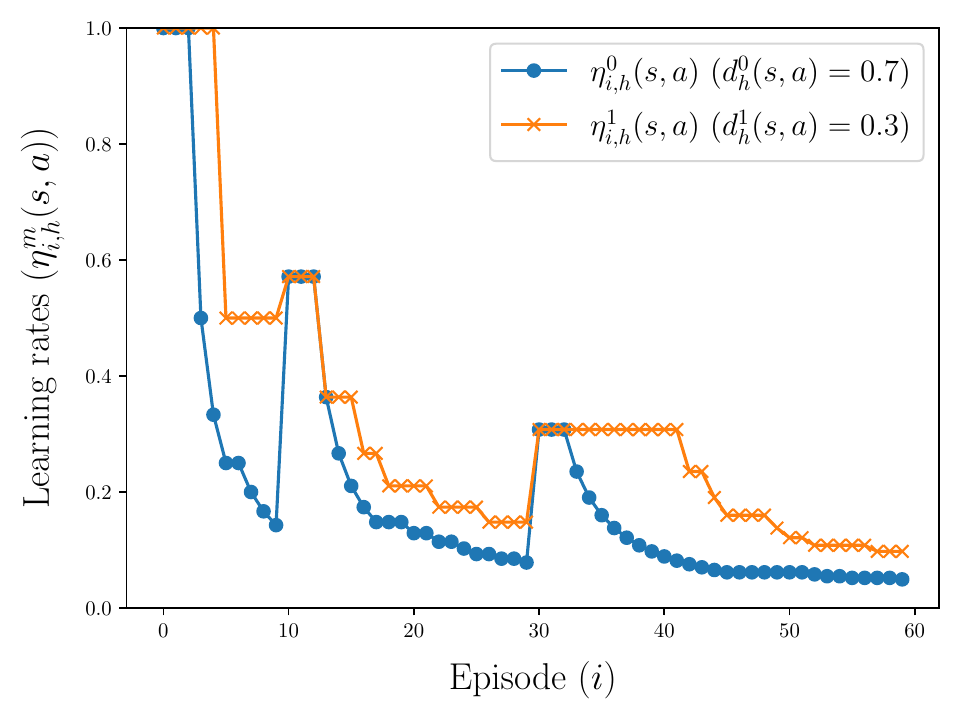}
    \caption{Rescaled learning rates}
    \label{fig:lr-rescaled}
  \end{subfigure}
  \hfill
  \begin{subfigure}[b]{0.48\textwidth}
    \centering
    \includegraphics[width=\textwidth]{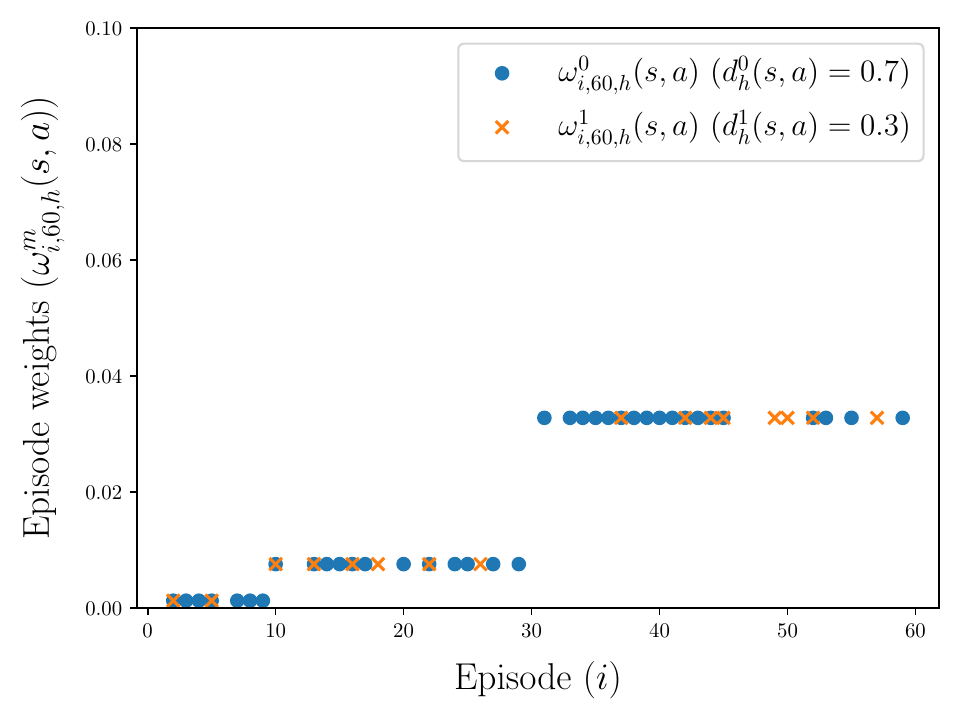}
    \caption{Episode weights }
    \label{fig:omega-rescaled}
  \end{subfigure}
  \caption{Illustration of the rescaled learning rates ($\eta_{i,h}^m(s,a)$) and the episode weights ($\omega_{i,60,h}^m(s,a)$) induced by the learning rates of two agents $m=0,1$ for episodes $1\le i \le 60$, where $H=5$, the occupancy distribution of each agent on $(s,a,h)\in \cS \times \cA \times [5]$ is $d_h^0(s,a) = 0.7$ and $d_h^1(s,a) = 0.3$, respectively, and the synchronization schedule is $\synset(60)= \{ 10,30, 60 \}$.}
  \label{fig:rescaled}
\end{figure}

\paragraph{Equally favoring episodes within the same local update round.}
According to the decomposition \eqref{eq:qerror-decomp-base} in Lemma~\ref{lemma:qerr-dcomp-base}, for any $(s,a,h) \in \cS \times \cA \times [H]$, the $Q$-estimation error at episode $k$ significantly depends on the weighted sum of transition difference for each episode where the local update occurs, namely $D_2(s,a,k,h)$. Intuitively, the weight $\omega_{i,k,h}^m(s,a)$ assigned to each episode $i$ balances the accumulation of information from old and new updates. Our choice of learning rates, which decreases fast during local updates, as illustrated in Figure~\ref{fig:lr-rescaled}, ensures that the weight $\omega_{i,k,h}^m(s,a)$ within the same local update round is always equal for all episodes and agents, as shown in \eqref{eq:def-omegaik} and Figure~\ref{fig:omega-rescaled}. The uniform weights allow the transition information of each episode to be accumulated evenly, regardless of other transitions that occur in future episodes or other agents' episodes. This is essential to keep variance arising from local updates low, especially when a synchronization interval is long.
Assigning equal weight to every episode allows to fully utilize transitions observed during local updates without forgetting old information, regardless of the length of the synchronization interval.

\paragraph{Bounded visitation counters.} We next introduce the following lemma regarding the visitation counters, whose proof is provided in Appendix~\ref{proof:avgvisit_bound}.
\begin{lemma}[Concentration bound on the visitation counters] \label{lemma:avgvisit_bound}
    Consider any $\delta \in (0,1)$ and some universal constant $c_1 >0$, and let  
    \begin{align} \label{eq:definition_mua}
      \logpart \defn \log\left(\frac{2 |\cS||\cA| K H}{\delta} \right) ~~\text{and}~~
      K_0(s,a,h) \defn \frac{4  \logpart }{c_1 Md_h^{\mathsf{avg}}(s,a)}. 
  \end{align}
   Then, for all $(s,a,h) \in \cS \times \cA \times [H]$, the following holds
   \begin{subequations} \label{eq:banana}
   \begin{align}
   \text{when }k \ge K_0(s,a,h): \qquad  & \frac{1}{2} k M  d_{h}^{\mathsf{avg}}(s,a)   \le  N_{k,h}(s,a) \le 2k M  d_{h}^{\mathsf{avg}}(s,a) , \label{eq:banana-1} \\
    \text{when }k \le K_0(s,a,h): \qquad  &  N_{k,h}(s,a) \le 8 \logpart/c_1 \label{eq:banana-2}
   \end{align}
   \end{subequations}
   with probability at least $1-\delta$.
\end{lemma}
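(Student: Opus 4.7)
The plan is to reduce the statement to a standard multiplicative Chernoff bound applied to a sum of independent Bernoulli random variables, and then perform a union bound over all tuples $(s,a,h,k)$. The key structural observation is that for any fixed $(s,a,h)$, we may write
\begin{equation*}
N_{k,h}(s,a) \;=\; \sum_{m=1}^{M}\sum_{i=1}^{k} X_{m,i}, \qquad X_{m,i} \defn \mathbb{I}\bigl((s_{i,h}^m,a_{i,h}^m)=(s,a)\bigr),
\end{equation*}
where the $X_{m,i}$'s are mutually independent (across episodes because trajectories are drawn i.i.d., and across agents because the local datasets are generated independently), with $\mathbb{E}[X_{m,i}] = d_h^m(s,a)$. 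Therefore $\mathbb{E}[N_{k,h}(s,a)] = kM d_h^{\mathsf{avg}}(s,a) =: \mu_k$.

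First I would address the regime $k \ge K_0(s,a,h)$, where $\mu_k \ge 4\logpart/c_1$. The multiplicative Chernoff inequality gives
\begin{equation*}
\mathbb{P}\bigl(|N_{k,h}(s,a) - \mu_k| \ge \tfrac{1}{2}\mu_k\bigr) \;\le\; 2\exp\!\bigl(-c\,\mu_k\bigr)
\end{equation*}
for a numerical constant $c>0$ (one can take $c=1/12$). Choosing $c_1 \le c$ then makes the right-hand side at most $2\exp(-4\logpart) \le \delta/(SAKH)$, using the definition of $\logpart$, which yields \eqref{eq:banana-1}.

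Next I would handle the regime $k \le K_0(s,a,h)$, where $\mu_k \le 4\logpart/c_1$. Here I invoke the standard upper-tail Chernoff bound in its form valid for large deviations: for any $t \ge 2\mu_k$,
\begin{equation*}
\mathbb{P}\bigl(N_{k,h}(s,a) \ge t\bigr) \;\le\; \Bigl(\tfrac{e\mu_k}{t}\Bigr)^{t} \;\le\; (e/2)^t.
\end{equation*}
Setting $t = 8\logpart/c_1 \ge 2\mu_k$ gives a bound of the form $\exp(-c'\logpart/c_1)$; taking $c_1$ small enough (again a purely numerical choice independent of any problem parameter) makes this at most $\delta/(SAKH)$, establishing \eqref{eq:banana-2}.

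Finally, I take a union bound over $(s,a,h,k) \in \cS\times\cA\times[H]\times[K]$ — there are at most $SAKH$ such tuples, which is exactly absorbed by the logarithmic factor in $\logpart = \log(2SAKH/\delta)$, yielding the stated failure probability $\delta$. The main obstacle is purely bookkeeping: one has to choose the numerical constant $c_1$ so that both the lower-tail bound at the transition threshold $\mu_k \asymp \logpart/c_1$ and the heavy-tail upper bound simultaneously fall below $\delta/(SAKH)$. There is no conceptual subtlety beyond the (agent-wise and episode-wise) independence of the visitation indicators, which follows directly from the sampling model of Section~\ref{sec:offline-concentrability}.
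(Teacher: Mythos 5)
Your proposal follows essentially the same route as the paper: view $N_{k,h}(s,a)$ as a sum of $kM$ independent Bernoulli indicators with mean $\mu_k = kM d_h^{\mathsf{avg}}(s,a)$, apply a multiplicative Chernoff bound with relative deviation $1/2$ in the regime $\mu_k \gtrsim \logpart/c_1$, use an upper-tail Chernoff bound in the complementary regime, and union bound over the $SAKH$ tuples. The one step that does not go through as written is the small-$k$ case: with $t = 8\logpart/c_1$ and $\mu_k$ possibly as large as $4\logpart/c_1$ you only have $t \ge 2\mu_k$, and the simplified tail bound $(e\mu_k/t)^t \le (e/2)^t$ is vacuous there, since $e/2>1$ makes the right-hand side exceed $1$. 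The fix is cosmetic rather than conceptual: either retain the full form $\mathbb{P}(N_{k,h}(s,a) \ge t) \le e^{-\mu_k}(e\mu_k/t)^t$, which at $t=2\mu_k$ still decays like $\exp(-(2\ln 2 -1)\mu_k)$ and hence beats $\delta/(SAKH)$ at the boundary $\mu_k \asymp \logpart/c_1$, or do what the paper does and invoke the additive-deviation Chernoff bound $\mathbb{P}(N_{k,h}(s,a) - \mu_k \ge s\mu_k) \le \exp(-c_1 \mu_k s)$ valid for $s\ge 1$, applied with $s = 4\logpart/(c_1\mu_k) \ge 1$, which directly yields $N_{k,h}(s,a) \le \mu_k + 4\logpart/c_1 \le 8\logpart/c_1$ with per-tuple failure probability $\exp(-4\logpart)$. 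Everything else---the independence structure across agents and episodes, the choice of the numerical constant, and the union bound---matches the paper's argument.
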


\paragraph{Monotonic and pessimistic global value updates.} Note that the global value estimate is always monotonically non-decreasing, i.e., for $k' , k \in \synset(K)$ it holds
\begin{equation}\label{eq:monotonic_V}
\forall s \in \cS: \qquad V_{k,h}(s) \ge V_{k',h}(s) \quad \mbox{when } k' \le k , 
\end{equation}
which follows directly from the update rule \eqref{eq:vglobal-monotone}. Moreover, we have the following important lemma regarding the pessimistic property of the value estimate, whose proof is provided in 
 Appendix~\ref{proof:pessimistic_value}.

\begin{lemma}[Pessimistic global value] \label{lemma:pessimistic_value}
   Recall $Q_{k,h}$, $V_{k,h}$, and $\pi_{k,h}$ in Algorithm~\ref{alg:pfedq}. 
   Let $\pi_{k} = \{\pi_{k,h}\}_{h \in [H]}$.
   Given any $\delta \in (0,1)$, for all $(k,h) \in \synset(K) \times[H]$, it holds with probability at least $1-\delta$ that
    \begin{subequations} 
    \begin{align}
        \label{eq:var_penalty_dom} \forall (s,a) \in \cS\times\cA : \qquad  & | D_2  (s,a,k,h) | \le D_3 (s,a,k,h) \leq \sqrt{\frac{4  c_B \logpartm^2 H^4 }{ \max\{ N_{k,h}(s,a), 1 \} } } , \\
            \label{eq:Q_lower} \forall (s,a) \in \cS \times \cA: \qquad & Q_{k, h} (s,a) \le Q_{h}^{\pi_{k}}(s,a) \le  Q_{h}^{\star}(s,a), \\
        \label{eq:V_lower} \forall s \in \cS : \qquad & V_{k, h} (s) \le V_{h}^{\pi_{k}}(s) \le  V_{h}^{\star}(s).
    \end{align}
    \end{subequations}
\end{lemma}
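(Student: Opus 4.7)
The plan is to establish \eqref{eq:var_penalty_dom} first in a self-contained manner, and then to combine it with the Q-estimation error decomposition of Lemma~\ref{lemma:qerr-dcomp-base} to prove \eqref{eq:Q_lower} and \eqref{eq:V_lower} by backward induction on $h$ from $H+1$ down to $1$, with the claim holding uniformly over all $k \in \synset(K)$ at each level.

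\textbf{Proving \eqref{eq:var_penalty_dom}.} For the algebraic upper bound on $D_3$, I would exploit the identity $1 - \lambda_{u,h}(s,a) = (H+1) n_{t_u,h}(s,a)/(N_{t_u,h}(s,a) + H n_{t_u,h}(s,a))$ to recast $D_3 = \sqrt{c_B \logpartm^2 H^4}\, S_{\nsyn(k)}$, where $S_w \defn \sum_{u=1}^{w}(1-\lambda_{u,h})/\sqrt{N_{t_u,h}}\cdot \prod_{u'=u+1}^{w}\lambda_{u',h}$, and prove by induction on $w$ that $S_w \le 2/\sqrt{N_{t_w,h}}$. After the substitution $x = n_{t_u,h}/N_{t_{u-1},h}$, the induction step reduces to the elementary inequality $\sqrt{1+x}-1 \le (H+1)x/2$ for $x,H\ge 0$. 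For $|D_2| \le D_3$, I would apply Freedman's (or Azuma--Hoeffding's) inequality to the per-visit noises $\xi_{m,i} \defn (P_{h,s,a} - P_{i,h}^m)V_{i-1,h+1}^m$, each of which is bounded in $[-H,H]$, has zero conditional mean, and involves $V_{i-1,h+1}^m = V_{\syn(i),h+1}$ that is measurable before round $\nsyn(i)$. Using the telescoping bound $\prod_{u'>v}\lambda_{u',h} \le N_{t_v,h}/N_{k,h}$, the sum of squared weights obeys $\sum_{m,i\in L_{k,h}^m(s,a)}(\omega_{i,k,h}^m)^2 \le (H+1)^2/N_{k,h}(s,a)$, so the concentration inequality yields $|D_2| \lesssim H^2 \sqrt{\logpartm/N_{k,h}(s,a)}$, which is dominated by the established bound on $D_3$ once $c_B$ is taken large enough; a union bound over $(s,a,h,k)\in\cS\times\cA\times[H]\times\synset(K)$ then delivers the overall probability $1-\delta$.

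\textbf{Backward induction for \eqref{eq:Q_lower} and \eqref{eq:V_lower}.} The base case $h = H+1$ is immediate since all estimates vanish. For the inductive step, applying Lemma~\ref{lemma:qerr-dcomp-base} with $\pi\in\{\pi_k, \pi^{\star}\}$ yields $Q_h^{\pi}(s,a) - Q_{k,h}(s,a) = D_1^{\pi} + D_2 + D_3 + D_4^{\pi}$, where $D_1^{\pi} \ge 0$ (since $Q_0 \equiv 0$ and $Q_h^{\pi} \ge 0$), $D_2 + D_3 \ge 0$ by \eqref{eq:var_penalty_dom}, and $D_4^{\pi} \ge 0$ because the monotonicity \eqref{eq:monotonic_V} combined with the inductive hypothesis at step $h+1$ gives $V_{i-1,h+1}^m = V_{\syn(i),h+1} \le V_{k,h+1} \le V_{h+1}^{\pi_k} \le V_{h+1}^{\star}$. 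This proves $Q_{k,h}(s,a) \le Q_h^{\pi_k}(s,a) \le Q_h^{\star}(s,a)$. For the value bounds, the monotone-max rule \eqref{eq:vglobal-monotone} and the policy-update rule in Algorithm~\ref{alg:pfedq-server} ensure the existence of a ``latest argmax episode'' $k^{\star}(s,h) \in \{0\} \cup \synset(K)$ with $k^{\star}(s,h) \le k$ such that $V_{k,h}(s) = Q_{k^{\star}(s,h),h}(s, \pi_{k,h}(s))$ (interpreted as $0$ in the degenerate case $k^{\star}(s,h) = 0$, which trivially bounds $V_h^{\pi_k}(s) \ge 0$); re-applying the same decomposition argument to $Q_h^{\pi_k}(s, \pi_{k,h}(s)) - Q_{k^{\star}(s,h), h}(s, \pi_{k,h}(s))$ then yields $V_{k,h}(s) \le V_h^{\pi_k}(s) \le V_h^{\star}(s)$.

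\textbf{Main obstacle.} The principal technical difficulty is the concentration of $D_2$: because the weights $\omega_{i,k,h}^m$ depend on visit counters $N_{k,h}, n_{k,h}$ that are determined by future randomness, $D_2$ is not a martingale under the natural forward filtration. I would resolve this by decomposing $D_2 = \sum_v \omega_v T_v$ across synchronization rounds---within round $v$ the weight is uniform across visits, as ensured by the rescaled learning-rate schedule---so that the per-round noise sum $T_v = \sum_{m, i \in l_{t_v, h}^m(s,a)} \xi_{m,i}$ concentrates via an Azuma bound with $V_{\syn(t_v), h+1}$ fixed, and then stitch the rounds together with a covering/union-bound argument that controls the dependence of $\omega_v$ on future counts at the price of only additional logarithmic factors absorbed into $\logpartm$.
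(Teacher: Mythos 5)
Your backward induction for \eqref{eq:Q_lower}--\eqref{eq:V_lower} (base case at $h=H+1$; $D_1\ge 0$, $D_2+D_3\ge 0$, $D_4\ge 0$ via monotonicity plus the step-$(h+1)$ hypothesis; the ``latest argmax episode'' device for passing from $Q$ to $V$) matches the paper's argument, and your induction for the upper bound on $D_3$ is the paper's Lemma~\ref{lemma:d3-bound} with a valid elementary variant of the induction step. One minor gap there: to conclude $|D_2|\le D_3$ you need a \emph{lower} bound $D_3(s,a,k,h)\ge\sqrt{c_B\logpartm^2H^4/N_{k,h}(s,a)}$, not only the upper bound $D_3\le 2\sqrt{c_B\logpartm^2H^4/N_{k,h}(s,a)}$ that you establish; being dominated by an upper bound on $D_3$ does not imply being dominated by $D_3$ itself. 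The paper proves the matching lower bound by the same recursion, so this is easily repaired.

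The substantive gap is your treatment of $D_2$, which you correctly identify as the hard part but whose proposed resolution does not work. Writing $D_2=\sum_v\omega_v T_v$ and applying Azuma within each round gives $|T_v|\lesssim H\sqrt{n_{t_v,h}\logpartm}$, after which the only way to recombine is the triangle inequality: $|D_2|\le\sum_v\omega_v|T_v|\lesssim H\sqrt{\logpartm}\,\sum_v\omega_v\sqrt{n_{t_v,h}}\lesssim H^2\sqrt{\nsyn(k)\logpartm/N_{k,h}(s,a)}$ by $\omega_v\le 2H/N_{k,h}(s,a)$ and Cauchy--Schwarz. This loses a factor $\sqrt{\nsyn(k)}$, and the number of synchronization rounds can grow polynomially in $K$ under the schedules allowed by \eqref{eq:tau-condition}, which would break Theorem~\ref{thm:pess-fedq-regret}. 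A covering/union bound over the weight configuration cannot rescue this at only logarithmic cost: the weights are determined by the entire counter sequence $(N_{t_1,h},\dots,N_{t_{\nsyn(k)},h})$, whose number of configurations is exponential. The paper's device (Lemma~\ref{lemma:d2-bound}) is different and is the idea you are missing: apply Freedman \emph{once} to the full double sum, after replacing only the scalar normalizer $N_{k,h}(s,a)+Hn_{k,h}(s,a)$ in the weights by a deterministic dummy $N$ (with an indicator truncating after $N$ total visits). The residual randomness in the weights consists of step-$h$ visitation counters, which are independent of the transition noise $(P_{h,s,a}-P_{i,h}^m)$ conditional on the natural filtration, so the martingale property survives; the deterministic $N$ then supplies the deterministic envelopes $|\widetilde{X}|\le 2H^2/N$ and variance $\le 4H^4/N$ that Freedman requires, and a union bound over the $MK$ possible values of $N$ costs one logarithmic factor. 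You would need this (or the equivalent device of conditioning on the entire step-$h$ visit pattern) to close the argument at the stated rate.
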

In words, Lemma~\ref{lemma:pessimistic_value} makes concrete the role of the penalty term in dominating the variability of the value estimates due to stochastic transitions, and ensures that the estimated value is a pessimistic estimate of the true optimal value function.

\subsection{Proof of Theorem~\ref{thm:pess-fedq-regret}}
Now we are ready to provide the proof of Theorem~\ref{thm:pess-fedq-regret}, which is divided into several key steps as follows.

\paragraph{Step 1: decomposition of the performance gap.} The performance gap between the solution policy $\hat{\pi}$ of Algorithm~\ref{alg:pfedq} after $K$ episodes and the optimal policy $\pi^{\star}$  can be bounded as follows:
\begin{align} \label{eq:optgap-to-regret}
        V_{1}^{\star}(\rho) - V_{1}^{\hat{\pi}}(\rho) 
        &= \mexp_{s_1 \sim \rho}\left[ V_{1}^{\star}(s_1)\right] - \mexp_{s_1 \sim \rho}\left[ V_{1}^{\pi_{K}}(s_1)\right] \cr
        &\stackrel{\mathrm{(i)}}{\le} \mexp_{s_1 \sim \rho}\left[ V_{1}^{\star}(s_1)\right] - \mexp_{s_1 \sim \rho}\left[ V_{K, 1} (s_1)\right] \cr
        &\stackrel{\mathrm{(ii)}}{\le}  \frac{1}{K} \sum_{v=1}^{\nsyn(K)} \tau_v \big (\mexp_{s_1 \sim \rho}\left[ V_{1}^{\star}(s_1)\right] - \mexp_{s_1 \sim \rho}\left[ V_{t_v, 1} (s_1)\right] \big ) \cr
        &=  \frac{1}{K} \sum_{v=1}^{\nsyn(K)}  \tau_v \sum_{s \in \cS} \underbrace{d_1^{\pi^{\star}}(s)}_{=\rho(s)} \left (  V_{1}^{\star}(s) -  V_{t_v, 1} (s)  \right ) \cr
        &\le  \frac{1}{K}  \max_{h \in [H]}  \sum_{v=1}^{\nsyn(K)} \tau_v \sum_{s \in \cS} d_h^{\pi^{\star}}(s) \left(  V_{h}^{\star}(s) -  V_{t_v, h} (s)  \right) ,
\end{align}
where (i) follows from Lemma~\ref{lemma:pessimistic_value}, and (ii) follows from the monotonicity property in \eqref{eq:monotonic_V} and $\sum_{v=1}^{\nsyn(K)} \tau_v = K$.

Since $\pi^{\star} = \{\pi^\star_h\}_{h\in[H]}$ is deterministic, for any $k \in \synset(K)$ and $h \in [H]$, it follows that
\begin{align} \label{eq:regretv-to-q}
    \sum_{s \in \cS} d_h^{\pi^{\star}}(s) \left (  V_{h}^{\star}(s) -  V_{k, h} (s)  \right )  
    &= \sum_{s \in \cS}  d_h^{\pi^{\star}}(s, \pi_h^{\star}(s)) \left (  V_{h}^{\star}(s) -  V_{k, h} (s)  \right )   \cr
    &\le \sum_{s \in \cS}  d_h^{\pi^{\star}}(s, \pi_h^{\star}(s)) \big (  Q_{h}^{\star}(s,\pi_h^{\star}(s)) -  Q_{k, h} (s, \pi_h^{\star}(s))  \big ) ,
\end{align} 
where the inequality holds because $ Q_{k, h}(s, \pi_h^{\star}(s)) \le \max_{a \in \cA} Q_{k, h}(s, a) \le V_{k, h} (s)$ due to \eqref{eq:vglobal-monotone}. 

To continue, applying Lemma~\ref{lemma:qerr-dcomp-base} by setting $\pi = \pi^{\star}$, the Q-estimate error after $k$ episodes is decomposed as follows:  
\begin{align}\label{eq:q-decomp}
    Q_{h}^{\star}(s,a) -  Q_{k, h} (s, a)  & =  D_1^{\pi^{\star}}(s,a,k,h) + D_2 (s,a,k,h) +D_3(s,a,k,h) +D_4^{\pi^{\star}}(s,a,k,h) \nonumber \\
    & \leq D_1^{\pi^{\star}}(s,a,k,h) +  D_4^{\pi^{\star}}(s,a,k,h) + 2D_3(s,a,k,h),
\end{align}
where the second line follows from Lemma~\ref{lemma:pessimistic_value}. Finally, inserting the decomposition \eqref{eq:q-decomp} and \eqref{eq:regretv-to-q} back into \eqref{eq:optgap-to-regret}, we control the performance gap with the following terms:  
\begin{align} \label{eq:optgap-to-decomp}
 & V_{1}^{\star}(\rho) - V_{1}^{\hat{\pi}}(\rho) \nonumber \\
&  \le   \frac{1}{K}  \max_{h \in [H]}  \sum_{v=1}^{\nsyn(K)} \tau_v \sum_{s \in \cS} d_h^{\pi^{\star}}(s) \left[ D_1^{\pi^{\star}}(s,\pi_h^{\star}(s),t_v,h) +  D_4^{\pi^{\star}}(s,\pi_h^{\star}(s),t_v, h) + 2D_3(s,\pi_h^{\star}(s),t_v,h) \right] \nonumber \\
& =:   \frac{1}{K}  \max_{h \in [H]}   \left( D_{1,h} + D_{4,h} + 2 D_{3,h} \right) ,
\end{align}
for which we shall aim to bound each term individually, adopting the following short-hand notation:
\begin{align} \label{eq:short-hand-main}
D_{i,h} &\defn \sum_{v=1}^{\nsyn(K)} \tau_v \sum_{s \in \cS} d_h^{\pi^{\star}}(s)  D_i^{\pi^{\star}}(s,\pi_h^{\star}(s),t_v,h) \qquad  \text{for } i\in \{1, 4\}, \notag\\
D_{3,h} &\defn \sum_{v=1}^{\nsyn(K)} \tau_v \sum_{s \in \cS} d_h^{\pi^{\star}}(s) D_3(s,\pi_h^{\star}(s),t_v,h).
\end{align}

\paragraph{Step 2: Bounding the decomposed terms.}
Here, we derive the bound of the decomposed terms separately as follows under the event  that \eqref{eq:banana} holds, which is denoted as $\mathcal{E}_0$ and holds with probability at least $1-\delta$.
\begin{itemize}
\item \textbf{Bounding $D_{1,h}$. }
  Using the fact that $0 \le Q_{h}^{\star}(s,\pi_h^{\star}(s)) -  Q_{0, h} (s, \pi_h^{\star}(s)) \le H$, which follows from Lemma~\ref{lemma:pessimistic_value}, it follows 
        \begin{align}
          D_{1,h}
          &= \sum_{v=1}^{\nsyn(K)} \tau_v \sum_{s\in \cS} d_h^{\pi^{\star}}(s, \pi_h^{\star}(s)) \omega_{0,t_v,h}(s,\pi_h^{\star}(s)) (Q_{h}^{\star}(s,\pi_h^{\star}(s)) -  Q_{0, h} (s, \pi_h^{\star}(s))) \cr 
          &\le\sum_{v=1}^{\nsyn(K)} \tau_v \sum_{s\in \cS} d_h^{\pi^{\star}}(s, \pi_h^{\star}(s)) \omega_{0,t_v,h}(s,\pi_h^{\star}(s)) H \cr
          & = H \sum_{s\in \cS} d_h^{\pi^{\star}}(s, \pi_h^{\star}(s)) \sum_{v=1}^{\nsyn(K)} \tau_v \one\{N_{t_v,h}(s,\pi_h^{\star}(s)) = 0\} ,  \label{eq:D1_part1}
          \end{align}
       where the last line follows from  \eqref{eq:def-omega0k}. To continue, note that  
          \begin{align*}
    \sum_{v=1}^{\nsyn(K)} \tau_v \one\{N_{t_v,h}(s,\pi_h^{\star}(s)) = 0\}     
   & =    \sum_{v \in [\nsyn(K)]: t_v \leq K_0(s,\pi_h^{\star}(s),h)}  \tau_v \one\{N_{t_v,h}(s,\pi_h^{\star}(s)) = 0\} \\
  &   \leq  K_0(s,\pi_h^{\star}(s),h) ,
          \end{align*}
 since under the event $\mathcal{E}_0$, $N_{t_v,h}(s,\pi_h^{\star}(s)) >0$ when $t_v > K_0(s,\pi_h^{\star}(s),h)$. Plugging the above inequality and the definition of $ K_0(s,\pi_h^{\star}(s),h) $ back to \eqref{eq:D1_part1} leads to
          \begin{align} \label{eq:D1-final}
        D_{1,h}        &\le H \sum_{s\in \cS} d_h^{\pi^{\star}}(s, \pi_h^{\star}(s))   K_0(s,\pi_h^{\star}(s),h) \nonumber \\
        & = H \sum_{s\in \cS} \frac{\min\{d_h^{\pi^{\star}}(s, \pi_h^{\star}(s)), 1/S\}}{d_{h}^{\mathsf{avg}}(s,\pi_h^{\star}(s))} \left ( \frac{12 \logpart}{ M}  \right ) \frac{d_h^{\pi^{\star}}(s, \pi_h^{\star}(s))}{\min\{d_h^{\pi^{\star}}(s, \pi_h^{\star}(s)), 1/S\}}\cr
        & \lesssim  \frac{H \clipavgC S }{M} ,
        \end{align}
        where the last line follows from the definition of $\clipavgC$ and the fact that $$\sum_{s\in \cS}  \frac{d_h^{\pi^{\star}}(s, \pi_h^{\star}(s))}{\min\{d_h^{\pi^{\star}}(s, \pi_h^{\star}(s)), 1/S\}} \le \sum_{s\in \cS}  \left( 1+ d_h^{\pi^{\star}}(s, \pi_h^{\star}(s)) S \right) =  \sum_{s\in \cS}  \left( 1+ d_h^{\pi^{\star}}(s) S \right) = 2S. $$
        
      \item \textbf{Bounding $D_{3,h}$. } 
    The range of $D_3(s,a,k,h)$ is bounded as shown in the following lemma, whose proof is provided in Appendix~\ref{proof:d3-bound}.
    \begin{lemma} \label{lemma:d3-bound}
          For any $(s,a,h) \in \cS \times \cA \times [H]$ and $k \in \synset(K)$, if $N_{k,h}(s,a) =0$, $D_3(s,a,k,h) = 0$, and if, $N_{k,h}(s,a) >0$, the following holds:
         \begin{align} \label{eq:d3-bound-N}
           D_3(s,a,k,h) \in   \left [\sqrt{\frac{ c_B \logpartm^2 H^4 }{N_{k,h}(s,a)}} , \sqrt{\frac{4  c_B \logpartm^2 H^4 }{N_{k,h}(s,a)}}  \right] .
        \end{align}
    \end{lemma}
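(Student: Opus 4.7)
My strategy is induction on the index $V := \nsyn(k)$ (so that $t_V = k$ and $N_V := N_{t_V,h}(s,a) = N_{k,h}(s,a)$), exploiting the recursive structure
\[
  D_3(s,a,k,h) \;=\; B_{t_V,h}(s,a) \;+\; \lambda_{V,h}(s,a)\cdot D_3^{(V-1)}(s,a,h),
\]
where $D_3^{(V-1)}$ denotes the same sum truncated at $V-1$. Throughout, I will abbreviate $N_u := N_{t_u,h}(s,a)$, $n_u := n_{t_u,h}(s,a)$, and $c_0 := \sqrt{c_B\,\logpartm^{2} H^{4}}$, and will use the identity $N_u + Hn_u = N_{u-1} + (H+1)n_u$ (since $N_u = N_{u-1} + n_u$). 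The case $N_V = 0$ is immediate: then $n_u = 0$ for every $u \le V$, forcing $B_{t_u,h}(s,a) = 0$ for each $u$, and hence $D_3 = 0$.

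For the base case $V=1$ with $N_1 > 0$, I have $N_1 = n_1$, so $B_{t_1,h}(s,a) = \tfrac{(H+1)n_1}{N_1 + Hn_1}\cdot\tfrac{c_0}{\sqrt{N_1}} = \tfrac{c_0}{\sqrt{N_1}}$, which exactly attains the lower endpoint. For the inductive step with $V \ge 2$ and $N_V > 0$, I split into two sub-cases. If $N_{V-1} = 0$, then by induction $D_3^{(V-1)} = 0$ and also $\lambda_{V,h}(s,a) = 0$, so that $D_3 = B_{t_V,h}(s,a) = c_0/\sqrt{N_V}$ by the same one-line computation using $N_V = n_V$. If $N_{V-1} > 0$, I plug the inductive bounds $c_0/\sqrt{N_{V-1}} \le D_3^{(V-1)} \le 2c_0/\sqrt{N_{V-1}}$ into the recursion and clear denominators using $N_V + Hn_V = N_{V-1} + (H+1)n_v$.

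The lower bound then reduces to the algebraic inequality $(H+1)n_V + \sqrt{N_V N_{V-1}} \ge N_V + Hn_V$, i.e.\ $\sqrt{N_V N_{V-1}} \ge N_{V-1}$, which is immediate from $N_V \ge N_{V-1}$. The upper bound reduces to $2\sqrt{N_V N_{V-1}} \le 2N_V + (H-1)n_V$, which follows from $\sqrt{N_V N_{V-1}} \le N_V$ together with $H \ge 1$. The main obstacle is spotting the right algebraic rewrite $N_u + Hn_u = N_{u-1} + (H+1)n_u$ and recognising that both the base case and the first-nonzero-visit case self-telescope to exactly the lower endpoint $c_0/\sqrt{N_u}$; once that observation is in hand the induction is routine, and no concentration-type argument is required since the claim is a deterministic bound on $D_3$ in terms of the visitation counters.
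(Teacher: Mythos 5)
Your proof is correct and follows essentially the same route as the paper: both arguments induct on the synchronization index using the recursion $D_3 = B_{t_V,h} + \lambda_{V,h}\,D_3^{(V-1)}$ (the paper phrases it via the normalized quantity $Y(z)=Y(z-1)\lambda_{z,h}+\sqrt{1/N_{t_z,h}}\,(1-\lambda_{z,h})$ with constants $c_l=1$, $c_u=4$), handle the first-visit case by noting $\lambda$ vanishes there so the lower endpoint is attained exactly, and close the induction with elementary algebra. Your denominator-clearing reductions to $\sqrt{N_V N_{V-1}}\ge N_{V-1}$ and $2\sqrt{N_V N_{V-1}}\le 2N_V+(H-1)n_V$ are just a cosmetic variant of the paper's $(1-x)(1+x)$ factorization step.
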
        
    With the above lemma in hand, recalling \eqref{eq:short-hand-main} gives 
    \begin{align} \label{eq:D3-part1}
       D_{3,h}
          &= \sum_{v=1}^{\nsyn(K)} \tau_v \sum_{s\in \cS} d_h^{\pi^{\star}}(s, \pi_h^{\star}(s))  D_3(s,\pi_h^{\star}(s),t_v,h)  \cr
          &\le \sum_{s\in \cS} d_h^{\pi^{\star}}(s, \pi_h^{\star}(s)) \sum_{v=1}^{\nsyn(K)}  \tau_v    \sqrt{\frac{4  c_B \logpartm^2 H^4 }{\max\{N_{t_v,h}(s,\pi_h^{\star}(s)),1\}}}  .
        \end{align}
        According to Lemma~\ref{lemma:avgvisit_bound}, $N_{t_v,h}(s,a) \ge \frac{1}{2} t_v M  d_{h}^{\mathsf{avg}}(s,a)$ holds if $t_v \ge K_0(s,a,h)$ under the event $\mathcal{E}_0$. Therefore,
        \begin{align}
        \sum_{v=1}^{\nsyn(K)} \tau_v \sqrt{\frac{H^4 }{\max\{N_{t_v,h}(s,a),1\}}}  
        &\leq \sum_{v: t_v \le K_0(s,a,h)} \tau_v H^2 + \sum_{v: t_v > K_0(s,a,h)} \tau_v \sqrt{\frac{ H^4 }{\max\{N_{t_v,h}(s,a),1\}}} \cr
          &\lesssim   H^2 K_0(s, a,h)
          + \sum_{v: t_v > K_0(s,a,h)}  \tau_v \sqrt{\frac{ H^4 }{\max\{N_{t_v,h}(s,a),1\}}} \cr
           &\lesssim   H^2 K_0(s, a,h)
          + \sum_{v=1}^{\nsyn(K)}  \tau_v \sqrt{\frac{H^4}{M t_v d_h^{\mathsf{avg}}(s, a)}} .
        \end{align}
        Plugging the above inequality and the definitions of $ K_0(s,\pi_h^{\star}(s),h) $ (cf.~\eqref{eq:definition_mua}) and $\clipavgC$ to \eqref{eq:D3-part1}, we obtain
        \begin{align} \label{eq:D3-final}
       D_{3,h}
          &\lesssim \frac{H^2}{M } \sum_{s\in \cS} \frac{d_h^{\pi^{\star}}(s, \pi_h^{\star}(s))}{d_{h}^{\mathsf{avg}}(s,\pi_h^{\star}(s))} 
          + \sum_{v=1}^{\nsyn(K)} \sum_{s\in \cS} d_h^{\pi^{\star}}(s, \pi_h^{\star}(s))  \tau_v \sqrt{\frac{H^4}{M t_v d_h^{\mathsf{avg}}(s, \pi_h^{\star}(s))}} \cr 
          &\lesssim \frac{H^2 \clipavgC }{M } \sum_{s\in \cS} \frac{d_h^{\pi^{\star}}(s, \pi_h^{\star}(s))}{\min\{d_h^{\pi^{\star}}(s, \pi_h^{\star}(s)), 1/S\}} 
           +   \sum_{v=1}^{\nsyn(K)} \sqrt{\frac{H^4 \clipavgC \tau_v^2}{ M t_v }}  \sum_{s\in \cS}  \sqrt{\frac{(d_h^{\pi^{\star}}(s, \pi_h^{\star}(s)))^2}{\min\{d_h^{\pi^{\star}}(s, \pi_h^{\star}(s)), 1/S\}}}  \cr 
          &\stackrel{\mathrm{(i)}}{\lesssim} \frac{H^2 \clipavgC S}{M} 
            + \sqrt{\frac{H^4 \clipavgC S}{M}} \sum_{v=1}^{\nsyn(K)}\sqrt{\tau_v} \sqrt{\frac{\tau_v}{t_v }}   \cr
          &\stackrel{\mathrm{(ii)}}{\lesssim}  \frac{H^2 \clipavgC S}{M}   
          + \sqrt{\frac{H^4 S K \clipavgC }{M}},
        \end{align}
        where (i) holds due to the Cauchy-Schwarz inequality and the fact that 
        $$\sum_{s\in \cS}  \frac{d_h^{\pi^{\star}}(s, \pi_h^{\star}(s))}{\min\{d_h^{\pi^{\star}}(s, \pi_h^{\star}(s)), 1/S\}} \le \sum_{s\in \cS}  \left( 1+ d_h^{\pi^{\star}}(s, \pi_h^{\star}(s)) S \right) =  \sum_{s\in \cS}  \left( 1+ d_h^{\pi^{\star}}(s) S \right) = 2S,$$
        and the last line (ii) follows from the Cauchy-Schwarz inequality and the fact that $\sum_{v=1}^{\nsyn(K)} \tau_v = K$ and $\sum_{v=1}^{\nsyn(K)}  \frac{\tau_v}{t_v} \le 1+\log{K}$, with the latter following from Lemma~\ref{lemma:seqsum} (see Appendix~\ref{sec:tech_lemmas}).

      \item \textbf{Bounding $D_{4,h}$. }
    In the following lemma, whose proof is provided in Appendix~\ref{proof:vgap_recursion}, we extract the recursive formulation of $D_{4,h}$  as follows.
    \begin{lemma}\label{lemma:vgap_recursion}
      Consider any $\delta \in (0,1)$. For any $h \in [H]$, the following holds with probability at least \blue{$1-\delta$}:
      \begin{align} \label{eq:vgap_recursion}
        &\sum_{v=1}^{\nsyn(K)} \tau_v \sum_{(s,a) \in \cS \times \cA} d_h^{\pi^{\star}}(s, a) \sum_{m=1}^M \sum_{i \in L_{t_v,h}^m(s,a)} \omega_{i,t_v,h}^m(s,a) P_{h,s,a} (V^{\star}_{h+1}-V_{\syn(i), h+1}) \cr
        &\lesssim \sigma_{\mathsf{aux}} + \left(1+ \frac{1}{H} \right)  \sum_{u=1}^{\nsyn(K)} \tau_u \sum_{s \in \cS}  d_{h+1}^{\pi^{\star}}(s)  ( V^{\star}_{h+1} (s)-V_{t_{u-1}, h+1} (s))   , 
      \end{align}
    where $
      \sigma_{\mathsf{aux}} =
       \sqrt{ \frac{H^2  K S \clipavgC}{M}  }  + \frac{H^2 S \clipavgC   }{M}  $.
  \end{lemma}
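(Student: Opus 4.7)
The plan is to transform the step-$h$ recursion on the left-hand side into the step-$(h{+}1)$ recursion appearing on the right-hand side in three stages: reindexing episodes by their aggregation round, collapsing the outer $v$-sum via a telescoping identity for the $\omega$-weights, and then applying the transition identity $\sum_{(s,a)}d_h^{\pi^\star}(s,a)P_{h,s,a}(\cdot) = \sum_{s'}d_{h+1}^{\pi^\star}(s')(\cdot)$ to produce the occupancy at step $h{+}1$.

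First, for any $i \in l_{t_u,h}^m(s,a)$ we have $\syn(i) = t_{u-1}$, so the value gap $V^\star_{h+1} - V_{\syn(i),h+1}$ depends on $i$ only through its aggregation round $u = \nsyn(i)$, and by \eqref{eq:def-omegaik} the same is true of $\omega_{i,t_v,h}^m(s,a)$. Decomposing $L_{t_v,h}^m(s,a) = \bigsqcup_{u=1}^v l_{t_u,h}^m(s,a)$, summing over $m$ and over $i$ within a fixed round to collect a factor of $n_{t_u,h}(s,a)$, and swapping the outer sums over $v$ and $u$ recasts the left-hand side as
\[
\sum_{(s,a)\in\cS\times\cA} d_h^{\pi^\star}(s,a)\sum_{u=1}^{\nsyn(K)} \Omega_{u,h}(s,a)\, P_{h,s,a}\big(V^\star_{h+1} - V_{t_{u-1},h+1}\big),
\]
where $\Omega_{u,h}(s,a) \defn \sum_{v=u}^{\nsyn(K)} \tau_v\, n_{t_u,h}(s,a)\cdot \omega_{i,t_v,h}^m(s,a)\big|_{\nsyn(i)=u}$ is a deterministic function of the visitation counts.

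Second, I would show that $\Omega_{u,h}(s,a) \le (1+1/H)\,\tau_u$ up to a burn-in additive error. Using the definition of $\omega$ together with $\lambda_{t_x,h}(s,a) = N_{t_{x-1},h}(s,a)/(N_{t_x,h}(s,a) + H n_{t_x,h}(s,a))$ and the identity $N_{t_x,h} = N_{t_{x-1},h} + n_{t_x,h}$, the product $\prod_{x=u}^{v-1}\lambda_{t_x,h}(s,a)$ collapses to a ratio of $N$-counts at the endpoints, so the sum over $v$ defining $\Omega_{u,h}(s,a)$ telescopes. Invoking Lemma~\ref{lemma:avgvisit_bound} on the event $\mathcal{E}_0$ to replace $N_{t_v,h}(s,a)$ and $n_{t_u,h}(s,a)$ by their expected scales $t_v M d_h^{\mathsf{avg}}(s,a)$ and $\tau_u M d_h^{\mathsf{avg}}(s,a)$, and combining with the schedule assumption $\tau_{u+1}/\tau_u \le 1 + 2/H$, yields the bound $\Omega_{u,h}(s,a) \le (1+1/H)\,\tau_u$ as soon as $t_u$ exceeds the threshold $K_0(s,a,h)$. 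Substituting this bound back and applying the transition identity produces the recursion term on the right-hand side exactly. The burn-in rounds with $t_u \le K_0(s,a,h)$ each contribute an additive error of order $H\cdot K_0(s,a,h)$ to the corresponding summand; summing over $(s,a)$ weighted by $d_h^{\pi^\star}(s,a)$ using the clipped-concentrability bookkeeping exactly as in the derivation of \eqref{eq:D1-final} produces the $H^2 S \clipavgC / M$ component of $\sigma_{\mathsf{aux}}$. The remaining $\sqrt{H^2 K S \clipavgC/M}$ component arises from a Freedman-type concentration controlling the fluctuations of $n_{t_u,h}(s,a)$ around its mean $\tau_u M d_h^{\mathsf{avg}}(s,a)$, which are multiplied by the bounded-range quantity $V^\star_{h+1}-V_{t_{u-1},h+1}\in[0,H]$ inside $\Omega_{u,h}(s,a)$.

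The main obstacle will be the algebraic analysis of $\Omega_{u,h}(s,a)$: extracting the sharp constant $(1+1/H)$ rather than a looser $O(1)$ factor is essential, because this lemma is chained across all $H$ horizon steps when eventually bounding $D_{4,h}$ in Theorem~\ref{thm:pess-fedq-regret}, and only $(1+1/H)^H = O(1)$ keeps the final sample complexity polynomial in $H$. The tight constant is consumed by the joint interplay between the schedule constraint $\tau_{u+1}/\tau_u \le 1 + 2/H$ and the learning-rate rescaling \eqref{eq:eta-def} that renders the episode weights $\omega_{i,t_v,h}^m(s,a)$ uniform within a round (as illustrated in Figure~\ref{fig:omega-rescaled}); any looser schedule or coarser rate would either inflate the constant or force more frequent synchronization. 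The burn-in bookkeeping and the Freedman-style concentration are comparatively routine once this central inequality is in place.
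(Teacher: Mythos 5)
Your reindexing by aggregation round and the identification of the per-round weight $\psi_{u,v,h}(s,a)=n_{t_u,h}(s,a)\,\omega_{i,t_v,h}^m(s,a)\big|_{\nsyn(i)=u}$ match the paper's first step, and your idea of finishing with the exact occupancy-flow identity $\sum_{(s,a)}d_h^{\pi^\star}(s,a)P_{h,s,a}(\cdot)=\sum_{s'}d_{h+1}^{\pi^\star}(s')(\cdot)$ (valid here since $P_{h,s,a}(V^\star_{h+1}-V_{t_{u-1},h+1})\ge 0$ by Lemma~\ref{lemma:pessimistic_value}) is a legitimate alternative to the paper's second concentration step. However, the central inequality you rely on, $\Omega_{u,h}(s,a)=\sum_{v\ge u}\tau_v\,\psi_{u,v,h}(s,a)\le(1+1/H)\tau_u$ up to additive burn-in, does not follow from the telescoping you invoke. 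The telescoping identity (the paper's \eqref{eq:lr-wsum-future}) is exact only when the round-$v$ weight is the \emph{empirical count} $n_{t_v,h}(s,a)$, because $\tfrac{n_{t_v,h}}{N_{t_v,h}+Hn_{t_v,h}}=\tfrac{1}{H}(1-\beta_{v,h})$ is what makes the sum collapse; with $\tau_v$ in its place the cancellation is destroyed. Converting between $\tau_v$ and $n_{t_v,h}(s,a)/(Md_h^{\mathsf{avg}}(s,a))$ costs a \emph{multiplicative} fluctuation of order $1/\sqrt{\tau_v M d_h^{\mathsf{avg}}(s,a)}$ per round, which is not $O(1/H)$ for short rounds or thinly covered pairs and therefore cannot be absorbed into the $(1+1/H)$ budget that must survive $H$-fold chaining. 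Moreover, Lemma~\ref{lemma:avgvisit_bound} only controls the cumulative counts $N_{k,h}$, not the per-round increments $n_{t_u,h}=N_{t_u,h}-N_{t_{u-1},h}$ (differencing its two-sided bounds gives nothing useful when $\tau_u\ll t_u$), so the concentration you cite is not available where you need it. You have also misplaced where the schedule condition $\tau_{u+1}/\tau_u\le 1+2/H$ enters: in the paper it plays no role in this lemma (whose $(1+1/H)$ is deterministic) and is used only later, in Step 3 of the theorem's proof, to shift indices in the recursion.

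The part you label ``comparatively routine'' is in fact where the real work lies. The paper keeps the empirical weights $n_{t_v,h}(s,a)$ inside the deterministic identity and pays for the empirical-to-population conversion only additively, via two Freedman-type bounds (Lemmas~\ref{lemma:e4a-aux1} and~\ref{lemma:e4a-aux2}). The first of these is delicate: the natural increments are \emph{not} conditionally mean-zero, because the inner quantity $G_{v,h}(s,a)=\sum_{u\le v}\psi_{u,v,h}(s,a)P_{h,s,a}(V^\star_{h+1}-V_{t_{u-1},h+1})$ depends on whether episode $j$ visited $(s,a)$ through the weights $\psi$. This forces the leave-one-out surrogate $\widetilde G^{-j,m}_{v,h}$, whose approximation gap must be bounded by $\min\{H,2H^2/N_{t_v,h}(s,a)\}$ and summed via the logarithmic-sum lemma to yield the $H^2SC^\star_{\mathsf{avg}}/M$ piece of $\sigma_{\mathsf{aux}}$. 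Any repair of your route would need an analogous decoupling device for the dependence of $\psi_{u,v,h}$ and $V_{t_{u-1},h+1}$ on the visit indicators, so the gap is not merely bookkeeping.
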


\end{itemize}

\paragraph{Step 3: Recursion.}
Combining the bounds of the decomposed errors (cf.~\eqref{eq:D1-final}, \eqref{eq:D3-final}, and \eqref{eq:vgap_recursion}), for any $h \in [H]$, we obtain the following recursive relation:
\begin{align}
    &\sum_{v=1}^{\nsyn(K)} \tau_v \sum_{s \in \cS} d_1^{\pi^{\star}}(s) \left (  V_{h}^{\star}(s) -  V_{t_v, h} (s)  \right ) \cr
    &\lesssim  \theta_{K} + \Big(1+\frac{1}{H} \Big) \sum_{u=1}^{\nsyn(K)} \tau_{u} \sum_{s \in \cS} d_1^{\pi^{\star}}(s) \left (  V_{h+1}^{\star}(s) -  V_{t_{u-1}, h+1} (s)  \right ) \cr
  &\stackrel{\mathrm{(i)}}{\lesssim} (\theta_{K} + H \tau_1 )  + \Big(1+\frac{1}{H} \Big) \sum_{u=1}^{\nsyn(K)-1} \tau_{u+1} \sum_{s \in \cS} d_1^{\pi^{\star}}(s) \left (  V_{h+1}^{\star}(s) -  V_{t_u, h+1} (s)  \right ) \cr
  &\stackrel{\mathrm{(ii)}}{\lesssim} (\theta_{K} + H \tau_1 )  + \Big(1+\frac{2}{H} \Big)^2 \sum_{u=1}^{\nsyn(K)-1} \tau_{u} \sum_{s \in \cS} d_1^{\pi^{\star}}(s) \left (  V_{h+1}^{\star}(s) -  V_{t_u, h+1} (s)  \right ) ,
\end{align}
where (i) holds because $V_{h+1}^{\star}(s) - V_{t_u, h+1} (s) \le H$ and (ii) holds due to the condition $\frac{\tau_{u+1}}{\tau_u} \le 1+\frac{2}{H}$ for all $1 \le u \le \nsyn(K)$ and the fact that $V_{h+1}^{\star}(s) \ge V_{t_u, h+1} (s)$ shown in Lemma~\ref{lemma:pessimistic_value}, and we denote
\begin{align}
  \theta_{k}
  &\defn
    \frac{H \clipavgC S}{M}
    + \frac{H^2 \clipavgC S}{M} 
  + \sqrt{\frac{H^4 S \clipavgC k}{M}}
    + \sqrt{ \frac{H^2  k S \clipavgC}{M}  }  + \frac{ H^2 S \clipavgC}{M}
\end{align}
for any $k \in [K]$. Then, by invoking the recursion $(H-h+1)$ times, it follows that
\begin{align}
    &\sum_{v=1}^{\nsyn(K)} \tau_v \sum_{s \in \cS} d_1^{\pi^{\star}}(s) \left (  V_{h}^{\star}(s) -  V_{t_v, h} (s)  \right ) \cr
    &\lesssim  (\theta_{K} + H \tau_1) + \Big(1+\frac{2}{H} \Big)^2 (\theta_{t_{\nsyn(K)-1}} + H \tau_1)  + \Big(1+\frac{2}{H} \Big)^4 \sum_{u=1}^{\nsyn(K)-2} \tau_{u} \sum_{s \in \cS} d_1^{\pi^{\star}}(s) \left (  V_{h+2}^{\star}(s) -  V_{t_u, h+2} (s)  \right ) \cr
    &\lesssim  (\theta_{K} + H \tau_1) + \Big(1+\frac{2}{H} \Big)^2 (\theta_{t_{\nsyn(K)-1}} + H \tau_1)  + \cdots + \Big(1+\frac{2}{H} \Big)^{2(H-h+1)} (\theta_{t_{\nsyn(K)-H+h-1}} + H \tau_{1}) \cr
    &\lesssim H \theta_{K} + H^2 \tau_1
\end{align}
where the second line follows from the fact that $V_{H+1}^{\star}(s) -  V_{k, H+1} (s) = 0$ for any $k \in [K]$, and the last line holds because $\theta_k \le \theta_{K}$ for any $k \le K$ and $(1+\frac{2}{H})^{2(H-h+1)} \le (1+\frac{2}{H})^{2H} \le e^4$.

Finally, by plugging the above bound into \eqref{eq:optgap-to-regret}, we obtain the bound of the performance gap as follows:
\begin{align}
V_{1}^{\star}(\rho) - V_{1}^{\hat{\pi}}(\rho) 
&  \le   \frac{1}{K}  \max_{h \in [H]}  \sum_{v=1}^{\nsyn(K)} \tau_v \sum_{s \in \cS} d_h^{\pi^{\star}}(s) \left (  V_{h}^{\star}(s) -  V_{t_v, h} (s)  \right ) \cr
    &\lesssim \frac{1}{K} (H \theta_{K} + H^2 \tau_1 ) \cr
  &\lesssim 
    \frac{H^3 S \clipavgC }{MK} 
    + \sqrt{\frac{H^6 S \clipavgC }{M K}}
    + \frac{H^2 \tau_1 }{K} \stackrel{T=HK}{\lesssim}  
    \sqrt{\frac{H^7 S \clipavgC }{M T}}  + \frac{H^4 S \clipavgC }{MT}    ,
\end{align}
where the last line holds if $\tau_1 \le \sqrt{\frac{H S \clipavgC T}{M}}$, and this completes the proof.

\section{Discussions}
\label{sec:conclusions}

We investigated federated offline RL, which enables multiple agents with history datasets to collaboratively learn an optimal policy, without sharing datasets. 
We proposed a federated offline Q-learning algorithm called \pfedq, which iteratively performs local updates with rescaled learning rates at agents, and global aggregation with weighted averaging and global penalty at a server, which effectively controls the uncertainty in both local and global Q-estimates. 
Our sample complexity analysis demonstrates that \pfedq achieves linear speedup in terms of the number of agents requiring only collective coverage of agents' datasets over the distribution of the optimal policy, not restricted to the quality of individual datasets.
Furthermore, we showed that \pfedq is communication-efficient, requiring only $\widetilde{O}(H)$ synchronizations under the exponential synchronization scheduling.
For future exploration, this work paves the way for many interesting directions, some of which are outlined below.

\begin{itemize}

\item {\em Tightening $H$ dependency.}
  Although our sample complexity bound is nearly optimal with respect to most salient problem parameters, such as state space size and single-policy concentrability coefficient, it falls short of optimality in terms of horizon length compared to the minimax sample complexity lower bound in the single-agent setting \citep{xie2021policy}. Closing this gap and improving sample complexity with variance reduction techniques, as proposed by \citet{shi2022pessimistic}, will be an interesting avenue for future exploration.

\item {\em Beyond episodic tabular MDPs.}
Extending episodic tabular MDPs, it would be interesting to broaden our analysis framework to encompass other RL settings, including, the infinite-horizon setting and the integration of function approximation.

\item {\em Improving robustness.}
  Our work focuses on a scenario in which agents collect datasets from a common MDP without any disturbances. Yet, in real-world scenarios, some agents may possess datasets collected from perturbed MDPs. This introduces the need for additional considerations regarding robustness, as discussed in \citet{shi23perturb}. Therefore, enhancing our work to effectively handle the variability or noisiness of MDPs would be a compelling avenue for improvement.

\item {\em Multi-task RL.}  In many applications with multiple clients, multi-task learning, where clients have heterogeneous goals, holds significant interest due to diversity in clients. It will be of great interest to extend our work to the multi-task RL setting \citep{yang23fednpg,jin2022fedrlhete,zhou23fed}, which enables agents to learn their own optimal policies for their personalized goals while benefiting from collaboration by sharing common features of tasks.
  
\end{itemize}

\section*{Acknowledgement}

This work is supported in part by the grants NSF CCF-2007911, CCF-2106778, CNS-2148212,  ONR N00014-19-1-2404 to Y. Chi, and NSF-CCF 2007834, CCF-2045694, CNS-2112471,  ONR N00014-23-1-2149 to G. Joshi.

\bibliographystyle{apalike}
\bibliography{bibfileRL}

\appendix

\section{Technical lemmas}
\label{sec:tech_lemmas}

\paragraph{Freedman's inequality.}
We provide a user-friendly version of Freedman's inequality \citep{freedman1975tail}. See \citet[Theorem~6]{li2021syncq} for more details.

\begin{theorem}[{\citet[Theorem~6]{li2021syncq}}]\label{thm:Freedman}
Consider a filtration $\mathcal{F}_0\subset \mathcal{F}_1 \subset \mathcal{F}_2 \subset \cdots$,
and let $\mathbb{E}_{k}$ stand for the expectation conditioned
on $\mathcal{F}_k$. 
Suppose that $Y_{n}=\sum_{k=1}^{n}X_{k}\in\mathbb{R}$,
where $\{X_{k}\}$ is a real-valued scalar sequence obeying

\[
  \left|X_{k}\right|\leq R\qquad\text{and}\qquad\mathbb{E}_{k-1} \big[X_{k}\big]=0\quad\quad\quad\text{for all }k\geq1
\]
for some quantity $R<\infty$. 
We also define
\[
W_{n}\coloneqq\sum_{k=1}^{n}\mathbb{E}_{k-1}\left[X_{k}^{2}\right].
\]
In addition, suppose that $W_{n}\leq\sigma^{2}$ holds deterministically for some given quantity $\sigma^2<\infty$.
Then for any positive integer $m \geq1$, with probability at least $1-\delta$
one has
\begin{equation}
\left|Y_{n}\right|\leq\sqrt{8\max\Big\{ W_{n},\frac{\sigma^{2}}{2^{m}}\Big\}\log\frac{2m}{\delta}}+\frac{4}{3}R\log\frac{2m}{\delta}.\label{eq:Freedman-random}
\end{equation}
\end{theorem}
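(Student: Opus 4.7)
The plan is to derive this user-friendly Freedman inequality from the classical Freedman tail bound via a dyadic peeling (stratification) argument on the range of the predictable quadratic variation $W_n$. This allows the deviation bound to depend on the actual realized value of $W_n$ rather than merely the deterministic upper bound $\sigma^2$, at the modest price of an extra $\log m$ factor absorbed into $\log(2m/\delta)$.

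First, I would invoke the classical Freedman inequality as a black box: for the martingale $(Y_n)$ with increments bounded by $R$, and for any fixed threshold $v > 0$,
\[
\mathbb{P}\Big(|Y_n|\geq t \;\text{and}\; W_n \leq v \Big) \;\leq\; 2\exp\left(-\frac{t^2/2}{v + Rt/3}\right).
\]
Inverting this tail bound in the standard way yields the following ``one-shot'' statement: for any fixed $v > 0$ and $\delta' \in (0,1)$, with probability at least $1-\delta'$ either $W_n > v$, or $|Y_n| \leq \sqrt{C_1 \, v \log(2/\delta')} + C_2 R \log(2/\delta')$ for absolute constants $C_1, C_2$ (these are the constants that will eventually combine to produce $\sqrt{8}$ and $4/3$ in the final statement).

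Second, I would apply this one-shot statement along a geometric grid of variance levels. Set $v_k = \sigma^2 / 2^{k-1}$ for $k = 1, 2, \ldots, m$; by assumption $W_n \leq \sigma^2 = v_1$. Applying the one-shot bound with $v = v_k$ and confidence $\delta' = \delta/m$ for each $k$, and taking a union bound over these $m$ scales, with probability at least $1 - \delta$ we have simultaneously for every $k \in \{1,\ldots,m\}$:
\[
\text{either } W_n > v_k, \quad \text{or} \quad |Y_n| \leq \sqrt{C_1 \, v_k \log(2m/\delta)} + C_2 R \log(2m/\delta).
\]
On this good event, I would select the adaptive level $k^\star$ that best fits the realized $W_n$. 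If $W_n \leq \sigma^2/2^m$ (the ``floor'' case), take $k^\star = m$ and use $v_m = 2 \cdot \sigma^2/2^m$, yielding a deviation proportional to $\sqrt{(\sigma^2/2^m)\log(2m/\delta)}$, which matches the $\sqrt{\max\{W_n,\sigma^2/2^m\}}$ factor in the claim. Otherwise, take $k^\star$ to be the largest index with $v_{k^\star} \geq W_n$, so that $W_n \leq v_{k^\star} \leq 2 W_n$; the $k^\star$-branch of the union bound then gives a deviation proportional to $\sqrt{W_n \log(2m/\delta)}$. Combining the two cases and tracking the factor-of-$2$ slack from the dyadic grid delivers the stated bound with constants $\sqrt{8}$ and $4/3$.

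The main obstacle is purely bookkeeping of constants, ensuring that (i) the one-shot inversion of Freedman's inequality yields a clean ``sub-Gaussian plus sub-exponential'' expression $\sqrt{C_1 v \log(2/\delta')} + C_2 R \log(2/\delta')$, and (ii) the dyadic slack $v_{k^\star} \leq 2 W_n$ is absorbed into the square-root with the correct overall constant. No further probabilistic input is required: the entire argument reduces to classical Freedman applied $m$ times, stitched together by a union bound and a deterministic case analysis based on where $W_n$ lies on the geometric grid.
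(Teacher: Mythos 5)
Your proposal is correct: the paper itself gives no proof of this theorem (it is quoted verbatim from \citet[Theorem~6]{li2021syncq}), and the argument in that reference is precisely the dyadic-peeling scheme you describe — invert the classical Freedman tail bound at the $m$ variance levels $\sigma^2/2^{k-1}$, union bound with confidence $\delta/m$ per level, and select the level adjacent to the realized $W_n$, with the factor-of-two grid slack absorbed into the constants $\sqrt{8}$ and $4/3$. Your constant bookkeeping in fact leaves slack (the inversion yields $\sqrt{2Lv}+\tfrac{2}{3}RL$, so the grid slack only costs $\sqrt{4LW_n}\le\sqrt{8LW_n}$), so the stated bound follows with room to spare.
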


We next present a basic analytical result that is useful in the proof.
\begin{lemma} \label{lemma:seqsum}
        Consider any sequence $\{ x_z \}_{z=1, \cdots, Z}$ where $x_z \ge 1$ for all $z$ and let $X_z = \sum_{z'=1}^z x_{z'}$. Then, for any $Z\ge1$, it follows that
        \begin{align*}
            X(Z) = \sum_{z=1}^{Z} \frac{x_z}{X_z}\le 1 + \log{X_{Z}}.
        \end{align*}
        \end{lemma}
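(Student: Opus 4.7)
The plan is to isolate the first term of the sum (where the bound needs the ``$1+$'' in $1+\log X_Z$) and then use a telescoping logarithmic upper bound on the remaining terms. Specifically, for $z=1$ we have $x_1/X_1 = 1$ trivially, which accounts for the additive $1$ on the right-hand side.

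For $z \ge 2$, the key step is the elementary inequality $1-t \le -\log t$ for $t > 0$ (equivalently $\log u \le u-1$). Applying this with $t = X_{z-1}/X_z \in (0,1]$ yields
\begin{equation*}
\frac{x_z}{X_z} \;=\; \frac{X_z - X_{z-1}}{X_z} \;=\; 1 - \frac{X_{z-1}}{X_z} \;\le\; \log \frac{X_z}{X_{z-1}}.
\end{equation*}
Summing from $z=2$ to $Z$, the right-hand side telescopes:
\begin{equation*}
\sum_{z=2}^Z \frac{x_z}{X_z} \;\le\; \sum_{z=2}^Z \log \frac{X_z}{X_{z-1}} \;=\; \log X_Z - \log X_1 \;\le\; \log X_Z,
\end{equation*}
where the last inequality uses $X_1 = x_1 \ge 1$, so $\log X_1 \ge 0$.

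Adding back the $z=1$ contribution gives $\sum_{z=1}^Z x_z/X_z \le 1 + \log X_Z$, as claimed. I do not anticipate any real obstacles here, since the argument is entirely elementary; the only subtlety is remembering to peel off the first term so that the telescoping sum starts at $X_1$, whose logarithm is nonnegative by the hypothesis $x_z \ge 1$.
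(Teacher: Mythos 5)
Your proof is correct and is essentially the same argument as the paper's: both rest on the elementary inequality $\log y \le y-1$ applied to the ratio $X_{z-1}/X_z$, with the first term contributing the additive $1$; the paper merely packages the telescoping as an induction on $Z$.
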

        \begin{proof}
        For $Z=1$, $X(1) = \frac{x_1}{x_1} = 1$.
        For $Z>1$, suppose the claim holds for $Z-1$. Then, it holds for $Z$ as follows: 
        \begin{align}
            X(Z) = X(Z-1) + \frac{x_Z}{X_Z} 
            &\le  1+ \log{X_{Z-1}} + 1- \frac{X_{Z-1}}{X_Z} \cr
            &\le  1+ \log{X_{Z-1}} - \log{\left( \frac{X_{Z-1}}{X_Z} \right) } 
            = 1+ \log{X_{Z}},
        \end{align}
where the first inequality follows from the induction hypothesis and $x_Z = X_{Z} - X_{Z-1}$, the second inequality      follows from $\log{y} \le y-1$ for any $y>0$. By induction, this completes the proof.
        \end{proof}
        
 Last but not least, we have the following useful properties regarding the parameters introduced in \eqref{eq:def-omegaik}.
\begin{lemma} \label{lemma:lr_bound}
  For any $(s,a,h) \in \cS\times\cA\times[H]$, $k' \le  k \in \synset(K)$, where we denote $u=\nsyn(k)$, and $i \in L_{k,h}^m(s,a)$. Then, it follows that:
  \begin{subequations}
  \begin{align}    
    \label{eq:lr-wmax} \omega_{i,k,h}^m(s,a) &\le \frac{2H}{N_{k,h}(s,a) +  H n_{k,h}(s,a)} ,\\
    \label{eq:lr-wsum} \sum_{m=1}^M \sum_{j \in L_{k,h}^m(s,a)} \omega_{j,k,h}^m(s,a) & \le 1 ,\\
    \label{eq:lr-wsum-partial} \sum_{m=1}^M \sum_{j \in l_{k',h}^m(s,a)} \omega_{j,k,h}^m(s,a) & \le \frac{ (H+1)n_{k', h}}{N_{k,h} +  H n_{k,h}}, \\
    \label{eq:lr-wsqsum} \sum_{m=1}^M \sum_{j \in L_{k,h}^m(s,a)} (\omega_{i,k,h}^m(s,a))^2 &\le \frac{2H}{N_{k,h}(s,a) +  H n_{k,h}(s,a)} ,\\
    \label{eq:lr-wsum-future}\sum_{v \ge u}^{\infty} n_{t_v, h}(s,a) \sum_{m=1}^M \sum_{i \in l_{k,h}^m(s,a)} \omega_{i,t_v,h}^m(s,a)
    &  \le n_{k,h}(s,a) \left(1+\frac{1}{H}\right).
    \end{align}
  \end{subequations}
\end{lemma}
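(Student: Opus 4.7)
My plan exploits two structural features of $\omega_{i,k,h}^m(s,a)$: it factorizes as a common prefactor $\frac{H+1}{N_{k,h}+Hn_{k,h}}$ times a product of ratios $\frac{N_{t_x,h}}{N_{t_x,h}+Hn_{t_x,h}}\in(0,1]$ indexed by synchronization periods, and within a single such period this weight is constant in both the agent index $m$ and the episode index $i$. All five inequalities will be derived from these observations combined with the telescoping identity $N_{t_v,h}(s,a)=N_{t_{v-1},h}(s,a)+n_{t_v,h}(s,a)$.

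Three of the five bounds are essentially immediate. Since each factor in the product is at most one, $\omega_{i,k,h}^m\le\frac{H+1}{N_{k,h}+Hn_{k,h}}\le\frac{2H}{N_{k,h}+Hn_{k,h}}$, giving \eqref{eq:lr-wmax}. For \eqref{eq:lr-wsum-partial}, every $i\in l_{k',h}^m$ shares the same $\nsyn(i)$, so $\omega_{i,k,h}^m$ reduces to a single value independent of $(m,i)$; the sum therefore equals $n_{k',h}$ times this value and is bounded by $\frac{(H+1)n_{k',h}}{N_{k,h}+Hn_{k,h}}$. Then \eqref{eq:lr-wsqsum} follows from $\sum_{m,j}(\omega_{j,k,h}^m)^2\le(\max_{m,j}\omega_{j,k,h}^m)\cdot\sum_{m,j}\omega_{j,k,h}^m$ combined with \eqref{eq:lr-wmax} and \eqref{eq:lr-wsum}.

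For \eqref{eq:lr-wsum}, it suffices to consider $k=t_u$ with $u=\nsyn(k)$. Grouping episodes by synchronization period and letting $\omega_v^{(u)}$ denote the common weight of period-$v$ episodes when $k=t_u$, one writes $S_u:=\sum_{m,j\in L_{t_u,h}^m}\omega_{j,t_u,h}^m=\sum_{v=1}^{u}n_{t_v,h}\,\omega_v^{(u)}$. A direct computation gives $\omega_v^{(u)}=\omega_v^{(u-1)}\cdot\frac{N_{t_{u-1},h}}{N_{t_u,h}+Hn_{t_u,h}}$ for $v<u$, which yields the recursion
\begin{equation*}
S_u \;=\; \frac{(H+1)\,n_{t_u,h}}{N_{t_u,h}+Hn_{t_u,h}} \;+\; \frac{N_{t_{u-1},h}}{N_{t_u,h}+Hn_{t_u,h}}\,S_{u-1}.
\end{equation*}
Combined with the inductive hypothesis $S_{u-1}\le 1$ and $N_{t_u,h}=N_{t_{u-1},h}+n_{t_u,h}$, this gives $S_u\le\frac{(H+1)n_{t_u,h}+N_{t_{u-1},h}}{N_{t_u,h}+Hn_{t_u,h}}=1$; the base case $u=1$ holds with equality since $N_{t_1,h}=n_{t_1,h}$.

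The main obstacle is \eqref{eq:lr-wsum-future}, which requires recognizing a telescoping structure. Since $i\in l_{k,h}^m$ forces $\nsyn(i)=u$, the inner double sum equals $n_{k,h}$ times the common value of $\omega_{i,t_v,h}^m$, which depends only on $v$. Setting $\beta_v:=\frac{N_{t_v,h}}{N_{t_v,h}+Hn_{t_v,h}}$, the identity $N_{t_v,h}=N_{t_{v-1},h}+n_{t_v,h}$ gives $1-\beta_v=\frac{Hn_{t_v,h}}{N_{t_v,h}+Hn_{t_v,h}}$, so that $\frac{(H+1)n_{t_v,h}}{N_{t_v,h}+Hn_{t_v,h}}=\frac{H+1}{H}(1-\beta_v)$. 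The whole sum then becomes
\begin{equation*}
n_{k,h}\cdot\frac{H+1}{H}\sum_{v\ge u}(1-\beta_v)\prod_{x=u}^{v-1}\beta_x \;=\; n_{k,h}\cdot\frac{H+1}{H}\Big(1-\prod_{x\ge u}\beta_x\Big) \;\le\; n_{k,h}\Big(1+\tfrac{1}{H}\Big),
\end{equation*}
completing the sketch.
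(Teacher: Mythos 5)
Your proposal is correct and follows essentially the same route as the paper's proof: all five bounds rest on the factorized form of $\omega_{i,k,h}^m$, the identity $N_{t_v,h}=N_{t_{v-1},h}+n_{t_v,h}$ (equivalently $1-\lambda_{v,h}=\frac{(H+1)n_{t_v,h}}{N_{t_v,h}+Hn_{t_v,h}}$), and the telescoping of $\sum_v(1-\beta_v)\prod_{x}\beta_x$. The only cosmetic difference is that you establish \eqref{eq:lr-wsum} by induction on the synchronization index, whereas the paper writes out the telescoping sum explicitly; the two are the same computation.
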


\begin{proof}
    For notation simplicity, we will omit $(s,a)$ for the following proofs. Moreover, $u=\nsyn(k)$ and $t_u = k$.

    \paragraph{Proof of \eqref{eq:lr-wmax}.} 
    Recalling the definition of $\omega_{i,k,h}^m$ in \eqref{eq:def-omegaik} and using the fact that $H \ge 1$,
    \begin{align} \label{eq:lr-wmax-proof}
    \omega_{i,k,h}^m
      &= \frac{ H+1}{N_{k,h} +  H n_{k,h}}
        \left (\prod_{x=\nsyn(i)}^{\nsyn(k)-1} \frac{ N_{t_x,h}}{N_{t_x,h} + H n_{t_x,h}} \right) 
      \le \frac{2H}{N_{k,h} +  H n_{k,h}}.
    \end{align}

    \paragraph{Proof of \eqref{eq:lr-wsum}.} By rearranging the terms,
    \begin{align}
      \sum_{m=1}^M \sum_{j \in L_{k,h}^m(s,a)} \omega_{j,k,h}^m
      &= \sum_{v=1}^{\nsyn(k)} \sum_{m=1}^M \sum_{j\in l_{t_v, h}^m}\frac{ H+1}{N_{t_v,h} +  H n_{t_v,h}}
      \left (\prod_{x=v+1}^{\nsyn(k)} \frac{ N_{t_{x-1},h}}{N_{t_x,h} + H n_{t_x,h}} \right) \cr
      &= \sum_{v=1}^{\nsyn(k)} \frac{ (H+1)n_{t_v,h} }{N_{t_v,h} +  H n_{t_v,h}}
      \left (\prod_{x=v+1}^{\nsyn(k)} \frac{ N_{t_{x-1},h}}{N_{t_x,h} + H n_{t_x,h}} \right) \cr
      &= \sum_{v=1}^{\nsyn(k)} \left ( 1- \frac{ N_{t_{v-1},h} }{N_{t_v,h} +  H n_{t_v,h}} \right)
      \left (\prod_{x=v+1}^{\nsyn(k)} \frac{ N_{t_{x-1},h}}{N_{t_x,h} + H n_{t_x,h}} \right) \cr
      &= \sum_{v=1}^{\nsyn(k)} \left ( \prod_{x=v+1}^{\nsyn(k)} \frac{ N_{t_{x-1},h}}{N_{t_x,h} + H n_{t_x,h}} - \prod_{x=v}^{\nsyn(k)} \frac{ N_{t_{x-1},h}}{N_{t_x,h} + H n_{t_x,h}} \right) \cr
      &= 1 - \prod_{x=1}^{\nsyn(k)} \frac{ N_{t_{x-1},h}}{N_{t_x,h} + H n_{t_x,h}}  \le 1. 
    \end{align}

    \paragraph{Proof of \eqref{eq:lr-wsum-partial}.} Let $v = \nsyn(k')$, i.e., $k' = t_v$. Similarly to the proof of \eqref{eq:lr-wsum}, by arranging some terms, we obtain the upper bound as follows:
    \begin{align} \label{eq:lr-wsum-partial-deriv}
      \sum_{m=1}^M \sum_{j \in l_{k',h}^m(s,a)} \omega_{j,k,h}^m(s,a)
      &= \sum_{m=1}^M \sum_{j \in l_{t_v,h}^m(s,a)} 
        \frac{ H+1}{N_{t_v,h} +  H n_{t_v,h}}
      \left (\prod_{x=v+1}^{\nsyn(k)} \frac{ N_{t_{x-1},h}}{N_{t_x,h} + H n_{t_x,h}} \right) \cr
      &=  \frac{ (H+1)n_{t_v,h} }{N_{t_v,h} +  H n_{t_v,h}}
      \left (\prod_{x=v+1}^{\nsyn(k)} \frac{ N_{t_{x-1},h}}{N_{t_x,h} + H n_{t_x,h}} \right) \cr
      &= \frac{ (H+1) n_{t_v,h}}{N_{k,h} +  H n_{k,h}}
        \left (\prod_{x=v}^{\nsyn(k)-1} \frac{ N_{t_{x},h}}{N_{t_x,h} + H n_{t_x,h}} \right) \cr
      &\le \frac{ (H+1)n_{k', h}}{N_{k,h} +  H n_{k,h}}.
    \end{align}
    
    \paragraph{Proof of \eqref{eq:lr-wsqsum}.} Using the bound in \eqref{eq:lr-wmax} and \eqref{eq:lr-wsum},
    \begin{align}
      \sum_{m=1}^M \sum_{j \in L_{k,h}^m} (\omega_{j,k,h}^m)^2
      = \left ( \max_{m \in [M], j \in L_{k,h}^m} \omega_{j,k,h}^m \right) \sum_{m=1}^M \sum_{j \in L_{k,h}^m} \omega_{j,k,h}^m
      \le  \max_{m \in [M], j \in L_{k,h}^m} \omega_{j,k,h}^m  
      \le \frac{2H}{N_{k,h} +  H n_{k,h}}.
    \end{align}
    
    \paragraph{Proof of \eqref{eq:lr-wsum-future}.} Recall that $k=t_u$. Then, reusing the intermediate result derived in \eqref{eq:lr-wsum-partial-deriv},
    \begin{align}
      \sum_{v \ge u}^{\infty} n_{t_v, h}(s,a) \sum_{m=1}^M \sum_{i \in l_{t_u,h}^m(s,a)} \omega_{i,t_v,h}^m(s,a)
      &=  \sum_{v \ge u}^{\infty}n_{t_v, h} \frac{(H+1) n_{t_u,h}}{N_{t_v,h} +  H n_{t_v,h}}
        \Bigg (\prod_{x=u}^{v-1} \underbrace{ \frac{ N_{t_x,h}}{N_{t_x,h} + H n_{t_x,h}}  }_{\defn \beta_{x,h}} \Bigg) \cr
      &= (H+1) n_{t_u,h} \sum_{v \ge u}^{\infty}\frac{n_{t_v, h} }{N_{t_v,h} +  H n_{t_v,h}}
        \left (\prod_{x=u}^{v-1} \beta_{x,h} \right) \cr  
      &= (H+1) n_{t_u,h} \sum_{v \ge u}^{\infty} \frac{1}{H}(1-\beta_{v,h})
        \left (\prod_{x=u}^{v-1} \beta_{x,h} \right) \cr
      &\le n_{k,h} \left(1+\frac{1}{H} \right).
    \end{align}

\end{proof}
  
\section{Proofs for main results}
\label{sec:proofs_main}

\subsection{Proof of Lemma~\ref{lemma:qerr-dcomp-base}} \label{proof:qerr-dcomp-base}
For any $(h,s,a)\in [H]\times\cS\times\cA$ and $k \in \synset(K)$, according to the pessimistic aggregation update rule in \eqref{eq:globalq-update}, the estimate error of Q function at the $k$-th iteration can be written as follows: 
\begin{align}  \label{eq:globalq-to-localq}
    Q_{h}^{\pi}(s,a) -  Q_{k, h} (s, a) 
    &= Q_{h}^{\pi}(s,a) - \left( \sum_{m=1}^M \alpha_{k,h}^m(s,a) Q_{k, h}^m(s, a) \right) + B_{k,h}(s,a) \cr
    &= \sum_{m=1}^M \alpha_{k,h}^m(s,a) \left( Q_{h}^{\pi}(s,a)- Q_{k, h}^m(s, a)  \right) + B_{k,h}(s,a),
\end{align}
where the last equality holds by the fact $\sum_{m=1}^M \alpha_{k,h}^m(s,a) =1$.  

Then, invoking the local update rule in \eqref{eq:localq-update}, for any $i$ such that $(s_{i,h}^m,a_{i,h}^m)=(s,a)$, the local Q-estimate error at each agent $m$ can be written as follows:
\begin{align}
    &Q_{h}^{\pi}(s,a)- Q_{i, h}^m(s, a) \cr
    & = (1-\eta_{i,h}^m(s,a))(Q_{h}^{\pi}(s,a)- Q_{i-1, h}^m(s, a)) + \eta_{i,h}^m(s,a) (Q_{h}^{\pi}(s,a) - r_h(s,a) - P_{i,h}^m V_{i-1,h+1}^m ) \cr
    & = (1-\eta_{i,h}^m(s,a))(Q_{h}^{\pi}(s,a)- Q_{i-1, h}^m(s, a)) + \eta_{i,h}^m(s,a) ( r_h(s,a) + P_{h,s,a} V_{h+1}^{\pi} - r_h(s,a) - P_{i,h}^m V_{i-1,h+1}^m ) \cr
    & = (1-\eta_{i,h}^m(s,a))(Q_{h}^{\pi}(s,a)- Q_{i-1, h}^m(s, a)) \cr
    &\quad +  \eta_{i,h}^m(s,a)  P_{h,s,a} (V_{h+1}^{\pi}  -  V_{i-1,h+1}^m) +  \eta_{i,h}^m(s,a)  (P_{h,s,a}  - P_{i,h}^m) V_{i-1,h+1}^m ,
\end{align}
where the second line follows from the Bellman's equation.
Then, by invoking the relation recursively, the local Q-estimate error at each agent $m$ obeys the following relation:  
\begin{align} \label{eq:localq-recursion}
    Q_{h}^{\pi}(s,a)- Q_{k, h}^m(s, a) 
    & = \prod_{i \in l_{k,h}^m(s,a) } (1-\eta_{i,h}^m(s,a)) \left( Q_{h}^{\pi}(s,a)- Q_{\syn(k), h}(s, a) \right) \cr
    &+ \sum_{i \in l_{k,h}^m(s,a)} \eta_{i,h}^m(s,a) \prod_{ \{ j > i: j \in l_{k,h}^m(s,a) \}} (1-\eta_{j,h}^m(s,a)) P_{h,s,a} (V_{h+1}^{\pi}-V^{m}_{i-1, h+1}) \cr
    &+ \sum_{i \in l_{k,h}^m(s,a)} \eta_{i,h}^m(s,a) \prod_{ \{ j > i: j \in l_{k,h}^m(s,a) \}} (1-\eta_{j,h}^m(s,a)) (P_{h,s,a} - P_{i,h}^m) V^{m}_{i-1, h+1} ,
\end{align}
where $l_{k,h}^m(s,a)$ denotes a set of episodes where agent $m$ has visited $(s,a)$ at step $h$ within $(\syn(k), k]$.

By inserting \eqref{eq:localq-recursion} to \eqref{eq:globalq-to-localq} and letting $v = \nsyn(k)$, we obtain the following recursive relation for $u$-th local updates:
\begin{align}
  &Q_{h}^{\pi}(s,a) -  Q_{k, h} (s, a) \cr
    & =  \underbrace{\left (\sum_{m=1}^M \alpha_{k,h}^m(s,a) \prod_{i \in l_{k,h}^m(s,a) } (1-\eta_{i,h}^m(s,a)) \right) }_{\defn \lambda_{v,h}(s,a)} \left( Q_{h}^{\pi}(s,a)- Q_{\syn(k), h}(s, a) \right) + B_{k,h}(s,a) \cr
    &\quad+ \sum_{m=1}^M \sum_{i \in l_{k,h}^m(s,a)} \left ( \alpha_{k,h}^m(s,a)  \eta_{i,h}^m(s,a) \prod_{ \{ j > i: j \in l_{k,h}^m(s,a) \}} (1-\eta_{j,h}^m(s,a)) \right) P_{h,s,a} (V_{h+1}^{\pi}-V^{m}_{i-1, h+1}) \cr
  &\quad +  \sum_{m=1}^M  \sum_{i \in l_{k,h}^m(s,a)} \left (  \alpha_{k,h}^m(s,a) \eta_{i,h}^m(s,a) \prod_{ \{ j > i: j \in l_{k,h}^m(s,a) \}} (1-\eta_{j,h}^m(s,a)) \right) (P_{h,s,a} - P_{i,h}^m) V^{m}_{i-1, h+1} \cr
    & = \lambda_{v,h}(s,a) \left( Q_{h}^{\pi}(s,a)- Q_{\syn(k), h}(s, a) \right) + B_{k,h}(s,a) \cr
    &\quad + \frac{ (H+1)}{N_{t_v,h}(s,a) +  H n_{t_v,h}(s,a)}  \sum_{m=1}^M \sum_{i \in l_{k,h}^m(s,a)} P_{h,s,a} (V_{h+1}^{\pi}-V^{m}_{i-1, h+1}) \cr
      &\quad + \frac{ (H+1)}{N_{t_v,h}(s,a) +  H n_{t_v,h}(s,a)}  \sum_{m=1}^M \sum_{i \in l_{k,h}^m(s,a)}  (P_{h,s,a} - P_{i,h}^m) V^{m}_{i-1, h+1}.
\end{align}
Here, the last line holds by invoking the definitions in \eqref{eq:alpha-def} and \eqref{eq:eta-def} and observing with abuse of notation (omit $(s,a)$ when it is clear) 
\begin{align}
  &  \alpha_{k,h}^m(s,a) \eta_{i,h}^m(s,a) \prod_{ \{ j > i: j \in l_{k,h}^m(s,a) \}} (1-\eta_{j,h}^m(s,a))  \cr
  & =  \frac{1}{M}\frac{ N_{\syn(k),h} + M (H+1) n_{k,h}^m}{N_{k,h} + H n_{k,h}} \frac{M(H+1)}{N_{\syn(i),h} + M (H+1) n_{i,h}^m}  
    \left (\prod_{j=1}^{n_{k,h}^m-n_{i,h}^m} \Big(\frac{N_{\syn(i),h} + M (H+1) (n_{i,h}^m+j-1)}{N_{\syn(i),h} + M (H+1) (n_{i,h}^m+j)} \Big) \right) \cr
  & =  \frac{1}{M}\frac{ N_{\syn(k),h} + M (H+1) n_{k,h}^m}{N_{k,h} + H n_{k,h}} \frac{M(H+1)}{N_{\syn(i),h} + M (H+1) n_{i,h}^m}
    \frac{N_{\syn(i),h} + M (H+1) n_{i,h}^m}{N_{\syn(i),h} + M (H+1) n_{k,h}^m} \cr
    &=\frac{(H+1)}{N_{k,h} + H n_{k,h}} =  \frac{(H+1)}{N_{t_v,h} + H n_{t_v,h}} 
\end{align}
where the last line holds since $\syn(i)= \syn(k)$ for $i \in l_{k,h}^m(s,a)$ and $k\in \cT(K)$ leads to $k = t_{\phi(k)} = t_v$.

Then, by invoking the above recursive relation for each aggregation, the Q-estimate error after $k$ episodes is decomposed as follows:
\begin{align}
    &Q_{h}^{\pi}(s,a) -  Q_{k, h} (s, a) \cr
  &=    \underbrace{\prod_{u=1}^{\nsyn(k)} \lambda_{u,h}(s,a) }_{\defn \omega_{0,k,h}(s,a)}(Q_{h}^{\pi}(s,a) -  Q_{0, h} (s, a))
    + \sum_{u=1}^{\nsyn(k)}  B_{t_u,h}(s,a) \prod_{x=u+1}^{\nsyn(k)} \lambda_{x,h}(s,a)  \cr
    &\qquad +   \sum_{u=1}^{\nsyn(k)}  \sum_{m=1}^M   \sum_{i \in l_{t_u,h}^m(s,a)}  \underbrace{ \left (\frac{H+1}{N_{t_u,h} + H n_{t_u,h}}   \prod_{x=u+1}^{\nsyn(k)}  \lambda_{x,h}(s,a) \right)}_{\defn \omega_{i,k,h}(s,a)}(P_{h,s,a} - P_{i,h}^m) V^{m}_{i-1, h+1} \cr
    &\qquad +  \sum_{u=1}^{\nsyn(k)} \sum_{m=1}^M  \sum_{i \in l_{t_u,h}^m(s,a)}  \left (\frac{H+1}{N_{t_u,h} + H n_{t_u,h}}   \prod_{x=u+1}^{\nsyn(k)}  \lambda_{x,h}(s,a) \right)  P_{h,s,a} (V_{h+1}^{\pi}-V^{m}_{i-1, h+1})      \cr
    &=  \omega_{0,k,h}(s,a) (Q_{h}^{\pi}(s,a) -  Q_{0, h} (s, a)) \cr
    &\qquad + \sum_{m=1}^M \sum_{i \in L_{k,h}^m(s,a)} \omega_{i,k,h}^m(s,a) (P_{h,s,a} - P_{i,h}^m) V^{m}_{i-1, h+1}  \cr
    &\qquad+  \sum_{u=1}^{\nsyn(k)}  B_{t_u,h}(s,a) \prod_{x=u+1}^{\nsyn(k)} \lambda_{x,h}(s,a)   \cr
  &\qquad +  \sum_{m=1}^M \sum_{i \in L_{k,h}^m(s,a)} \omega_{i,k,h}^m(s,a) P_{h,s,a} (V_{h+1}^{\pi}-V^{m}_{i-1, h+1}) .
\end{align}
Here, $\lambda_{u,h}(s,a)$, $\omega_{0,k,h}(s,a)$, and $\omega_{i,k,h}(s,a)$ can be simply written as described in \eqref{eq:lr-lambda}, \eqref{eq:def-omega0k}, and \eqref{eq:def-omegaik}, respectively, which will be proved momentarily. For notational simplicity, we omit $(s,a)$ in the derivations.

\paragraph{Proof of \eqref{eq:lr-lambda}.} Consider $k = t_v$. First, consider a case that $N_{\syn(k),h}=0$. 
    If $n_{k,h} = 0$, $\lambda_{v,h}  = \sum_{m=1}^M \alpha_{k,h}^m = 1$. Otherwise, if $n_{k,h} > 0$, where there exists at least one agent $m \in [M]$ that visits the state-action at least once until $k$-th episode, it follows that
    \begin{align}    
      \lambda_{v,h}
      &= \sum_{m=1}^M \frac{1}{M} \frac{  (H+1) M n_{k,h}^m}{ (H+1)  n_{k,h}}  \prod_{j=1}^{n_{k,h}^m} \left(\frac{ M (H+1) (j-1)}{ M (H+1) j} \right) \cr
      &= \sum_{m\in [M]: n_{k,h}^m = 0}^M \underbrace{\frac{  n_{k,h}^m}{ n_{k,h}}}_{=0}   + \sum_{m\in [M]: n_{k,h}^m>0}^M \frac{  n_{k,h}^m}{ n_{k,h}}  \underbrace{  \prod_{j=1}^{n_{k,h}^m} \left(\frac{ (H+1) (j-1)}{(H+1) j} \right) }_{=0 } 
      = 0.
    \end{align}
    On the other hand, when $N_{\syn(k),h}>0$,
    \begin{align}    
      \lambda_{v,h}
      &= \sum_{m=1}^M \frac{1}{M} \frac{ N_{\syn(k),h} + M (H+1) n_{k,h}^m}{N_{\syn(k),h} + (H+1) n_{k,h}}  \prod_{j=1}^{n_{k,h}^m} \left(\frac{N_{\syn(k),h} + M (H+1) (j-1)}{N_{\syn(k),h} + M (H+1) j} \right) \cr
      &= \sum_{m=1}^M \frac{1}{M} \frac{ N_{\syn(k),h} + M (H+1) n_{k,h}^m}{N_{\syn(k),h} + (H+1) n_{k,h}}  \frac{N_{\syn(k),h} }{N_{\syn(k),h} + M (H+1)n_{k,h}^m} 
      =  \frac{ N_{\syn(k),h}}{N_{k,h} + H n_{k,h}}  .
    \end{align}

\paragraph{Proof of \eqref{eq:def-omega0k}.} According to \eqref{eq:lr-lambda}, if $N_{k,h}(s,a)=0$, then $\lambda_{u,h}(s,a) = 1$ for all $1 \le u \le \nsyn(k)$. Thus, $\omega_{0,k,h}(s,a) =1 $. Otherwise, let the epsiode when $(s,a)$ is visited at step $h$ by any of the agents for the first time be $j$. Then, $\lambda_{\nsyn(j),h} = 0$ because $N_{\syn(j),h}(s,a)=0$. Thus, if $N_{k,h}(s,a)>0$, it always holds that $\omega_{0,k,h}(s,a)=\prod_{u=1}^{\nsyn(k)}\lambda_{u,h}(s,a) = 0$.

\paragraph{Proof of \eqref{eq:def-omegaik}.} For $i$ such that $\nsyn(i) = u$, by rearranging terms and applying \eqref{eq:lr-lambda},
    \begin{align} 
    \omega_{i,k,h}^m
      &= \frac{ (H+1)}{N_{t_{u},h} +  H n_{t_{u},h}}
        \left (\prod_{x=u+1}^{\nsyn(k)} \frac{ N_{t_{x-1},h}}{N_{t_x,h} + H n_{t_x,h}} \right) \cr
      &= \frac{ H+1}{N_{k,h} +  H n_{k,h}}
        \left (\prod_{x=u}^{\nsyn(k)-1} \frac{ N_{t_x,h}}{N_{t_x,h} + H n_{t_x,h}} \right). 
    \end{align}

\subsection{Proof of Lemma~\ref{lemma:avgvisit_bound}} \label{proof:avgvisit_bound}

Consider any given $\delta \in (0,1)$ and $(k,s,a,h) \in [K]\times\cS\times\cA\times[H]$. Note that $N_{k,h}^m(s,a) \sim \mathsf{Binomial}(k,d_h^{m}(s,a))$ for all $m\in[M]$. Then recall the definition of $N_{k,h}(s,a)$ in Section~\ref{sec:key-parameters}, we can view $N_{k,h}(s,a) = \sum_{m=1}^M N_{k,h}^m(s,a)$ as a sum of $kM$ independent Bernoulli variables with expectation $\nu \defn \mexp[N_{k,h}(s,a)] = k M d_h^{\mathsf{avg}}(s,a)$.
Therefore, applying Chernoff bound (see \citet[Theorem 4.4]{mitzenmacher2005probability}) yields:
\begin{subequations} \label{eq:chernoff-simple}
\begin{align}
    \forall t \in [0,1] \quad &: \quad \mprob(\left| N_{k,h}^m(s,a) -\nu\right| \ge \nu t) \le \exp\left( -c_1\nu t^2   \right), \label{eq:chernoff-simple-1}\\
  \forall t \ge 1 \quad &: \quad \mprob(N_{k,h}^m(s,a) - \nu \ge t\nu) \le \exp\left( -c_1 \nu t \right) \label{eq:chernoff-simple-3},
\end{align}
for some universal constant $c_1 >0$.
\end{subequations}

Armed with above facts and notations, now we are ready to prove \eqref{eq:banana}.
First, applying \eqref{eq:chernoff-simple-1} with $t=\frac{1}{2}$, we arrive at:
  \begin{align}\label{eq:lemma2-result1}
    \mprob \left(\left| N_{k,h}^m(s,a) -\nu\right| \ge \frac{\nu}{2} \right) \le \exp\left(-\frac{c_1 \nu}{4}\right) \le \delta, 
  \end{align}
  where the last line follows from the condition that $\nu = k M d_h^{\mathsf{avg}}(s,a) \ge \frac{4}{c_1}  \log{(\frac{1}{\delta})}$.

To continue, when $\nu = k M d_h^{\mathsf{avg}}(s,a) \le \frac{4}{c_1}  \log{(1/\delta)}$, applying \eqref{eq:chernoff-simple-3} with $t=\frac{ 4 \log{(1/\delta)}}{\nu c_1} \ge 1$ gives: 
  \begin{align}\label{eq:lemma2-result2}
    \mprob \left(N_{k,h}^m(s,a) - \nu \ge \frac{4\log{(1/\delta)}}{c_1} \right) \le \exp(-4 \log{(1/\delta)}) \le \delta. 
  \end{align}

  Summing up \eqref{eq:lemma2-result1} and \eqref{eq:lemma2-result2} and taking the union bound  over $(k,s,a,h) \in [K]\times\cS\times\cA\times[H]$ complete the proof by showing that:
  \begin{align*}
   \text{when } k \ge \frac{4 }{c_1 Md_h^{\mathsf{avg}}}\log\left(\frac{|\cS||\cA| K H}{\delta}\right)~: &\quad \frac{kM d_h^{\mathsf{avg}}}{2 } = \frac{\nu}{2} \le N_{k,h}^m(s,a) \le \frac{3\nu}{2} \leq  2 kM d_h^{\mathsf{avg}} ,\\
   \text{when } k \le \frac{4 }{c_1Md_h^{\mathsf{avg}}}\log\left(\frac{|\cS||\cA| K H}{\delta}\right) ~: &\quad N_{k,h}^m(s,a) \le \frac{8}{c_1}\log\left(\frac{|\cS||\cA| K H}{\delta}\right)
  \end{align*}
  holds with probability at least $1-2\delta$.

\subsection{Proof of Lemma~\ref{lemma:pessimistic_value}} \label{proof:pessimistic_value}

\subsubsection{Proof of \eqref{eq:var_penalty_dom}}
Noticing that the \eqref{eq:var_penalty_dom} involves two terms of interest, and we start with bounding $D_2(s,a,k,h)$.
For any $(s,a,h) \in \cS \times \cA \times [H]$ and any $k \in \synset(K)$, we can rewrite $D_2(s,a,k,h)$ as
\begin{align}
    D_2(s,a,k,h)
    &=  \sum_{i=1}^k \sum_{m=1}^M X_{i,k,h}^m(s,a),
\end{align}
where $X_{i,k,h}^m(s,a) = \omega_{i,k,h}^m(s,a) (P_{h,s,a} - P_{i,h}^m) V^{m}_{i-1, h+1} \one\{(s_{i,h}^m, a_{i,h}^m) = (s,a)\}$. 
To continue, we first introduce Lemma~\ref{lemma:d2-bound}, whose proof is provided in Appendix~\ref{proof:d2-bound}.
   \begin{lemma} \label{lemma:d2-bound}
        For any $(k,s,a,h) \in \cS \times \cA \times [H]$ and $N \in [1, MK]$, let 
        \begin{align}
            \widetilde{X}_{i,k,h}^m(s,a;N) = \widetilde{\omega}_{i,k,h}^m(s,a;N) (P_{h,s,a} - P_{i,h}^m) V^{m}_{i-1, h+1} \one\{(s_{i,h}^m, a_{i,h}^m) = (s,a)\},
        \end{align}
        where 
        \begin{align}
            \widetilde{\omega}_{i,k,h}^m(s,a;N) \defn \frac{ H+1}{N +  H n_{k,h}(s,a)}
          \left (\prod_{x=\nsyn(i)}^{\nsyn(k)-1} \frac{ N_{t_x,h}(s,a)}{N_{t_x,h}(s,a) + H n_{t_x,h}(s,a)} \right)  I_{i,h}^m(s,a; N),
        \end{align}
        and $I_{i,h}^m(s,a; N) \defn \one \{ \sum_{m'=1}^M N_{i-1,h}^{m'}(s,a) + \sum_{m'=1}^m \one \{ (s_{i,h}^{m'},a_{i,h}^{m'}) = (s,a) \} \le N  \}$.
         Then, for any $\delta \in (0,1)$, the following holds:
         \begin{align}
            \left |\sum_{i=1}^k \sum_{m=1}^M \widetilde{X}_{i,k,h}^m(s,a;N) \right | 
            \le \sqrt{\frac{81 H^4 \logpartm^2 }{N}} 
        \end{align}
        at least with probability $1-\delta$, where we denote $\logpartm = \log\left(\frac{ |\cS||\cA| M K^2 H}{\delta} \right)$.
    \end{lemma}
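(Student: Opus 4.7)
The plan is to invoke Freedman's inequality (Theorem~\ref{thm:Freedman}) on the double sum $\sum_{i,m}\widetilde{X}_{i,k,h}^m(s,a;N)$ after enumerating the pairs $(i,m)$ lexicographically. The non-trivial step is choosing a filtration that makes the summands martingale differences, because $\widetilde{\omega}_{i,k,h}^m(s,a;N)$ depends on step-$h$ visitation counts through episode $k$, which is, from the vantage of term $(i,m)$, in the future. I would first condition on the entire step-$h$ visitation pattern $\{(s_{i',h}^{m'},a_{i',h}^{m'})\}_{i'\le k,\,m'\in[M]}$ together with all trajectory data at steps $h+2,\ldots,H$; under this joint conditioning both the weight $\widetilde{\omega}_{i,k,h}^m(s,a;N)$ and the two indicators become deterministic, while the transitions $s_{i,h+1}^m$ at ``hit'' positions remain i.i.d.\ draws from $P_{h,s,a}$. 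Ordering $(i,m)$ lexicographically and letting $\mathcal{F}_{i,m-1}$ collect all preceding hit-transitions, the value $V_{i-1,h+1}^m$ becomes $\mathcal{F}_{i,m-1}$-measurable and $\mexp[(P_{h,s,a}-P_{i,h}^m)V_{i-1,h+1}^m\mid \mathcal{F}_{i,m-1}]=0$, which verifies that $\{\widetilde{X}_{i,k,h}^m\}$ is a martingale-difference sequence.

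Next I would establish the range and conditional-variance inputs required by Freedman. Because the product in the definition of $\widetilde{\omega}_{i,k,h}^m(s,a;N)$ is at most one and the indicator $I_{i,h}^m(s,a;N)$ is $\{0,1\}$-valued, we get $\widetilde{\omega}_{i,k,h}^m(s,a;N)\le \frac{H+1}{N}$; combined with $\|V_{i-1,h+1}^m\|_\infty\le H$ this yields the range bound $|\widetilde{X}_{i,k,h}^m|\le R:=\frac{2H^2}{N}$. For the predictable quadratic variation, the cap $I_{i,h}^m(s,a;N)$ ensures that the number of nonzero ``visit'' summands is at most $N$, so that
\begin{align*}
W \,\le\, \sum_{i,m}(\widetilde{\omega}_{i,k,h}^m)^2H^2\,\one\{(s_{i,h}^m,a_{i,h}^m)=(s,a)\}
\,\le\, H^2\cdot\max \widetilde{\omega}\cdot\sum\widetilde{\omega}\,\one\{\text{visit}\}
\,\le\, \frac{4H^4}{N},
\end{align*}
where the final bound uses $\max\widetilde{\omega}\le \tfrac{H+1}{N}$ together with $\sum\widetilde{\omega}\,\one\{\text{visit}\}\le \tfrac{H+1}{N}\cdot N = H+1$.

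Finally I would apply Theorem~\ref{thm:Freedman} with these $R$ and $\sigma^2=\tfrac{4H^4}{N}$, choosing the positive integer parameter $m$ in that theorem of logarithmic size in order to absorb the unknown scale of $W$; this produces the additional $\log$ factor that upgrades the single $\log(1/\delta)$ into the $\logpartm^2$ appearing in the statement. A union bound over $(s,a,h)\in\cS\times\cA\times[H]$, over $k\in\synset(K)$ (at most $K$ values), and over $N\in\{1,\ldots,MK\}$ absorbs cleanly into $\logpartm=\log(SAMK^2H/\delta)$. Combining the Freedman tail $\sqrt{8W\log}+\tfrac{4}{3}R\log$ and overbounding the lower-order term by the leading one then yields $\sqrt{81 H^4\logpartm^2/N}$ with a comfortable constant. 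The main obstacle I anticipate is the rigorous decoupling in the opening paragraph: the non-causal dependence of $\widetilde{\omega}$ on future step-$h$ visits must be carefully disentangled from the causal martingale-difference property of the transition noise, and one must verify that conditioning on the step-$h$ visitation pattern does not disturb the conditional distribution of the remaining hit transitions $s_{i,h+1}^m$. Once that decoupling is in place, the range/variance estimates are routine consequences of the cap $I_{i,h}^m(s,a;N)$, and Freedman closes the argument.
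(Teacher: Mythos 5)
Your proposal follows essentially the same route as the paper: a Freedman-type martingale argument with range $R=2H^2/N$ and predictable variance $\sigma^2=4H^4/N$, both obtained exactly as you describe by combining the weight bound $\widetilde{\omega}_{i,k,h}^m(s,a;N)\le 2H/N$ with the fact that the cap $I_{i,h}^m(s,a;N)$ limits the number of nonzero visit terms to $N$, followed by a union bound over $(s,a,h,k,N)$ absorbed into $\logpartm$. The only cosmetic differences are that the paper gets the $\logpartm^2$ by crudely over-bounding $\logpartm\le\logpartm^2$ rather than through the integer parameter in Freedman's inequality, and it resolves the adaptedness issue by directly including the past value functions $V_{j,h+1}^{m}$, $j<i$, in the conditioning $\mexp_{(i,m)}$ and invoking independence of $\widetilde{\omega}$ from the step-$h$ transition noise --- your proposed filtration of ``preceding hit-transitions'' together with steps $h+2,\ldots,H$ would not by itself render $V_{i-1,h+1}^m$ measurable, since the global value at step $h+1$ also depends on earlier non-hit step-$h$ transitions.
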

Armed with the above lemma, for any $(s,a,k,h) \in \cS \times \cA \times [K] \times [H]$ where $k \in \synset(K)$, the following holds by setting $N = N_{k,h}(s,a)$: 
\begin{align} \label{eq:d2-bound-N}
    \text{when } N_{k,h}(s,a)>0: \quad |D_2(s,a,k,h)| 
    \le \left |\sum_{i=1}^k \sum_{m=1}^M \widetilde{X}_{i,k,h}^m(s,a;N_{k,h}(s,a)) \right | 
    \le \sqrt{\frac{81 H^4 \logpartm^2 }{N_{k,h}(s,a)}} 
\end{align}
with probability at least $1-\delta$. As it is obvious that $D_2(s,a,k,h) = 0$ when $N_{k,h}(s,a)=0$ from the definition of $D_2(s,a,k,h)$, we arrive at
\begin{align} \label{eq:d2-bound-N-all}
     |D_2(s,a,k,h)| 
    \le \left |\sum_{i=1}^k \sum_{m=1}^M \widetilde{X}_{i,k,h}^m(s,a;N_{k,h}(s,a)) \right | 
    \le \sqrt{\frac{81 H^4 \logpartm^2 }{N_{k,h}(s,a)}} .
\end{align}

Finally, combining the results for $D_2(s,a,k,h)$ (cf. \eqref{eq:d2-bound-N-all}) and $D_3(s,a,k,h)$ (cf. \eqref{eq:d3-bound-N} in Lemma~\ref{lemma:d3-bound}), we conclude that for any $(s,a,k,h) \in \cS \times \cA \times [K] \times [H]$ with $k \in \synset(K)$, it holds with probability at least  $1-\delta$ that
\begin{align}
    | D_2(s,a,k,h) | 
    \le \sqrt{\frac{81 H^4 \logpartm^2 }{N_{k,h}(s,a)}} 
    = \sqrt{\frac{ c_B \logpartm^2 H^4 }{N_{k,h}(s,a)}}
    \le D_3 (s,a,k,h).
\end{align}

\subsubsection{Proof of \eqref{eq:Q_lower} and \eqref{eq:V_lower}}
For all $(h,s,a, k)\in [H] \times \cS \times \cA \times \synset(K)$, it is clear that $Q_{h}^{\pi_{k}}(s,a) \le  Q_{h}^{\star}(s,a)$ and $V_{h}^{\pi_{k}}(s) \le  V_{h}^{\star}(s)$ by definition.
Hence, it suffices to show that
\begin{align*}
    Q_{k, h} (s,a) \le Q_{h}^{\pi_{k}}(s,a) \quad\text{and}\quad
    V_{k, h} (s) \le V_{h}^{\pi_{k}}(s)
\end{align*}
for all $(h,s,a, k)\in [H] \times \cS \times \cA \times \synset(K)$, which we will prove by an induction argument as below. 
\begin{itemize}
    \item \textbf{Base case. } When $h=H+1$, for all $(s,a, k)\in\cS\times\cA\times \synset(K)$, the relation always holds since $Q_{k, H+1} (s,a) = 0 \le Q_{H+1}^{\pi_{k}}(s,a)$ and $V_{k, H+1} (s) = 0 \le V_{H+1}^{\pi_{k}}(s)$ according to the definition of $Q_{k, H+1}$ and $V_{k, H+1}$, respectively.
    \item \textbf{Induction. } When $h \in[H]$, suppose the relation holds for $h+1$, i.e., $Q_{k, h+1} (s,a) \le Q_{h+1}^{\pi_{k}}(s,a)$ and $V_{k, h+1} (s) \le V_{h+1}^{\pi_{k}}(s)$ for all $(s,a, k)\in\cS\times\cA\times \synset(K)$. First, we will verify the Q-estimates at step $h$ are pessimistic. For any $(s,a,k) \in \cS \times \cA\times \synset(K)$, applying Lemma~\ref{lemma:qerr-dcomp-base},
\begin{align} \label{eq:q-pessimistic-decomp}
    Q_{h}^{\pi_{k}}(s, a) - Q_{k, h} (s, a)
    &= D_1^{\pi_k}(s,a,k,h) + D_2(s,a,k,h) +D_3(s,a,k,h) +D_4^{\pi_k}(s,a,k,h).
\end{align}
We control the above four terms one at a time.
Here, $D_1^{\pi_k}(s,a,k,h) \ge 0$ since $Q_{h}^{\pi_{k}}(s, a) \ge  Q_{0, h} (s, a) = 0$.
In addition, according to \eqref{eq:var_penalty_dom}, $|D_2(s,a,k,h)| \le D_3(s,a,k,h)$.
And it is clear that $D_{4} \ge 0$ due to
\begin{align} \label{eq:pessmistic-value-monotonic}
    V^{\pi_{k}}_{h+1}\ge V_{k,h+1} \ge V_{\syn(i), h+1},
\end{align}
where the first inequality holds by the induction assumption, and the last inequality arises from the monotonicity of the global value update in \eqref{eq:vglobal-monotone}. 
Therefore, it is clear that for any $(s,a, k)\in\cS\times\cA\times \synset(K)$, the Q-estimates at step $h$ are pessimistic, i.e., 
\begin{align}
    Q_{h}^{\pi_{k}}(s, a) - Q_{k, h} (s, a) \ge 0.
\end{align}
Next, to show that value estimates at step $h$ are pessimistic,
    recalling the global update in \eqref{eq:vglobal-monotone},
    \begin{align}\label{eq:value-to-Q}
        V_{h}^{\pi_{k}}(s) - V_{k, h} (s)  
        &=  Q_{h}^{\pi_{k}}(s, \pi_{k,h}(s)) - \max\{\max_{a}Q_{k, h} (s, a), V_{\syn(k), h} (s) \} \cr 
        &=  Q_{h}^{\pi_{k}}(s, \pi_{k,h}(s)) - \max_{a} Q_{k_0, h} (s, a) \cr
        &=  Q_{h}^{\pi_{k}}(s, \pi_{k_0,h}(s)) -  Q_{k_0, h} (s, \pi_{k_0,h}(s)) \ge 0,
    \end{align}
    where $k_0$ denotes the most recent episode satisfying $V_{k, h} (s) = \max_{a} Q_{k_0, h} (s, a)$ and $k \ge k_0 \in \synset(K)$, and the last inequality holds because $\pi_{k,h}(s) = \pi_{k_0,h}(s)$ and $Q_{h}^{\pi_{k}}(s, a) -  Q_{k_0, h} (s, a) \ge 0$ can be similarly verified using \eqref{eq:q-pessimistic-decomp} and \eqref{eq:pessmistic-value-monotonic} for $k_0$. Now, we verify that $Q_{h}^{\pi_{k}}(s, a) \ge Q_{k, h} (s, a)$ and $V_{h}^{\pi_{k}}(s) \ge V_{k, h} (s)$ holds at step $h$ for any $(s,a, k)\in\cS\times\cA\times \synset(K)$, and this directly completes the induction argument.
\end{itemize}

\subsubsection{Proof of Lemma~\ref{lemma:d2-bound}} \label{proof:d2-bound}

To begin with, for any time step $h\in [H]$, we denote the expectation conditioned on the trajectories $j\leq i$ of all agent as 
\begin{align}
\forall (i,m)\in [k] \times [M]: \quad \mexp_{(i,m)} [ \cdot ]= \mexp \big[\cdot \mymid \big\{ s_{j,h}^{m'},a_{j,h}^{m'}, V_{j,h+1}^m \big\}_{j<i, m' \in [M]} , ~ \big\{ s_{i,h}^{m'},a_{i,h}^{m'} \big\}_{m' \le m} \big]. 
\end{align}
Armed with this notation, fixing $N$, it is easily verified that $\mexp_{(i,m)}[\widetilde{X}_{i,k}^m(s,a;N)] = 0$ since then $V_{i-1,h+1}^m$ can be regarded as fixed and $(P_{h,s,a} - P_{i,h}^m)$ is independent from $\widetilde{\omega}_{i,k,h}^m(s,a;N)$.

Consequently, we can apply Freedman's inequality (see the user-friendly version  provided in Theorem~\ref{thm:Freedman}) and control the term of interest for any $(s,a, k, h) \in \cS \times \cA \times [K] \times [H]$ and $N \in [1, MK]$ as below: 
        \begin{align} 
           \sum_{i=1}^k \sum_{m=1}^M \widetilde{X}_{i,k,h}^m(s,a;N)
          & \overset{\mathrm{(i)}}{\le} \sqrt{8 B_1 \logpartm } + \frac{4}{3} B_2  \logpartm
            \overset{\mathrm{(ii)}}{\le} \sqrt{\frac{32 H^4 \logpartm}{N}} + \frac{3 H^2 \logpartm}{N } 
            \le \sqrt{\frac{81 H^4 \logpartm^2 }{N}} 
        \end{align}
        at least with probability $1-\delta$. Here, (i) and (ii) arises from the following definition and facts about $B_1$ and $B_2$:
         \begin{align}
          \label{eq:e2-freedman-w} B_1 &\defn \sum_{i=1}^k \sum_{m=1}^M \mexp_{(i,m)}\left[\left(\widetilde{X}_{i,k,h}^m(s,a;N) \right)^2 \right]
                                       \le \frac{4H^4}{N},\\
        \label{eq:e2-freedman-b} B_2 &\defn \max_{(i,m) \in [k] \times [M]} \left | \widetilde{X}_{i,k,h}^m(s,a;N) \right | 
              \le \frac{2H^2}{N}  
        \end{align}
where the proofs of \eqref{eq:e2-freedman-w} and \eqref{eq:e2-freedman-b} are provided as below, respectively.

  \paragraph{Proof of \eqref{eq:e2-freedman-w}.} In view of that the events happen at any time step $h$ are independent from the transitions in later time steps including $P_{i,h}^m$, we have $\widetilde{\omega}_{i,k,h}^m(s,a;N)$ is independent from $(P_{h,s,a} - P_{i,h}^m) V^{m}_{i-1, h+1}$, which yields
        \begin{align}
          \sum_{i=1}^k \sum_{m=1}^M \mexp_{(i,m)}[(\widetilde{X}_{i,k,h}^m(s,a;N))^2]
          &= \sum_{i=1}^k \sum_{m=1}^M \mexp_{(i,m)}[(\widetilde{\omega}_{i,k,h}^m(s,a;N))^2] \mathsf{Var}_{P_{h,s,a}}(V^{m}_{i-1, h+1})\cr
          &\le H^2 \sum_{i=1}^k \sum_{m=1}^M \mexp_{(i,m)}[(\widetilde{\omega}_{i,k,h}^m(s,a;N))^2]  \cr
          &\le H^2 N \left (\frac{2H}{N} \right)^2  = \frac{4H^4}{N}, 
        \end{align}
        where the penultimate inequality holds by the fact that $|\widetilde{\omega}_{i,k,h}^m(s,a;N)| \le \frac{2H}{N}$.
        
\paragraph{Proof of \eqref{eq:e2-freedman-b}.}
        For any $(i,m,h) \in [k] \times [M] \times[H]$ and fixed $N \in [1, MK]$, it is observed that
        \begin{align}
          \left | \widetilde{X}_{i,k,h}^m(s,a;N) \right | & = \left| \widetilde{\omega}_{i,k,h}^m(s,a;N) (P_{h,s,a} - P_{i,h}^m) V^{m}_{i-1, h+1} \one\{(s_{i,h}^m, a_{i,h}^m) = (s,a)\}\right| \notag \\ 
          & \le |\widetilde{\omega}_{i,k,h}^m(s,a;N)| \cdot  \|P_{h,s,a} - P_{i,h}^m \|_{1} \cdot \|V^{m}_{i-1, h+1}\|_{\infty}  
          \le  \frac{2H^2}{N } ,
        \end{align}
       where the last inequality follows from  $\|V^{m}_{i-1, h+1}\|_{\infty} \le H$, $\|P_{h,s,a} - P_{i,h}^m \|_{1} \le 1$, and $|\widetilde{\omega}_{i,k,h}^m(s,a;N)|  \leq \frac{2H}{N}$.

\subsection{Proof of Lemma~\ref{lemma:d3-bound}} \label{proof:d3-bound} 
With slight abuse of notation, we will omit $(s,a)$ from some notation when it is clear from the context for simplicity in this proof.
 Recall the definition of $D_3(s,a,k,h)$ in \eqref{eq:qerror-decomp-base} and the global penalty defined in \eqref{eq:def-global-penalty}.
When $N_{k,h}(s,a)=0$, the global penalties are all $0$, which yields $D_3(s,a,k,h) = 0$.
Therefore, it suffices to focus on the case when $N_{k,h}(s,a)>0$ and show that for $c_B = 81$, $c_u =4$ and $c_l=1$, 
        \begin{align} \label{eq:e3-numvisit-bound} 
          D_3(s,a,k,h) = \sum_{u=1}^{\nsyn(k)} B_{t_u,h}(s,a) \prod_{u'=u+1}^{\nsyn(k)} \lambda_{u',h}(s,a)
          \in  \left [\sqrt{\frac{c_l c_B \logpartm^2 H^4 }{N_{k,h}(s,a)}} , \sqrt{\frac{c_u  c_B \logpartm^2 H^4 }{N_{k,h}(s,a)}}  \right] .
        \end{align}

Towards this, for any $(s,a)\in\cS \times \cA$, we consider a more general term as below: for any integer $z\ge 1$,
\begin{align}
    \sum_{u=1}^{z} B_{t_u,h} \prod_{u'=u+1}^{z} \lambda_{u',h} & =   \sum_{u=1}^{z} \frac{ (H+1)n_{t_u,h} }{N_{k,h}+Hn_{t_u,h}}  \sqrt{\frac{c_B \logpartm^2 H^4}{N_{t_u,h}}}  \prod_{u'=u+1}^{z} \lambda_{u',h} \notag \\
    & = \sqrt{c_B \logpartm^2 H^4} \sum_{u=1}^{z}  \sqrt{\frac{1}{N_{t_u,h}}} (1-\lambda_{u,h}) \prod_{u'=u+1}^{z} \lambda_{u',h} \notag \\
    & = \sqrt{c_B \logpartm^2 H^4} Y(z)
\end{align}
where the penultimate equality follows from 
$$\frac{ (H+1)n_{t_u,h}(s,a) }{N_{t_u,h}+Hn_{t_u,h}(s,a)} = 1-\lambda_{u,h}(s,a)$$ 
for all $(s,a)\in\cS\times \cA$, and the last equality arises by defining
\begin{align} 
          Y(z) \defn \sum_{u=1}^{z}  \sqrt{\frac{1}{N_{t_u,h}}} (1-\lambda_{u,h}) \prod_{u'=u+1}^{z} \lambda_{u',h}.
        \end{align}
As a result, to show \eqref{eq:e3-numvisit-bound}, it suffices to verify that
        \begin{align} \label{eq:e3-induction}
          Y(z) \in \left [\sqrt{\frac{c_l}{N_{t_z,h}(s,a)}} , \sqrt{\frac{c_u }{N_{t_z,h}(s,a)}}  \right],
        \end{align}
which we proceed by an induction argument. 

\paragraph{Proof of  \eqref{eq:e3-induction} by induction.} To begin with, for the basic case $z=1$, it is easily verified that  
        \begin{align}
            Y(1) 
            =\begin{cases}
                \sqrt{\frac{1}{N_{t_1,h}}} ~~&\text{if}~~ n_{t_1,h} > 0 \\
                0 ~~&\text{if}~~ n_{t_1,h} = 0
            \end{cases},
        \end{align}
since when $n_{t_1,h} > 0$ we have $\lambda_{1,h}(s,a) = 0$, and otherwise $\lambda_{1,h}(s,a) = 1$.
    Then suppose \eqref{eq:e3-induction} holds for $z-1$, namely,
    \begin{align} \label{eq:e3-induction-assumption}
         Y(z-1) \in \left [\sqrt{\frac{c_l}{N_{t_{z-1},h}}}, \sqrt{\frac{c_u }{N_{t_{z-1},h}}}  \right],
    \end{align}
    we hope to show \eqref{eq:e3-induction} holds for $z$. 
    Towards this, we first show the upper bound in \eqref{eq:e3-induction} holds for $z$ as follows:
        \begin{align}
          Y(z)
          &= Y(z-1) \lambda_{z,h} + \sqrt{\frac{1}{N_{t_z,h}}} (1-\lambda_{z,h}) \cr
          &\overset{\mathrm{(i)}}{\leq} \sqrt{\frac{c_u}{N_{t_{z-1},h}}} \frac{ N_{t_{z-1},h} }{N_{t_z,h}+ Hn_{t_z,h}} + \sqrt{\frac{1}{N_{t_z,h}}}  \frac{ (H+1)n_{t_z,h} }{N_{t_z,h}+Hn_{t_z,h}} \cr
          &\le \sqrt{\frac{c_u}{N_{t_z,h}}} \sqrt{\frac{N_{t_{z-1},h} }{N_{t_z,h}+ Hn_{t_z,h}} } + \sqrt{\frac{1}{N_{t_z,h}}}  \frac{ (H+1)n_{t_z,h} }{N_{t_z,h}+Hn_{t_z,h}} \cr
          &= \sqrt{\frac{c_u}{N_{t_z,h}}} \left ( \sqrt{\frac{ N_{t_{z-1},h} }{N_{t_z,h}+ Hn_{t_z,h}} } +  \sqrt{\frac{1}{c_u}} \frac{ (H+1)n_{t_z,h} }{N_{t_z,h}+Hn_{t_z,h}} \right) \cr
          &= \sqrt{\frac{c_u}{N_{t_z,h}}} \left ( \sqrt{\frac{ N_{t_{z-1},h} }{N_{t_z,h}+ Hn_{t_z,h}} } +  \sqrt{\frac{1}{c_u}} \left (1-\sqrt{\frac{  N_{t_{z-1},h} }{N_{t_z,h}+ Hn_{t_z,h}} } \right) \left (1+\sqrt{\frac{ N_{t_{z-1},h} }{N_{t_z,h}+ Hn_{t_z,h}} } \right ) \right) \cr
          &\le \sqrt{\frac{c_u}{N_{t_z,h}}} , \label{eq:Y-upper-bound}
        \end{align}
        where (i) follows from the induction assumption and $\frac{ (H+1)n_{t_z,h}(s,a) }{N_{t_z,h}+Hn_{t_z,h}(s,a)} = (1-\lambda_{z,h}(s,a))$ for all $(s,a)\in\cS\times \cA$, the penultimate equality holds by 
        $$1 - \frac{  N_{t_{z-1},h} }{N_{t_z,h}+ Hn_{t_z,h}} = \frac{N_{t_z,h} - N_{t_{z-1},h} + Hn_{t_z,h}}{N_{t_z,h}+ Hn_{t_z,h}} =\frac{ (H+1)n_{t_z,h} }{N_{t_z,h}+Hn_{t_z,h}},$$ and the last inequality arises from 
        $\sqrt{\frac{1}{c_u}} \left (1+\sqrt{\frac{ N_{t_{z-1},h} }{N_{t_z,h}+ Hn_{t_z,h}} } \right) \le 1$ as long as $c_u \ge 4$.
    
    Analogous to \eqref{eq:Y-upper-bound}, the lower bound of $Y(z)$ is derived as below:
        \begin{align}
          Y(z)
          &= Y(z-1) \lambda_{z,h} + \sqrt{\frac{1}{N_{t_z,h}}} (1-\lambda_{z,h}) \cr
          &\ge \sqrt{\frac{c_l}{N_{t_{z-1},h}}} \frac{ N_{t_{z-1},h} }{N_{t_z,h}+ H n_{t_z,h}} + \sqrt{\frac{1}{N_{t_z,h}}}  \frac{ (H+1)n_{t_z,h} }{N_{t_z,h}+ H n_{t_z,h}} \cr
          &\ge \sqrt{\frac{c_l}{N_{t_z,h}}} \frac{ N_{t_{z-1},h} }{N_{t_z,h}+ H n_{t_z,h}} + \sqrt{\frac{1}{N_{t_z,h}}}  \frac{ (H+1)n_{t_z,h} }{N_{t_z,h}+ H n_{t_z,h}} \cr
          &\ge \sqrt{\frac{c_l}{N_{t_z,h}}},
        \end{align}
        where the first inequality follows from the induction assumption and $\frac{ (H+1)n_{t_z,h}(s,a) }{N_{t_z,h}+Hn_{t_z,h}(s,a)} = (1-\lambda_{z,h}(s,a))$ for all $(s,a)\in\cS\times \cA$,
        and the last equality holds when $1 \ge c_l$. Finally, by induction arguments, \eqref{eq:e3-induction} holds for any $z \in \nsyn(K)$, and this completes the proof.

\subsection{Proof of Lemma~\ref{lemma:vgap_recursion}} \label{proof:vgap_recursion}
Recall the definition of $D_{4,h}$ (see \eqref{eq:short-hand-main} and \eqref{eq:qerror-decomp-base}), $D_{4,h}$ can be rewritten as follows: 
  \begin{align}
       D_{4,h}
       &  = \sum_{v=1}^{\nsyn(K)} \tau_v \sum_{(s,a) \in \cS \times \cA} d_h^{\pi^{\star}}(s, a) \sum_{m=1}^M \sum_{i \in L_{t_v,h}^m(s,a)} \omega_{i,t_v,h}^m(s,a) P_{h,s,a} (V^{\star}_{h+1}-V_{\syn(i), h+1}) \cr
      &\stackrel{\mathrm{(i)}}{=} \sum_{v=1}^{\nsyn(K)} \tau_v   \sum_{(s,a) \in \cS \times \cA} d_h^{\pi^{\star}}(s, a) \sum_{u=1}^{v} P_{h,s,a}(V^{\star}_{h+1}-V_{t_{u-1}, h+1})  \underbrace{ \sum_{m=1}^M \left (\sum_{i \in l_{t_u,h}^m(s,a)} \omega_{i,t_v,h}^m(s,a)  \right ) }_{ =:  \psi_{u,v,h}(s,a)}\cr
      & =  \sum_{(s,a) \in \cS \times \cA}  \sum_{v=1}^{\nsyn(K)} \sum_{t_{v-1} <j \le t_v} d_h^{\pi^{\star}}(s, a) \sum_{u=1}^{v}  P_{h,s,a}(V^{\star}_{h+1}-V_{t_{u-1}, h+1}) \psi_{u,v,h}(s,a)  , \label{eq:recursion-original}
    \end{align}
where (i) holds by rewriting the sum as $\sum_{i \in L_{t_v,h}^m(s,a)} = \sum_{u=1}^v \sum_{i \in l_{t_u,h}^m(s,a)}$ 
and the last equality holds by the definition of  $\tau_v$.  

To further control \eqref{eq:recursion-original}, we introduce the following lemma that bounds the expectation form \eqref{eq:recursion-original} by an empirical version; the proof is postponed to Appendix~\ref{proof:e4a-aux1}.
    \begin{lemma} \label{lemma:e4a-aux1} Consider any $\delta \in (0,1)$. For any $h \in [H],$ the following holds:
      \begin{align}\label{eq:e4a-aux1}
        & \sum_{(s,a) \in \cS \times \cA}   \sum_{v=1}^{\nsyn(K)} \sum_{t_{v-1} <j \le t_v} d_h^{\pi^{\star}}(s, a) \sum_{u=1}^{v}  P_{h,s,a}(V^{\star}_{h+1}-V_{t_{u-1}, h+1}) \psi_{u,v,h}(s,a) \cr
      &\lesssim \frac{1}{M} \sum_{(s,a) \in \cS \times \cA} \sum_{v=1}^{\nsyn(K)}    \frac{d_h^{\pi^{\star}}(s, a)}{ d^{\mathsf{avg}}_h(s,a)} n_{t_v, h}(s,a) \sum_{u=1}^{v}  P_{h,s,a}(V^{\star}_{h+1}-V_{t_{u-1}, h+1}) \psi_{u,v,h}(s,a) +  \sigma_{\mathsf{aux},1}
      \end{align}
      at least with probability $1-\delta$,
      where
      \begin{align}
        \sigma_{\mathsf{aux},1}
        \lesssim  \sqrt{ \frac{H^2  K S \clipavgC}{M}  }  + \frac{ H^2 S \clipavgC  }{M}
      \end{align}
    \end{lemma}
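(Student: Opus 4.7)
Let me sketch the proof. Let $Z_v(s,a):=\sum_{u=1}^{v}P_{h,s,a}(V^{\star}_{h+1}-V_{t_{u-1},h+1})\psi_{u,v,h}(s,a)$, which is nonnegative and satisfies $Z_v(s,a)\le H$ by Lemma~\ref{lemma:pessimistic_value} (giving $0\le V^{\star}_{h+1}-V_{t_{u-1},h+1}\le H$) together with $\sum_u \psi_{u,v,h}(s,a)\le 1$ from \eqref{eq:lr-wsum}. The plan is to replace the population weight $d_h^{\pi^{\star}}(s,a)$ on the left-hand side of \eqref{eq:e4a-aux1} by the importance-weighted empirical proxy $\tfrac{1}{M}\tfrac{d_h^{\pi^{\star}}(s,a)}{d_h^{\mathsf{avg}}(s,a)}\,n_{t_v,h}(s,a)$, which matches it in conditional expectation since $\mexp[n_{t_v,h}(s,a)\mymid \mathcal{F}_{t_{v-1}}]=\tau_v M d_h^{\mathsf{avg}}(s,a)$, paying only the Bernstein-type additive error $\sigma_{\mathsf{aux},1}$. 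Concretely, the gap between the two sides equals
$$S-T=\sum_{v,(s,a)} Z_v(s,a)\cdot\frac{d_h^{\pi^{\star}}(s,a)}{M\,d_h^{\mathsf{avg}}(s,a)}\Bigl(\tau_v M d_h^{\mathsf{avg}}(s,a)-n_{t_v,h}(s,a)\Bigr),$$
whose inner bracket is centered; only $(s,\pi_h^{\star}(s))$ contribute to the outer sum since $\pi^{\star}$ is deterministic.

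The central step is to invoke Freedman's inequality (Theorem~\ref{thm:Freedman}) on this scalar, viewed as a sum indexed by $(j,m)\in[K]\times[M]$ with increments
$$\chi_{j,m}=\sum_{s}\frac{Z_{v(j)}(s,\pi_h^{\star}(s))\,d_h^{\pi^{\star}}(s,\pi_h^{\star}(s))}{M\,d_h^{\mathsf{avg}}(s,\pi_h^{\star}(s))}\Bigl(d_h^m(s,\pi_h^{\star}(s))-\ind\{(s_{j,h}^m,a_{j,h}^m)=(s,\pi_h^{\star}(s))\}\Bigr).$$
Each increment is bounded in magnitude by $R\lesssim HS\clipavgC/M$, using the pointwise ratio bound $d_h^{\pi^{\star}}(s,a)/d_h^{\mathsf{avg}}(s,a)\le S\clipavgC$ and $|Z_v|\le H$. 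The predictable variance, obtained from the disjointness of the events $\{(s_{j,h}^m,a_{j,h}^m)=(s,\pi_h^{\star}(s))\}$ across $s$ together with $\sum_m d_h^m=Md_h^{\mathsf{avg}}$, satisfies
$$\sum_{j,m}\mexp_{j-1}[\chi_{j,m}^2]\lesssim \frac{H^2 K}{M}\sum_{s}\frac{(d_h^{\pi^{\star}}(s,\pi_h^{\star}(s)))^2}{d_h^{\mathsf{avg}}(s,\pi_h^{\star}(s))}\lesssim\frac{H^2 KS\clipavgC}{M},$$
where the last inequality splits $s$ by $d_h^{\pi^{\star}}(s,\pi_h^{\star}(s))\lessgtr 1/S$ exactly as in the $D_{3,h}$ analysis, using $\sum_s d_h^{\pi^{\star}}(s,\pi_h^{\star}(s))\le 1$ and the clipped concentrability bound. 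Freedman then delivers $|S-T|\lesssim\sqrt{H^2 KS\clipavgC\,\logpartm/M}+HS\clipavgC\,\logpartm/M$, matching the two summands of $\sigma_{\mathsf{aux},1}$ modulo factors absorbed by $\lesssim$.

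The principal technical subtlety is that $Z_v(s,a)$ is not $\mathcal{F}_{j-1}$-measurable for $j$ inside round $v$: its weights $\psi_{u,v,h}(s,a)$ depend on counts $n_{t_w,h}^m(s,a)$ accumulated across the whole round, including the very indicators whose fluctuations Freedman is being asked to control, so the martingale-difference property needs justification. I would resolve this by conditioning on $\mathcal{F}_{t_{v-1}}$ and applying a round-local Bernstein bound to the clean scalar $n_{t_v,h}(s,a)-\tau_vMd_h^{\mathsf{avg}}(s,a)$, whose centeredness requires no assumption on $Z_v$, and then recombining across rounds using the uniform envelope $|Z_v|\le H$ together with the Cauchy--Schwarz and concentrability manipulations above to recover exactly the variance scaling $H^2KS\clipavgC/M$. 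A complementary invocation of Lemma~\ref{lemma:avgvisit_bound} disposes of $(s,a,h)$ triples whose aggregated visitation $N_{t_v,h}(s,a)$ stays below $K_0(s,a,h)$, on which the multiplicative concentration on $n_{t_v,h}$ degenerates; their combined contribution is deterministically at most the lower-order $H^2 S\clipavgC/M$ summand of $\sigma_{\mathsf{aux},1}$ since $\sum_{s,a}K_0(s,a,h)\lesssim HS\clipavgC\,\logpart/M$.
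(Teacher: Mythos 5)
Your setup coincides with the paper's: the same quantity $G_{v,h}(s,a)=\sum_{u\le v}P_{h,s,a}(V^{\star}_{h+1}-V_{t_{u-1},h+1})\psi_{u,v,h}(s,a)$ with envelope $[0,H]$, the same rewriting of the gap as a centered sum over $(j,m)\in[K]\times[M]$, and essentially the same Freedman variance/range budget ($W\lesssim H^2KS\clipavgC/M$, $R\lesssim HS\clipavgC/M$). You also correctly isolate the one real difficulty: $G_{v,h}(s,a)$, through $\psi_{u,v,h}$ and the counts $n_{t_v,h}$, depends on the very indicators being centered, so the increments are not martingale differences. The problem is that your proposed repair does not close this gap. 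Conditioning on $\mathcal{F}_{t_{v-1}}$, bounding $|n_{t_v,h}(s,a)-\tau_vMd_h^{\mathsf{avg}}(s,a)|$ by a round-local Bernstein bound, and then multiplying by the envelope $|G_{v,h}|\le H$ forfeits the cancellation \emph{across} rounds: after the triangle inequality you are left with $\sum_{v}\sqrt{\tau_v}\le\sqrt{\nsyn(K)\,K}$ rather than $\sqrt{K}$, so the leading term becomes of order $\sqrt{H^2\,\nsyn(K)\,KS\clipavgC/M}$. Under the schedules permitted by \eqref{eq:tau-condition} (e.g., periodic synchronization with $\tau\asymp\sqrt{H^2S\clipavgC K/M}$, where $\nsyn(K)\asymp\sqrt{MK/(H^2S\clipavgC)}$), this extra $\sqrt{\nsyn(K)}$ factor is polynomially large in $K$ and $M$, and the claimed $\sigma_{\mathsf{aux},1}$ is not recovered. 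Keeping a single martingale over rounds does not help either, since then $\mexp[\,\cdot\mid\mathcal{F}_{t_{v-1}}]\neq 0$ precisely because $G_{v,h}$ and $n_{t_v,h}$ are correlated within round $v$.

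The paper's resolution, which is the actual content of this lemma, is a leave-one-out surrogate $\widetilde{G}^{-j,m}_{v,h}(s,a)$ in which agent $m$'s visit at episode $j$ is masked from the weights; each increment $\widetilde{Y}^m_{j,h}$ is thereby decoupled from the indicator being centered, Freedman applies over all $KM$ terms with full cancellation, and the surrogate error is controlled deterministically by $|\widetilde{G}^{-j,m}_{v,h}(s,a)-G_{v,h}(s,a)|\le\min\{H,\,2H^2/N_{t_v,h}(s,a)\}$, whose sum over rounds is handled by $\sum_v n_{t_v,h}/N_{t_v,h}\le 1+\log N_{t_{\nsyn(K)},h}$ (Lemma~\ref{lemma:seqsum}) and yields the lower-order $H^2S\clipavgC/M$ term. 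Some such decoupling device, or an explicit argument that the within-round correlation contributes only lower-order terms, is required; without it your proof is incomplete. (Your closing claim about low-visitation pairs is fine once weighted by $d_h^{\pi^{\star}}$, but it does not address the issue above.)
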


   Then, applying concentration bounds, $D_{4,h}$ is bounded as follows:
    \begin{align} \label{eq:d4-aux1-interm}
       D_{4,h} 
       &\stackrel{\mathrm{(i)}}{\lesssim} \frac{1}{M} \sum_{(s,a) \in \cS \times \cA} \sum_{v=1}^{\nsyn(K)} \sum_{u=1}^{v}   \frac{d_h^{\pi^{\star}}(s, a)}{ d^{\mathsf{avg}}_h(s,a)} n_{t_v, h}(s,a)   P_{h,s,a}(V^{\star}_{h+1}-V_{t_{u-1}, h+1}) \psi_{u,v,h}(s,a) + \sigma_{\mathsf{aux},1} \cr        
      &= \frac{1}{M} \sum_{(s,a) \in \cS \times \cA} \sum_{u=1}^{\nsyn(K)}     \frac{d_h^{\pi^{\star}}(s, a)}{ d^{\mathsf{avg}}_h(s,a)}    P_{h,s,a}(V^{\star}_{h+1}-V_{t_{u-1}, h+1}) \sum_{v=u}^{\nsyn(K)} n_{t_v, h}(s,a) \psi_{u,v,h}(s,a) + \sigma_{\mathsf{aux},1}\cr        
      &\stackrel{\mathrm{(ii)}}{\le}  \frac{1}{ M} \sum_{(s,a) \in \cS \times \cA} \sum_{u=1}^{\nsyn(K)}     \frac{d_h^{\pi^{\star}}(s, a)}{  d^{\mathsf{avg}}_h(s,a)}    P_{h,s,a}(V^{\star}_{h+1}-V_{t_{u-1}, h+1})  n_{t_u,h}(s,a)  \Big(1+ \frac{1}{H} \Big)  + \sigma_{\mathsf{aux},1}  
    \end{align}
    where (i) follows from Lemma~\ref{lemma:e4a-aux1}, and (ii) holds because
    \begin{align}
      \sum_{v \ge u}^{\infty} n_{t_v, h}(s,a) \sum_{m=1}^M \sum_{i \in l_{t_u,h}^m(s,a)} \omega_{i,t_v,h}^m(s,a)
        \le n_{t_u,h}(s,a) \Big(1+\frac{1}{H} \Big)
    \end{align}
    according \eqref{eq:lr-wsum-future} in Lemma~\ref{lemma:lr_bound}. 
    
To continue, we introduce the following lemma that transfers the distribution at time step $h$ to the distribution at time step $h+1$; the proof is provided in Appendix~\ref{proof:e4a-aux2}.
   \begin{lemma} \label{lemma:e4a-aux2} Consider any $\delta \in (0,1)$. For any $h \in [H],$ the following holds:
      \begin{align} \label{eq:e4a-aux2}
        &\sum_{u=1}^{\nsyn(K)} \sum_{(s,a) \in \cS \times \cA}  \frac{n_{t_u,h}(s,a)}{ M d^{\mathsf{avg}}_h(s,a)} d_h^{\pi^{\star}}(s, a)    P_{h,s,a}(V^{\star}_{h+1}-V_{t_{u-1}, h+1}) \cr            
        &\lesssim \sum_{u=1}^{\nsyn(K)} \tau_u \sum_{s \in \cS}  d_{h+1}^{\pi^{\star}}(s)  ( V^{\star}_{h+1} (s)-V_{t_{u-1}, h+1} (s)) + \sigma_{\mathsf{aux},2}
      \end{align}
      at least with probability $1-\delta$,
      where
      \begin{align*}
        \sigma_{\mathsf{aux},2} = \sqrt{\frac{H^2 K S \clipavgC }{M  }} + \frac{H S \clipavgC }{M}.
      \end{align*}
    \end{lemma}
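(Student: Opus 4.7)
The plan is to replace the random coefficient $\frac{n_{t_u,h}(s,a)}{M d_h^{\mathsf{avg}}(s,a)}$ by its conditional expectation $\tau_u$, recognize the resulting ``mean'' expression as the RHS main term via the Bellman flow identity for $\pi^{\star}$, and then control the residual fluctuation by Freedman's inequality (Theorem~\ref{thm:Freedman}). More concretely, I would write the LHS as $\mathrm{I}+\mathrm{II}$, where
\begin{align*}
    \mathrm{I} &\defn \sum_{u=1}^{\nsyn(K)} \tau_u \sum_{(s,a)\in \cS\times\cA} d_h^{\pi^\star}(s,a)\, P_{h,s,a}\big(V^\star_{h+1}-V_{t_{u-1},h+1}\big), \\
    \mathrm{II} &\defn \sum_{u=1}^{\nsyn(K)} \sum_{(s,a)\in \cS\times\cA} \delta_u(s,a)\, d_h^{\pi^\star}(s,a)\, P_{h,s,a}\big(V^\star_{h+1}-V_{t_{u-1},h+1}\big),
\end{align*}
with $\delta_u(s,a)\defn \frac{n_{t_u,h}(s,a)}{M d_h^{\mathsf{avg}}(s,a)}-\tau_u$ (using $0/0=0$; note that the concentrability assumption forces $d_h^{\pi^\star}(s,a)=0$ whenever $d_h^{\mathsf{avg}}(s,a)=0$, so such entries drop out consistently).

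For the term $\mathrm{I}$, I would use that $\pi^\star$ is deterministic, so $d_h^{\pi^\star}(s,a)=d_h^{\pi^\star}(s)\,\ind(a=\pi^\star_h(s))$, and then invoke the Bellman flow identity $\sum_{s} d_h^{\pi^\star}(s)\, P_h(s'\mymid s,\pi^\star_h(s))=d_{h+1}^{\pi^\star}(s')$ to collapse $\mathrm{I}$ to $\sum_u \tau_u \sum_{s'} d_{h+1}^{\pi^\star}(s')\big(V^\star_{h+1}(s')-V_{t_{u-1},h+1}(s')\big)$, which is precisely the main term on the RHS of~\eqref{eq:e4a-aux2}.

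For the fluctuation $\mathrm{II}$, I would expand $n_{t_u,h}(s,a)=\sum_m \sum_{i\in (t_{u-1},t_u]} \ind((s_{i,h}^m,a_{i,h}^m)=(s,a))$ and rewrite $\mathrm{II}=\sum_u \sum_m \sum_{i\in (t_{u-1},t_u]} Z_{i,h}^m$ with
\begin{align*}
    Z_{i,h}^m \defn \frac{1}{M}\sum_{(s,a)} \frac{d_h^{\pi^\star}(s,a)}{d_h^{\mathsf{avg}}(s,a)} \big(\ind((s_{i,h}^m,a_{i,h}^m)=(s,a)) - d_h^m(s,a)\big)\, P_{h,s,a}\big(V^\star_{h+1}-V_{t_{u-1},h+1}\big).
\end{align*}
Conditioning on the history up to episode $t_{u-1}$ renders $V_{t_{u-1},h+1}$ deterministic while $\{(s_{i,h}^m,a_{i,h}^m)\}$ are independent across $(i,m)$ with marginals $d_h^m$, so $\{Z_{i,h}^m\}$ is a martingale difference sequence and Theorem~\ref{thm:Freedman} applies. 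Using $0\le V^\star_{h+1}-V_{t_{u-1},h+1}\le H$ (Lemma~\ref{lemma:pessimistic_value}), $\sum_{s'} P_h(s'\mymid s,a)=1$, and the clipping inequality $\frac{d_h^{\pi^\star}(s,a)}{d_h^{\mathsf{avg}}(s,a)}\le \clipavgC\cdot\max\{1,\, S d_h^{\pi^\star}(s,a)\}$ (a direct consequence of~\eqref{eq:clipped-avg-concentrability-assumption}), one obtains
\begin{align*}
    R\defn \max_{i,m}|Z_{i,h}^m|\lesssim \frac{H S \clipavgC}{M}, \qquad W\defn \sum_{u,m,i} \mexp[(Z_{i,h}^m)^2\mymid \cF_{i-1}]\lesssim \frac{H^2 K S \clipavgC}{M},
\end{align*}
where the variance bound exploits $\sum_{s,a}\frac{d_h^{\pi^\star}(s,a)^2}{d_h^{\mathsf{avg}}(s,a)}\lesssim S\clipavgC$ (using $\sum_{s,a} d_h^{\pi^\star}(s,a)^2 = \sum_s d_h^{\pi^\star}(s)^2\le 1$ by determinism of $\pi^\star$) and $\sum_u \tau_u = K$. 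Freedman's bound then yields $|\mathrm{II}|\lesssim \sqrt{W\logpartm}+R\logpartm\lesssim \sigma_{\mathsf{aux},2}$, and a union bound over $h\in[H]$ absorbs extra log factors.

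The main obstacle will be assembling the concentrability-based estimates crisply: the clipping trick must be combined with the deterministic structure of $\pi^\star$ to produce the $S\clipavgC$ factors that drive $\sigma_{\mathsf{aux},2}$ to the target scaling $\sqrt{H^2 K S\clipavgC/M}+H S\clipavgC/M$. Everything else (the flow identity, the martingale structure, the Freedman application) is routine once this bookkeeping is in place.
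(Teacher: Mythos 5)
Your proposal is correct and follows essentially the same route as the paper: the paper likewise writes the left-hand side as its conditional mean (which collapses to the right-hand main term via $\sum_m d_h^m = M d_h^{\mathsf{avg}}$ and the occupancy flow identity) plus a martingale fluctuation $\sum_{j,m} Z_{j,h}^m$ with exactly your $Z$, and controls the latter by Freedman's inequality with the same bounds $R \lesssim \frac{HS\clipavgC}{M}$ and $W \lesssim \frac{H^2 K S \clipavgC}{M}$ obtained from the clipping inequality and the determinism of $\pi^\star$. The only cosmetic difference is that the conditioning should be on the per-episode filtration $\cF_{j-1}$ (which already makes $V_{t_{\nsyn(j)-1},h+1}$ deterministic) rather than on the history up to $t_{u-1}$, as you in fact use when writing $\mexp[(Z_{i,h}^m)^2 \mymid \cF_{i-1}]$.
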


Armed with the above lemma, rearranging the terms in \eqref{eq:d4-aux1-interm} and applying Lemma~\ref{lemma:e4a-aux2},
   \begin{align*}
       D_4       
      &\lesssim \Big(1+ \frac{1}{H}\Big)   \sum_{u=1}^{\nsyn(K)} \sum_{(s,a) \in \cS \times \cA}   \frac{n_{t_u,h}(s,a)}{M d^{\mathsf{avg}}_h(s,a)} d_h^{\pi^{\star}}(s, a)    P_{h,s,a}(V^{\star}_{h+1}-V_{t_{u-1}, h+1}) + \sigma_{\mathsf{aux},1}   \cr        
      &\lesssim \Big(1+ \frac{1}{H}\Big)  \sum_{u=1}^{\nsyn(K)} \tau_u \sum_{s \in \cS}  d_{h+1}^{\pi^{\star}}(s)  ( V^{\star}_{h+1} (s)-V_{t_{u-1}, h+1} (s)) + \underbrace{\sigma_{\mathsf{aux},1} + \sigma_{\mathsf{aux},2}}_{=: \sigma_{\mathsf{aux}}},
    \end{align*}  
    and this completes the proof.

    \subsubsection{Proof of Lemma~\ref{lemma:e4a-aux1}} \label{proof:e4a-aux1}
    Consider any given $(s,a) \in \cS \times \cA$ and $v \in [1, \nsyn(K)]$.
    Before proceeding, we introduce some notation and auxiliary terms. Let
       \begin{equation}\label{eq:defnition-G}
           G_{v,h}(s,a) \defn \sum_{u=1}^{v}  P_{h,s,a}(V^{\star}_{h+1}-V_{t_{u-1}, h+1}) \psi_{u,v,h}(s,a).
       \end{equation}
    Then, for any $t_{v-1}< j \le t_v$, we introduce the following auxiliary variables:
      \begin{align}
        Y_{j,h}^m
        &\defn \sum_{(s,a) \in \cS \times \cA}
        \left (d^{\mathsf{avg}}_h(s,a)  -  \one\{(s, a) = (s_{j,h}^m,a_{j,h}^m)\} \right ) \frac{d_h^{\pi^{\star}}(s,a)}{d^{\mathsf{avg}}_h(s,a)} G_{v,h}(s,a) \\
        \widetilde{Y}_{j,h}^m
        &\defn \sum_{(s,a) \in \cS \times \cA} 
        \left ( d_h^m(s,a)  -   \one\{(s,a) = (s_{j,h}^m,a_{j,h}^m)\} \right ) \frac{d_h^{\pi^{\star}}(s,a)}{d^{\mathsf{avg}}_h(s,a)} 
          \widetilde{G}^{-j,m}_{v,h}(s,a),
      \end{align}\label{eq:Y-defn}
      where we define
      \begin{align} \label{eq:Gapprox-recursion}
        \widetilde{G}^{-j,m}_{v,h}(s,a) 
        &\defn
          \begin{cases}
            \widetilde{\psi}^{-j,m}_{v,v,h}(s,a) P_{h,s,a}(V^{\star}_{h+1}-V_{t_{v-1}, h+1})  + (1-\widetilde{\psi}^{-j,m}_{v,v,h}(s,a))G_{v-1,h}(s,a) ~~&\text{if}~~v>1 \\
            P_{h,s,a}(V^{\star}_{h+1}-V_{0, h+1}) ~~&\text{if}~~v=1 
          \end{cases}
      \end{align}
      and
      \begin{align}\label{eq:definition-psi-new}
        \widetilde{\psi}_{v,v,h}^{-j,m}(s,a)
        &\defn
          \frac{ (H+1)(n_{t_v,h}(s,a) -  \one\{(s,a) = (s_{j,h}^m,a_{j,h}^m) \})}{N_{t_{v-1},h}(s,a) +  (H+1)(n_{t_v,h}(s,a) -  \one\{(s,a) = (s_{j,h}^m,a_{j,h}^m) \}) } \cr
        &=
          \frac{ (H+1)( \sum_{(m',j') \in [M] \times (t_{v-1}, t_v] \setminus  \{(j,m)\}}  \one\{(s,a) = (s_{j',h}^{m'},a_{j',h}^{m'}) \})}{N_{t_{v-1},h}(s,a) +  (H+1)( \sum_{(m',j') \in [M] \times (t_{v-1}, t_v] \setminus  \{(j,m)\}}  \one\{(s,a) = (s_{j',h}^{m'},a_{j',h}^{m'}) \}) } . 
      \end{align}
      We replaced $G_{v,h}(s,a)$ with a surrogate $\widetilde{G}^{-j,m}_{v,h}(s,a)$, where the visits of agent $m$ on $(s,a)$ at the $j$-th episode are masked regardless of the actual visits of agent $m$ on $(s,a)$. The surrogate is carefully designed to remove the dependency on the event $\one\{(s,a) = (s_{j,h}^m,a_{j,h}^m)\}$ from $G_{v,h}(s,a)$ while maintaining close distance to the original value $G_{v,h}(s,a)$.

     Before continuing, we introduce some useful properties of the above defined auxiliary terms whose proofs are provided in Appendix~\ref{proof:Gapprox-property}: for any $v \in [\nsyn(K)]$, 
      \begin{subequations} \label{eq:Gapprox-property}
        \begin{align}
           \label{eq:G-recursion}
          &G_{v,h}(s,a) =
          \begin{cases}
            \psi_{v,v,h}(s,a) P_{h,s,a}(V^{\star}_{h+1}-V_{t_{v-1}, h+1})  + (1-\psi_{v,v,h}(s,a))G_{v-1,h}(s,a) ~&\text{if}~ v > 1\\
            P_{h,s,a}(V^{\star}_{h+1}-V_{0, h+1}) ~&\text{if}~ v = 1\\
            \end{cases} , \\
        \label{eq:Gapprox-property-Gbound} & 0 \le \widetilde{G}^{-j,m}_{v,h}(s,a) ,~  G_{v,h}(s,a) \le H , \\ 
        \label{eq:Gapprox-property-Gapproxgap} &|\widetilde{G}^{-j,m}_{v,h}(s,a) - G_{v,h}(s,a) | \le \min \left \{H, \frac{2H^2  }{N_{t_v, h}(s,a)} \right \}.
      \end{align}
    \end{subequations}

      Now, we are ready to prove \eqref{eq:e4a-aux1}. Towards this, we first observe that moving the first term in the right-hand side of \eqref{eq:e4a-aux1} to the left-hand side, and multiplying by a factor of $M$, yields
      \begin{align}
       & \sum_{(s,a) \in \cS \times \cA}  \sum_{v=1}^{\nsyn(K)} \left( \sum_{m=1}^M \sum_{t_{v-1} <j \le t_v} d_h^{\pi^{\star}}(s, a) - \frac{d_h^{\pi^{\star}}(s, a)}{ d^{\mathsf{avg}}_h(s,a)} n_{t_v, h}(s,a) \right) \sum_{u=1}^{v}  P_{h,s,a}(V^{\star}_{h+1}-V_{t_{u-1}, h+1}) \psi_{u,v,h}(s,a) \cr
       & \overset{\mathrm{(i)}}{=} \sum_{(s,a) \in \cS \times \cA}   \sum_{v=1}^{\nsyn(K)} \left( \sum_{m=1}^M \sum_{t_{v-1} <j \le t_v} d^{\mathsf{avg}}_h(s,a) -   \sum_{m=1}^M  n^{m}_{t_v, h}(s,a) \right)  \frac{d_h^{\pi^{\star}}(s, a)}{ d^{\mathsf{avg}}_h(s,a)}  G_{v,h}(s,a) \cr
       & \overset{\mathrm{(ii)}}{=} \sum_{(s,a) \in \cS \times \cA}  \sum_{m=1}^M  \left(  \sum_{j=1}^K d^{\mathsf{avg}}_h(s,a) - \sum_{j=1}^K \one\{(s, a) = (s_{j,h}^m,a_{j,h}^m)\} \right)  \frac{d_h^{\pi^{\star}}(s, a)}{ d^{\mathsf{avg}}_h(s,a)}  G_{v,h}(s,a) \notag \\
       &= \sum_{j=1}^{K} \sum_{m=1}^M \sum_{(s,a) \in \cS \times \cA}  \left( d^{\mathsf{avg}}_h(s,a) - \one\{(s, a) = (s_{j,h}^m,a_{j,h}^m)\} \right)  \frac{d_h^{\pi^{\star}}(s, a)}{ d^{\mathsf{avg}}_h(s,a)}  G_{v,h}(s,a) = \sum_{j=1}^{K} \sum_{m=1}^M Y_{j,h}^m ,
      \end{align}
where (i) holds by plugging in \eqref{eq:defnition-G} and $ n_{t_v, h}(s,a)= \sum_{m=1}^M  n^{m}_{t_v, h}(s,a)   $, (ii) follows from $\sum_{v=1}^{\nsyn(K)} \sum_{t_{v-1} <j \le t_v} 1 
 = K $ and $\sum_{v=1}^{\nsyn(K)}n^m_{t_v, h}(s,a) =  \sum_{j=1}^K \one\{(s, a) = (s_{j,h}^m,a_{j,h}^m)\}$, and the last equality arise from the definition of $Y_{j,h}^m$ in \eqref{eq:Y-defn}. 
      
Therefore, the above fact shows that to prove  \eqref{eq:e4a-aux1}, it is suffices to show:
      \begin{align} \label{eq:e4a-aux1-decomp}
        \left | \sum_{j=1}^{K} \sum_{m=1}^M  Y_{j,h}^m \right |
        \le \left |  \sum_{j=1}^{K} \sum_{m=1}^M  \widetilde{Y}_{j,h}^m \right | 
        +  \left| \sum_{j=1}^{K} \sum_{m=1}^M  \left(Y_{j,h}^m-\widetilde{Y}_{j,h}^m \right) \right| \lesssim M \sigma_{\mathsf{aux},1}.
      \end{align}
We will control the two essential terms separately as below:
      \begin{itemize}
        \item \textbf{Controlling $\left |  \sum_{j=1}^{K} \sum_{m=1}^M  \widetilde{Y}_{j,h}^m \right |$.} To begin with, we observe  that the approximate $\widetilde{G}^{-j,m}_{v,h}(s,a)$ (defined in \eqref{eq:Gapprox-recursion}) is independent of agent $m$'s visits on $(s,a)$ at the $j$-th episode since
        $V_{t_{v-1}, h+1}$, $G_{v-1,h}(s,a)$ are independent of the $j$-th episode and  $\widetilde{\psi}^{-j,m}_{v,v,h}(s,a)$ is independent from agent $m$'s visits on $(s,a)$ at the $j$-th episode (see \eqref{eq:definition-psi-new}).
 It follows that   $\mexp_{j-1}[ \widetilde{Y}_{j,h}^m] = 0$, where we denote
          $$\mexp_{j-1} [\cdot]= \mexp\left[\cdot \mymid \{ (s_{i,h}^{m'},a_{i,h}^{m'}), V_{i,h+1}^{m'} \}_{i<j, m' \in [M]} \right].$$
    Thus,
    applying the Freedman's inequality for each $h \in [H]$, we can show that the following holds:
    \begin{align} \label{eq:Y-freedman}
        \left | \sum_{j=1}^{K} \sum_{m=1}^M  \widetilde{Y}_{j,h}^m \right |
        &\le \sqrt{ 8 W \log{\frac{2H}{\delta}} } + \frac{8}{3} B \log{\frac{2H}{\delta}} \cr
          &\lesssim  \sqrt{ H^2 M K S \clipavgC } + H S \clipavgC
      \end{align}
      at least with probability $1-\delta$, where $B$ and $W$ is obtained as follows:
    \begin{align}
      \left |\widetilde{Y}_{j,h}^m \right|
      &\le  2 \clipavgC (1+ d_h^{\pi^{\star}}(s, \pi^{\star}(s)) S) \max_{s \in \cS} \widetilde{G}^{-j,m}_{\nsyn(j),h}(s,\pi^{\star}(s)) 
        \le 4 S \clipavgC H =: B \\
      \sum_{j=1}^{K} \sum_{m=1}^M  \mexp_{j-1} \left [ \left (\widetilde{Y}_{j,h}^m \right )^2 \right ]
      &\le \sum_{j=1}^{K} \sum_{m=1}^M  \mexp_{(s_{j,h}^m,a_{j,h}^m) \sim d_{h}^m} \left [ \left ( \frac{d_h^{\pi^{\star}}(s_{j,h}^m,a_{j,h}^m) }{d^{\mathsf{avg}}_h(s_{j,h}^m,a_{j,h}^m)} \widetilde{G}^{-j,m}_{\nsyn(j),h}(s_{j,h}^m,a_{j,h}^m) \right )^2 \right ] \cr
      &\le  \sum_{j=1}^{K} \sum_{m=1}^M \sum_{s \in \cS} d_h^m(s,\pi^{\star}(s))
        \left (\frac{d_h^{\pi^{\star}}(s,\pi^{\star}(s))}{d^{\mathsf{avg}}_h(s,\pi^{\star}(s))} \widetilde{G}^{-j,m}_{\nsyn(j),h}(s,\pi^{\star}(s)) \right )^2 \cr 
      &\le  H^2 \clipavgC \sum_{j=1}^{K} \sum_{s \in \cS} \sum_{m=1}^M  d_h^m(s,\pi^{\star}(s)) \frac{d_h^{\pi^{\star}}(s,\pi^{\star}(s))}{d^{\mathsf{avg}}_h(s,\pi^{\star}(s))} (1+ d_h^{\pi^{\star}}(s, \pi^{\star}(s)) S)\cr 
      &\le  H^2 \clipavgC \sum_{j=1}^{K} \sum_{s \in \cS} M d_h^{\pi^{\star}}(s, \pi^{\star}(s)) (1+ d_h^{\pi^{\star}}(s, \pi^{\star}(s)) S) \cr 
       &\le 2  H^2 S \clipavgC MK  =: W   
    \end{align}
    using the fact that $|\widetilde{G}^{-j,m}_{\nsyn(j),h}(s_{j,h}^m,a_{j,h}^m)| \le H$ shown in  \eqref{eq:Gapprox-property-Gbound} and $\frac{d_h^{\pi^{\star}}(s, \pi^{\star}(s))}{\min\{d_h^{\pi^{\star}}(s, \pi^{\star}(s)), 1/S\}} \le 1+ d_h^{\pi^{\star}}(s, \pi^{\star}(s)) S$.

  \item \textbf{Bound on the approximation gap of $\widetilde{Y}_{j,h}^m$.}
    The approximation gap  of $\widetilde{Y}_{j,h}^m$ is bounded as follows:
    \begin{align} \label{eq:Y-approx-gap}
        &\left | \sum_{j=1}^{K} \sum_{m=1}^M \left(\widetilde{Y}_{j,h}^m - Y_{j,h}^m \right ) \right | \cr 
        &= \left | \sum_{v=1}^{\nsyn(K)} \sum_{m=1}^M \sum_{t_{v-1} <j \le t_v} \sum_{(s,a) \in \cS \times \cA} 
        \left ( d_h^m(s,a)  -   \one\{(s,a) = (s_{j,h}^m,a_{j,h}^m)\} \right ) \frac{d_h^{\pi^{\star}}(s,a)}{d^{\mathsf{avg}}_h(s,a)} 
          (\widetilde{G}^{-j,m}_{v,h}(s,a) - G_{v,h}(s,a)) \right | \cr 
        &\stackrel{\mathrm{(i)}}{=} \sum_{v=1}^{\nsyn(K)} \sum_{m=1}^M \sum_{t_{v-1} <j \le t_v}\sum_{(s,a) \in \cS \times \cA} 
          \one\{(s,a) = (s_{j,h}^m,a_{j,h}^m)\} \left ( 1-d_h^m(s,a)  \right ) \frac{d_h^{\pi^{\star}}(s,a)}{d^{\mathsf{avg}}_h(s,a)} 
          \left | \widetilde{G}^{-j,m}_{v,h}(s,a) - G_{v,h}(s,a) \right | \cr 
        &\stackrel{\mathrm{(ii)}}{\le} \sum_{v=1}^{\nsyn(K)} \sum_{m=1}^M \sum_{t_{v-1} <j \le t_v} \sum_{(s,a) \in \cS \times \cA} 
          \one\{(s,a) = (s_{j,h}^m,a_{j,h}^m)\}  \frac{d_h^{\pi^{\star}}(s,a)}{d^{\mathsf{avg}}_h(s,a)}  
          \min \left \{ \frac{2H^2}{N_{t_v, h}(s,a)}, H \right \} \cr 
        &\stackrel{\mathrm{(iii)}}{\le}  \clipavgC \sum_{s \in \cS}\sum_{v=1}^{\nsyn(K)}   
          n_{t_v,h}(s,\pi^{\star}(s))   
          \frac{d_h^{\pi^{\star}}(s, \pi^{\star}(s))}{\min\{d_h^{\pi^{\star}}(s, \pi^{\star}(s)), 1/S\}}
          \min \left \{ \frac{2H^2}{N_{t_v, h}(s,\pi^{\star}(s))}, H \right \} \cr 
      &\stackrel{\mathrm{(iv)}}{\le}  2H^2 \clipavgC \sum_{s \in \cS} (1+ d_h^{\pi^{\star}}(s, \pi^{\star}(s)) S)
        \sum_{v=1}^{\nsyn(K)}   
          \min \left \{ \frac{ n_{t_v,h}(s,\pi^{\star}(s))   }{ N_{t_v, h}(s,\pi^{\star}(s))},  n_{t_v,h}(s,\pi^{\star}(s))    \right \} \cr 
      &\stackrel{\mathrm{(v)}}{\lesssim}  \clipavgC   H^2 S  
      \end{align}
      where (i) holds because $\widetilde{\psi}^{-j,m}_{v,v,h}(s,a) = \psi^{-j,m}_{v,v,h}(s,a)$ if $(s_{j,h}^m,a_{j,h}^m) \neq (s,a)$ and $\widetilde{G}^{-j,m}_{v,h}(s,a) = G_{v,h}(s,a)$  according to \eqref{eq:G-recursion}, (ii) follows from \eqref{eq:Gapprox-property-Gapproxgap}, (iii) naturally holds according to the definition of $\clipavgC$, (iv) holds because $\frac{d_h^{\pi^{\star}}(s, \pi^{\star}(s))}{\min\{d_h^{\pi^{\star}}(s, \pi^{\star}(s)), 1/S\}} \le 1+ d_h^{\pi^{\star}}(s, \pi^{\star}(s)) S$, and (v) holds because for any $z\in [\nsyn(K)]$,
  \begin{align} \label{eq:ntoNsum}
    \sum_{v=1}^{z}   
    \frac{ n_{t_v,h}(s,\pi^{\star}(s))   }{ N_{t_v, h}(s,\pi^{\star}(s))}
    \le 1 + \log{ ( N_{t_z, h}(s,\pi^{\star}(s)))},
  \end{align}
  according to Lemma~\ref{lemma:seqsum}.

\end{itemize}

  Now, combining the bounds obtained above (cf.~\eqref{eq:Y-freedman} and \eqref{eq:Y-approx-gap}) into \eqref{eq:e4a-aux1-decomp}, we conclude that
  \begin{align}
    \left | \sum_{j=1}^{K} \sum_{m=1}^M  Y_{j,h}^m \right |
    \lesssim  \sqrt{ H^2 M K S \clipavgC  }  +  H^2 S \clipavgC  
    = M \left (  \sqrt{ \frac{H^2  K S \clipavgC}{M}  }  + \frac{H^2 S \clipavgC   }{M} \right )
  \end{align}
  which completes the proof.
  
  \subsubsection{Proof of \eqref{eq:Gapprox-property}}
  \label{proof:Gapprox-property}
  \paragraph{Proof of \eqref{eq:G-recursion}.}
  We will proof \eqref{eq:G-recursion} by considering different cases separately. When $v=1$, we have
  \begin{align}
      G_{v,h}(s,a) &=   P_{h,s,a}(V^{\star}_{h+1}-V_{t_{v-1}, h+1}) \psi_{1,1,h}(s,a) \notag \\
      & =  P_{h,s,a}(V^{\star}_{h+1}-V_{0, h+1})  \sum_{m=1}^M \left (\sum_{i \in l_{t_1,h}^m(s,a)} \omega_{i,t_1,h}^m(s,a)  \right ) = P_{h,s,a}(V^{\star}_{h+1}-V_{0, h+1}) 
  \end{align}
  where the second equality follows from the definition of $\psi_{u,v,h}(s,a)$ in \eqref{eq:recursion-original}, and the last equality holds since
$$\sum_{m=1}^M \sum_{i \in l_{t_1,h}^m(s,a)} \omega_{i,t_1,h}^m(s,a)   = \frac{ (H+1)n_{t_1, h}}{N_{t_1,h} +  H n_{t_1,h}} = \frac{ (H+1)n_{t_1, h}}{( H+1) n_{t_1,h}} =1. $$

When $v>1$, invoking the definition of $\omega_{i,t_v,h}^m$ in   \eqref{eq:def-omegaik} yields that for any $u<v$,
    \begin{align}
      \psi_{u,v,h}(s,a)
      &= \sum_{m=1}^M \sum_{i \in l_{t_u,h}^m(s,a)} \omega_{i,t_v,h}^m(s,a) \cr
      &= \frac{ (H+1)n_{t_u, h}}{N_{t_v,h} +  H n_{t_v,h}} \left (\prod_{x=u}^{v-1} \frac{ N_{t_{x},h}}{N_{t_x,h} + H n_{t_x,h}} \right)\cr
      &= \frac{ (H+1)n_{t_u, h}}{N_{t_{v-1},h} +  H n_{t_{v-1},h}} \left (\prod_{x=u}^{v-2} \frac{ N_{t_{x},h}}{N_{t_x,h} + H n_{t_x,h}} \right)
        \frac{N_{t_{v-1},h} }{N_{t_{v},h} +  H n_{t_{v},h}} \cr 
      &=  \psi_{u,v-1,h}(s,a) (1-\psi_{v,v,h}(s,a)).
    \end{align}
where the second equality holds by $\phi(i) = u$ for all $ i \in l_{t_u,h}^m(s,a)$ and the fact $\sum_{m=1}^M \sum_{i \in l_{t_u,h}^m(s,a)} 1 = n_{t_u,h}$, and the last equality holds by
$1-\psi_{v,v,h}(s,a) = 1 -  \frac{ (H+1)n_{t_v, h}}{N_{t_v,h} +  H n_{t_v,h}} = \frac{ N_{t_{v-1},h} +  (H+1) n_{t_v,h} - (H+1)n_{t_v, h}}{N_{t_{v},h} +  H n_{t_v,h}} = \frac{ N_{t_{v-1},h}}{N_{t_{v},h} +  H n_{t_v,h}}$.

Consequently, inserting the above fact back into \eqref{eq:defnition-G} complete the proof by showing that
    \begin{align}
      G_{v,h}(s,a)
      &= \sum_{u=1}^{v}  P_{h,s,a}(V^{\star}_{h+1}-V_{t_{u-1}, h+1}) \psi_{u,v,h}(s,a) \cr
      &= P_{h,s,a}(V^{\star}_{h+1}-V_{t_{v-1}, h+1}) \psi_{v,v,h}(s,a) + \sum_{u=1}^{v-1}  P_{h,s,a}(V^{\star}_{h+1}-V_{t_{u-1}, h+1}) \psi_{u,v,h}(s,a) \cr      &= P_{h,s,a}(V^{\star}_{h+1}-V_{t_{v-1}, h+1}) \psi_{v,v,h}(s,a) + (1-\psi_{v,v,h}(s,a)) \sum_{u=1}^{v-1}  P_{h,s,a}(V^{\star}_{h+1}-V_{t_{u-1}, h+1}) \psi_{u,v-1,h}(s,a) \cr
      &= P_{h,s,a}(V^{\star}_{h+1}-V_{t_{v-1}, h+1}) \psi_{v,v,h}(s,a) + (1-\psi_{v,v,h}(s,a)) G_{v-1,h}(s,a).
    \end{align}
  
  \paragraph{Proof of \eqref{eq:Gapprox-property-Gbound}.}
  First, applying  \eqref{eq:V_lower} in Lemma~\ref{lemma:pessimistic_value} gives $  G_{v,h}(s,a) \geq 0$. Then we focus on deriving the upper bound $G_{v,h}(s,a)$. Towards this, we observe that
      \begin{align} \label{eq:G-bound}
       G_{v,h}(s,a)
        &= \sum_{u=1}^{v}  P_{h,s,a}(V^{\star}_{h+1}-V_{t_{u-1}, h+1}) \psi_{u,v,h}(s,a) \cr
        &\le   P_{h,s,a}(V^{\star}_{h+1}-V_{0, h+1}) \sum_{u=1}^{v} \psi_{u,v,h}(s,a) \cr
        &\le H \sum_{u=1}^{v}\psi_{u,v,h}(s,a) \notag \\
        & = H \sum_{u=1}^{v} \sum_{m=1}^M \left (\sum_{i \in l_{t_u,h}^m(s,a)} \omega_{i,t_v,h}^m(s,a)  \right ) \le H,
      \end{align}
      where the first and second inequalities hold by the fact 
     $P_{h,s,a}(V^{\star}_{h+1}-V_{t_x, h+1}) \le P_{h,s,a}(V^{\star}_{h+1}-V_{0, h+1})  \le H$ for any $x \in [\nsyn(K)]$ (see the monotonicity of the value estimates in \eqref{eq:vglobal-monotone} and the basic bound $\|V^{\star}_{h+1}\|_\infty \leq H$), the last equality arises from the definition of $\psi_{u,v,h}(s,a)$ in \eqref{eq:recursion-original}, and the last inequality follows from \eqref{eq:lr-wsum} in Lemma~\ref{lemma:lr_bound}. 

      Similarly, the same facts hold for $\widetilde{G}^{-j,m}_{v,h}(s,a)$, which can be derived in the same manner. We omit it for conciseness.
      
  \paragraph{Proof of \eqref{eq:Gapprox-property-Gapproxgap}.} 
  Consider  $v = \nsyn(j)$.
  If $v=1$, combing \eqref{eq:G-recursion} and  \eqref{eq:Gapprox-recursion} directly gives $\widetilde{G}^{-j,m}_{v,h}(s,a) = G_{v,h}(s,a)$. 
Then we turn to the case when $v>1$ and bound the term of interest in two different cases, respectively.
  \begin{itemize}
      \item When $(s_{j,h}^m,a_{j,h}^m) \neq (s,a)$. In this case, invoking the definition in \eqref{eq:definition-psi-new} gives
      \begin{align}
          \widetilde{\psi}^{-j,m}_{v,v,h}(s,a) =  \frac{ (H+1)n_{t_v,h}(s,a) }{N_{t_{v-1},h}(s,a) +  (H+1)n_{t_v,h}(s,a) } = \psi^{-j,m}_{v,v,h}(s,a),
      \end{align}
      which indicates (see the definition in \eqref{eq:Gapprox-recursion})
      \begin{align}\label{eq:diff-G-case1}
          \widetilde{G}^{-j,m}_{v,h}(s,a) = G_{v,h}(s,a) 
      \end{align}

      \item When $(s_{j,h}^m,a_{j,h}^m) = (s,a)$. In view of \eqref{eq:G-recursion} and  \eqref{eq:Gapprox-recursion}, it holds that:
      \begin{align} \label{eq:Gapproxgap-bound}
        &| \widetilde{G}^{-j,m}_{v,h}(s,a) - G_{v,h}(s,a) | \cr
        &= \left | (\widetilde{\psi}^{-j,m}_{v,v,h}(s,a)- \psi_{v,v,h}(s,a)) P_{h,s,a}(V^{\star}_{h+1}-V_{t_{v-1}, h+1})  + (\psi_{v,v,h}(s,a)-\widetilde{\psi}^{-j,m}_{v,v,h}(s,a))G_{v-1,h}(s,a) \right | \cr
        &= \left | (\psi_{v,v,h}(s,a) - \widetilde{\psi}^{-j,m}_{v,v,h}(s,a))  ( G_{v-1,h}(s,a) - P_{h,s,a}(V^{\star}_{h+1}-V_{t_{v-1}, h+1})   \right | \cr
        &\le \left | \psi_{v,v,h}(s,a) - \widetilde{\psi}^{-j,m}_{v,v,h}(s,a) \right| \max\left\{ G_{v-1,h}(s,a) ,  \|P_{h,s,a}\|_1 \left\|V^{\star}_{h+1}-V_{t_{v-1}, h+1}\right\|_\infty \right\} \cr
        &\stackrel{\mathrm{(i)}}{\le} H \left|\psi_{v,v,h}(s,a) - \widetilde{\psi}^{-j,m}_{v,v,h}(s,a)\right| \cr
        &\stackrel{\mathrm{(ii)}}{\le} \min \left \{H, \frac{2H^2  }{N_{t_v, h}(s,a)} \right \}, 
      \end{align}
      where (i) holds by \eqref{eq:Gapprox-property-Gbound}, $\|P_{h,s,a}\|_1 = 1$, and $\left\|V^{\star}_{h+1}-V_{t_{v-1}, h+1}\right\|_\infty \leq H$. Here, (ii)  can be verified by
      \begin{align} \label{eq:psiapproxgap-bound}
        0 &\stackrel{\mathrm{(iii)}}{\le}  \psi_{v,v,h}(s,a) - \widetilde{\psi}^{-j,m}_{v,v,h}(s,a) \cr
        &= \frac{(H+1) n_{t_v, h}(s,a) }{ N_{t_{v-1}, h}(s,a) + (H+1)n_{t_v, h}(s,a)} - \frac{(H+1) (n_{t_v, h}(s,a) -\one\{(s,a) = (s_{j,h}^m,a_{j,h}^m)\} ) }{ N_{t_{v-1}, h}(s,a) + (H+1) (n_{t_v, h}(s,a)-\one\{(s,a) = (s_{j,h}^m,a_{j,h}^m) \}) } \cr 
        &= \frac{(H+1) n_{t_v, h}(s,a) }{ N_{t_{v-1}, h}(s,a) + (H+1)n_{t_v, h}(s,a)} - \frac{(H+1) (n_{t_v, h}(s,a) - 1  ) }{ N_{t_{v-1}, h}(s,a) + (H+1) (n_{t_v, h}(s,a)-1  )} \cr 
        &\le \frac{(H+1)  }{N_{t_{v-1}, h}(s,a) + (H+1)n_{t_v, h}(s,a)} \cr
        &\le \min \left \{1, \frac{2H  }{N_{t_v, h}(s,a)} \right \}.
      \end{align}
      where (iii) holds by the fact that $\frac{x}{a +x}$ is monotonically increasing with $x$ when $a,x>0$.
  \end{itemize}

    \subsubsection{Proof of Lemma~\ref{lemma:e4a-aux2}} \label{proof:e4a-aux2}      
      For each $j \in [K]$, let 
      \begin{align}
        Z_{j,h}^m
        &\defn \sum_{(s,a) \in \cS \times \cA}  \left ( \one\{(s, a) = (s_{j,h}^m,a_{j,h}^m)\} -  d_h^m(s,a) \right ) \frac{d_h^{\pi^{\star}}(s, a)}{M d^{\mathsf{avg}}_h(s,a)} P_{h,s,a}(V^{\star}_{h+1}-V_{t_{\nsyn(j)-1}, h+1}).
      \end{align}
      Then, to prove Lemma~\ref{lemma:e4a-aux2}, it suffices to show
      $
        \left | \sum_{j=1}^{K} \sum_{m=1}^M  Z_{j,h}^m \right |
        \lesssim \sigma_{\mathsf{aux},2}.
        $

      Since $V_{t_{\nsyn(j)-1}, h+1}$ is fully determined by the events before the $j$-th episode, $\mexp_{j-1}[Z_{j,h}^m] = 0$, where we denote           $$\mexp_{j-1} [\cdot]= \mexp[\cdot| \{ (s_{i,h}^{m'},a_{i,h}^{m'}), ~ V^{m'}_{i, h+1} \}_{i<j, m' \in [M]} ].$$
      Thus,
      we can apply the Freedman's inequality as follows:
      \begin{align}
        \left | \sum_{j=1}^{K} \sum_{m=1}^M  Z_{j,h}^m \right |
        &\le \sqrt{ 8 W \log{\frac{2H}{\delta}} } + \frac{8}{3} B \log{\frac{2H}{\delta}}
          \lesssim \sqrt{\frac{H^2 K S \clipavgC  }{M  }} + \frac{H S \clipavgC}{M}
      \end{align}
      using the following properties:
        \begin{align}
          |Z_{j,h}^m|
          &\le \frac{2 \clipavgC H }{M}  \left(\sum_{s \in \cS} (1+ d_h^{\pi^{\star}}(s, \pi^{\star}(s)) S)   \right)   
            \le \frac{4 H S \clipavgC}{M}  =: B\\
          \sum_{j=1}^{K} \sum_{m=1}^M \mexp_{j-1} [(Z_{j,h}^m)^2]
          &\le \sum_{j=1}^{K} \sum_{m=1}^M  \mexp_{(s_{j,h}^m,a_{j,h}^m) \sim d_{h}^m} \left [ \left ( \frac{d_h^{\pi^{\star}}(s_{j,h}^m,a_{j,h}^m) }{M d^{\mathsf{avg}}_h(s_{j,h}^m,a_{j,h}^m)} P_{h,s,a}(V^{\star}_{h+1}-V_{t_{\nsyn(j)-1}, h+1}) \right)^2 \right]\cr
      &\le H^2 \sum_{j=1}^{K} \sum_{m=1}^M \sum_{s \in \cS} d_h^m(s,\pi^{\star}(s))
        \left (\frac{d_h^{\pi^{\star}}(s,\pi^{\star}(s))}{M d^{\mathsf{avg}}_h(s,\pi^{\star}(s))} \right )^2 \cr
      &\le \frac{H^2 \clipavgC }{M} \sum_{s \in \cS} \sum_{j=1}^{K} \left (\frac{d_h^{\pi^{\star}}(s,\pi^{\star}(s))}{M d^{\mathsf{avg}}_h(s,\pi^{\star}(s))}  \right ) (1+ d_h^{\pi^{\star}}(s, \pi^{\star}(s)) S)  \sum_{m=1}^M  d_h^m(s,\pi^{\star}(s)) \cr 
      &= \frac{H^2 \clipavgC}{M} \sum_{s \in \cS} \sum_{j=1}^{K} d_h^{\pi^{\star}}(s,\pi^{\star}(s)) (1+ d_h^{\pi^{\star}}(s, \pi^{\star}(s)) S)    \cr 
      &= \frac{2 H^2 K S \clipavgC }{M} =:W,    
        \end{align}
        which follows from that fact $0 \le \|V^{\star}_{h+1}-V_{t_{\nsyn(j)-1}, h+1} \|_{\infty}\le H$ and $\frac{d_h^{\pi^{\star}}(s, \pi^{\star}(s))}{\min\{d_h^{\pi^{\star}}(s, \pi^{\star}(s)), 1/S\}} \le 1+ d_h^{\pi^{\star}}(s, \pi^{\star}(s)) S$.

\subsection{Proof of Corollary~\ref{cor:fedq-comm}} 
Note that if $T \asymp \frac{ H^7 S \clipavgC}{M \varepsilon^2}$, it always holds that
\begin{align}
    MT \gtrsim H^5 S \clipavgC ~~\text{and}~~ H \le \sqrt{\frac{H S\clipavgC T}{M}},
\end{align}
as long as $\varepsilon \le H$ and $\varepsilon \le \frac{H^3 S \clipavgC}{M}$.
Now, we obtain the number of communication rounds of the specified schedules, periodic and exponential synchronization.

  \paragraph{Periodic synchronization.}
  Consider $\tau \asymp \sqrt{\frac{H S\clipavgC T}{M}}$.
  Then, since $MT \gtrsim H S \clipavgC$, the value gap is bounded as
  \begin{align}
    V_{1}^{\star}(\rho) - V_{1}^{\hat{\pi}}(\rho)
    \lesssim \frac{H^4 S \clipavgC }{MT} 
    + \sqrt{\frac{H^7 S \clipavgC }{MT}}
    + \frac{H^3}{T} \sqrt{\frac{H S\clipavgC T}{M}}
    \lesssim \sqrt{\frac{H^7 S \clipavgC }{MT}}.
  \end{align}
  In this case, the number of synchronizations $\nsyn(K) = |\synset_{\mathsf{period}}(K, \tau)|$ is
  $$\nsyn(K)  =  \Big \lceil \frac{K}{\tau}  \Big \rceil  \lesssim \sqrt{\frac{MK}{H^2S\clipavgC }} \asymp \sqrt{\frac{MT}{H^3 S\clipavgC }} \asymp \frac{H^2}{\varepsilon}.$$

  \paragraph{Exponential synchronization.}
  Using the fact that $MT \gtrsim H S \clipavgC$ and $\tau_1 = H \le \sqrt{\frac{H S\clipavgC T}{M}}$ when $\varepsilon \le \frac{H^3 S \clipavgC}{M}$, the value gap is bounded as
  \begin{align}
    V_{1}^{\star}(\rho) - V_{1}^{\hat{\pi}}(\rho)
    \lesssim \frac{H^4 S \clipavgC }{MT} 
    + \sqrt{\frac{H^7 S \clipavgC }{MT}}
    + \frac{H^3}{T} \sqrt{\frac{H S\clipavgC T}{M}}
    \lesssim \sqrt{\frac{H^7 S \clipavgC }{MT}}.
  \end{align}
  To continue, note that if $\gamma = \frac{2}{H}$ and $\tau_1 = H$, for any $u \ge 1$, $\tau_u$  is bounded as
  $$\Big(1+\frac{1}{H} \Big)^{u-1} H \le \tau_u \le \Big(1+\frac{2}{H} \Big)^{u-1} H, $$
  since $$\Big(1+\frac{1}{H} \Big) \tau_{i} \le \Big(1+\frac{2}{H} \Big) \tau_{i} -1 \le \tau_{i+1} = \left\lfloor \Big(1+\frac{2}{H} \Big) \tau_{i} \right\rfloor \le \Big(1+\frac{2}{H} \Big) \tau_{i}$$
  given the fact that $\tau_i \ge H$ for any $i\ge 1$.
    Then, considering the minimum number of synchronizations $\nsyn(K) = |\synset_{\mathsf{exp}}(K,\gamma)| $ satisfying 
    $$\sum_{u=1}^{\nsyn(K)} \tau_{u} \ge H \sum_{u=1}^{\nsyn(K)} \Big(1+\frac{1}{H} \Big)^{u-1} = H^2 \Big( \Big(1+\frac{1}{H} \Big)^{\nsyn(K)} - 1 \Big) \ge K,$$
    we obtain
    \begin{align}
        \nsyn(K) =  \left \lceil \frac{\log{(\frac{K}{H^2} + 1)}}{\log{( 1+\frac{1}{H})}} \right \rceil \le 1+ (1+H)\log{\Big(\frac{K}{H^2} + 1 \Big)} \lesssim H
    \end{align}
    because $\frac{x}{x+1}\le \log(1+x)$ for any $x>-1$.

\end{document}